\documentclass[11pt]{article} %
\usepackage{fullpage}
\usepackage{amsthm}
\usepackage{amsmath, amssymb, natbib, graphicx, url}
\usepackage{color}
\usepackage{todonotes}
\usepackage{dsfont}
\usepackage{caption}
\usepackage{subcaption}

\usepackage{booktabs}
\usepackage{hyperref}

\usepackage[nameinlink]{cleveref}

\usepackage{authblk}

\usepackage{mathtools}

\RequirePackage{algorithm}
\RequirePackage{algorithmic}

\usepackage{custom}
\setlength\parindent{0pt}

\title{Structure Matters: Dynamic Policy Gradient}

\author[1]{Sara Klein}
\author[2]{Xiangyuan Zhang}
\author[2]{Tamer Ba{\c{s}}ar}
\author[1]{Simon Weissmann}
\author[1]{Leif D\"{o}ring}
\date{\today}

\affil[1]{\normalsize
  School of Business Informatics and Mathematics\\
  
  University of Mannheim\\ 
  
    68138 Mannheim, Germany\\

    \texttt{\{sara.klein, simon.weissmann, leif.doering\}@uni-mannheim.de}
    \vspace{5pt}
}
\affil[2]{\normalsize
  Department of Electrical and Computer Engineering, and \newline
  Coordinated Science Laboratory\\
  
  University of Illinois Urbana-Champaign\\ 
  
   Urbana, IL 61801, U.S.A.\\

  \texttt{\{xz7, basar1\}@illinois.edu}
}

\newtheorem{theorem}{Theorem}[section]

\newtheorem{corollary}[theorem]{Corollary}

\newtheorem{lemma}[theorem]{Lemma}

\newtheorem{proposition}[theorem]{Proposition}
\newtheorem{remark}[theorem]{Remark}

\newtheorem{assumption}[theorem]{Assumption}

\begin{document}

\maketitle

\begin{abstract}
\noindent
 In this work, we study $\gamma$-discounted infinite-horizon tabular Markov decision processes (MDPs) and introduce a framework called dynamic policy gradient (DynPG). The framework directly integrates dynamic programming with (any) policy gradient method, explicitly leveraging the Markovian property of the environment. DynPG dynamically adjusts the problem horizon during training, decomposing the original infinite-horizon MDP into a sequence of contextual bandit problems. By iteratively solving these contextual bandits, DynPG converges to the stationary optimal policy of the infinite-horizon MDP. To demonstrate the power of DynPG, we establish its non-asymptotic global convergence rate under the tabular softmax  parametrization, focusing on the dependencies on salient but essential parameters of the MDP. By combining classical arguments from dynamic programming with more recent convergence arguments of policy gradient schemes, we prove that softmax DynPG scales polynomially in the effective horizon $(1-\gamma)^{-1}$. Our findings contrast recent exponential lower bound examples for vanilla policy gradient. 
\end{abstract}

{\bf Keywords:} reinforcement learning, policy gradient, dynamic programming.

\section{Introduction}\label{sec:introduction}
\subsection{Motivation}
The overarching goal in reinforcement learning (RL) is to train an agent that interacts with an unknown environment. This is mathematically modeled through a Markov decision process (MDP), where the agent actively learns a policy that, given a state, executes actions and receives rewards in return. The objective is to find a policy that maximizes the expected discounted reward. We categorize RL algorithms into two groups. 
Rather than the conventional separation between value-based and policy-based algorithms, we distinguish between algorithms that leverage the model's structure and those that do not. The first class utilizes the dynamic programming (DP) principle, such as value iteration, policy iteration, or Q-learning. Those algorithms optimize essentially single-period problems, where a single reward feedback is used in the update procedure and the future payoff is estimated by evaluation. The second class considers the entire multi-period problem at once and solves it using classical optimization, as in policy gradient (PG) methods \citep{Williams1992, Sutton1999, Konda1999, Kakade2001}. 
Over time, hybrid approaches integrating both methods, such as actor-critic (AC) methods, have consistently demonstrated superior performance in practical applications even though dynamic programming is used rather indirectly. In AC, the critic suggests a baseline, estimating either the value or action-value function. Dynamic programming becomes essential during the critic update process, effectively diminishing variance in gradient estimation. The essence of employing dynamic programming lies in the fact that parameter updates are not solely reliant on sampling new trajectories, new information is bootstrapped. Efforts of a broad research community have led to tremendous success of AC type algorithms (such as natural policy gradient (NPG), trust region policy optimization (TRPO), proximal policy optimization (PPO)) but these have not yet been backed by a sufficient theoretical understanding. This paper contributes to the basic theory of PG methods by studying the following questions:

\tikz[baseline]{%
\draw[thick] (0,1) -- (0,-1.0); 
\node[anchor=west] at (0,0) {%
\begin{minipage}[t]{\columnwidth-2\parindent} 
    \textbf{To what extent can the Markov property of an MDP be exploited more directly for improved convergence behavior of PG methods? 
    How much improvement is gained compared to vanilla PG?
    }
\end{minipage}
};
}
\renewcommand{\arraystretch}{1.6} 
\begin{table*}[t]
    \caption{Comparison of convergence rate under exact gradients. }\label{tab:convergence_rate}
    \vskip -0.2in
    \begin{center}
    \begin{small}
    \begin{tabular}{l l l}
        \toprule
        ALGORITHM & COMPLEXITY BOUNDS & REFERENCES \\
        \midrule
        softmax PG (upper bound)  & $O\big(\cC(\gg)  (1-\gamma)^{-6}\epsilon^{-1}\big)$ & \citep[Thm. 4]{mei2022global} \\ 
        softmax PG (lower bound)  & $|\cS|^{2^{\Omega((1-\gamma)^{-1})}} \textrm{ gradient steps for } \epsilon=0.15$ & \citep[Thm. 1]{Li2023}\\ 
        softmax DynPG (upper bound)  & $O\big( (1-\gg)^{-4} \epsilon^{-1} \log((1-\gg)^{-2} \epsilon^{-1}) \big)$ & [\textbf{ours,} Thm.~\ref{thm:sample-complexity}] \\ 
        \bottomrule
    \end{tabular}
    \end{small}
    \end{center}
    \vskip -0.1in
\end{table*}

This paper addresses both points by merging dynamic programming and policy gradient into dynamic policy gradient (DynPG) and analyzing its convergence behavior. DynPG can be seen as a hybrid RL algorithm that utilizes policy gradient to optimize a sequence of contextual bandits. In each iteration, the algorithm extends the horizon of the MDP by adding an additional epoch in the beginning and shifting trained policies to the future, as in applying the Bellman operator. A policy for the newly added epoch is trained by policy gradient where previous policies are used to determine future actions (cf., \Cref{fig:algorithm-illustation}). DynPG is not an AC method. It reduces the variance as far as possible by applying the previously trained, and therefore fixed, policies to generate a trajectory. This leads to stable estimates of the Q-values and improves convergence behavior. In this work, we first present a general error analysis which is compatible with any optimization scheme to address each contextual bandit problem such as PG, NPG or policy mirror descent \citep{Xiao2022}. We then employ vanilla PG to present an explicit complexity bound under softmax parametrization. 

\subsection{Related work} The idea of using dynamic programming in searching the optimal policy is not new and dates back to the early 2000s, having been revisited several times in recent decades. Policy search by dynamic programming (PSDP) \citep{bagnell2003,Kakade-2003-PhDThesis,scherrer14} is most closely related to DynPG and searches for optimal policies in a restricted policy class without explicitly using gradient ascent to find the optimal policy. 
There is a line of similar algorithms like approximate policy iteration (API), gradient temporal difference, policy dynamic programming and numerous other variations \citep{bertsekas1996neuro,Sutton2008,sutton2009fast,azar2012dynamic, kakade2002approximately,scherrer2015approximate}. In  these works, however, PG was not used as policy optimization step. 
In \citet{klein2024beyond}, a similar concept also called dynamic policy gradient was employed to tackle finite-time MDPs without discounting. Their motivation to utilize dynamic programming stemmed from seeking non-stationary optimal policies in finite-time MDPs. In contrast to their findings, our objective is to identify a stationary optimal policy and incorporate discounting into the framework.
Lastly, \citet{zhang2023revisiting, zhang2023learning, zhang2023global} proposed the receding-horizon policy gradient algorithm, which integrates dynamic programming-based receding-horizon control with policy gradient methods, to solve linear-quadratic continuous control and estimation problems.

The discount factor $\gamma\in (0,1)$ plays an essential role in the convergence behavior of RL algorithms, as they explicitly depend on the contraction property of the Bellman operator. The closer $\gamma$ to one, the slower the Bellman operator contracts. Similarly, the convergence of PG methods also heavily depends on $\gamma$ but establishing a clear dependence is generally challenging due to the non-convex optimization landscape. Early works only concern convergence to stationary points \citep{2013Pirotta, schulman2015, papini2018a}. More recently, specific to tabular settings and the softmax parametrization, in \citet{agarwal2020theory, mei2022global} upper bounds on the convergence rate to 
a global optimum was derived by utilizing a gradient domination property \citep{POLYAK1963}. As shown in \citet{mei2022global}, vanilla softmax PG has a sublinear convergence rate of $O(\epsilon^{-1})$ with respect to the error tolerance $\epsilon$. Dependencies on other salient but essential parameters of the MDP crucially affect the overall convergence behavior of PG methods, e.g., the effective horizon $(1-\gamma)^{-1}$. Notably, \citet{Li2023} constructed a counterexample such that vanilla PG could take an exponential time (with respect to $(1-\gamma)^{-1}$) to converge. Although the optimal solution can be reached in just $|\cS|$ steps of exact value iteration in the counter example, vanilla softmax PG is very inefficient by not leveraging the inherent structure of the MDP. In contrast, for softmax DynPG, we establish an explicit dependency on the discount factor. The convergence rate under exact gradients scales with $(1-\gamma)^{-4}$ up to logarithmic factors; thus, we delineate the unknown model-dependent constant $\cC(\gamma)$ in the rate of vanilla softmax PG \citep{mei2022global}. As a result, DynPG can efficiently address the counterexample in \citet{Li2023}; see \Cref{subsec:lower-bound-example}. \Cref{tab:convergence_rate} summarizes the complexity bounds for softmax PG and softmax DynPG.

For completeness we also include a sample based analysis of DynPG and provide complexity bounds for high probability convergence. A related stochastic analyses can be found in \citet{klein2024beyond} and for entropy regularized PG in \citet{ding2022beyondEG} .

\subsection{Main contributions and outline} 
The main contribution of this article is to introduce and analyze a way to directly combine policy gradient and dynamic programming ideas. The algorithm we introduce in Section~\ref{sec:algorithm} is called DynPG (dynamic policy gradient method). The algorithm (provably) circumvents recent worst case lower bound problems for standard PG that prove impracticability of plain vanilla PG in delicate environments. In Section~\ref{sec:theory} we provide a detailed error decomposition to show the convergence behavior of DynPG in general frameworks. For the tabular softmax parametrization we prove rigorous upper bounds on the convergence rate, Section \ref{subsec:softmax-theory} for the theoretical setting of exact gradients and Section \ref{subsec:softmax-theory-stochastic} for the sample based setting. Most importantly, the $\gamma$-dependence is harmless, DynPG does not suffer from exponential dependencies in $\gamma$! A simple example is presented in Section \ref{subsec:toy-example-DynPG} to show in a sample based setting how DynPG can beat PG in environments that trick PG into so-called committal behavior. To turn our theoretical algorithm into practice a more memory efficient actor-critic variant of DynPG is presented in Section \ref{subsec:DynAC}.

\section{Preliminaries}\label{sec:preliminaries}
Let the tuple $\cM = (\cS,\cA,p,r,\gamma)$ denote an MDP, where $\cS$ represents the state space, $\cA$ represents the action space, $p: \cS \times \cA \to \Delta(\cS)$ is the transition kernel mapping each state-action pair to the probability simplex of all possible next states $\Delta(\cS)$, $r:\cS \times \cA \to \R$ is the reward function mapping each state-action pair to a reward, $\mu \in \Delta(\cS)$ denotes the initial state distribution, and $\gg\in(0,1)$ is the discount factor. We assume that the reward is bounded such that $r(s,a) \in [-R^{\ast}, R^{\ast}]$ for all $(s, a) \in \cS \times \cA$ and the state and action spaces are finite. 
Hereafter, we use capital and lowercase letters to distinguish random state and actions from deterministic ones. For $x \in \R^d$, we denote its supremum norm by $\lVert x\rVert_\infty = \max_{i=1,\dots,d} |x_i|$; we refer to \Cref{sec:notations} for complete notation conventions.

A policy $\pi: \cS \to \Delta(\cA)$ is a mapping from a state $s \in \cS$ to a distribution over the action space, i.e., $\pi(\cdot|s) \in \Delta(\cA)$. The set of all policies is denoted by $\Pi$. We refer to $h \in \mathbb N$ as the deterministic time-horizon, where the case $h=\infty$ corresponds to the standard infinite-horizon MDP. Non-stationary policies with time-horizon $h$ are denoted by $\bfpi_{h}:=\{\pi_{h-1},\dots, \pi_0\} \in \Pi^{h}$. Let $V_0 \equiv 0$ and, for all $h > 0$, define the $h$-step value function of policies $\bfpi_{h}\in \Pi^h$ as
\begin{equation}\label{eq:def-value-functions}
\begin{split}
    V_h^{\bfpi_{h}}(\mu) &= V_h^{\{\pi_{h-1},\dots,\pi_0\}}(\mu)
    = \underset{\substack{S_0 \sim \mu, A_t \sim \pi_{h-t-1}(\cdot|S_t)\\S_{t+1} \sim p(\cdot|S_t,A_t)}}{\E}\Big[\sum_{t=0}^{h-1} \gg^t r(S_t,A_t)\Big].
\end{split}
\end{equation}
When $\mu$ is a Dirac measure at $s$ we let $V_h^{\bfpi_h}(s):=V_h^{\bfpi_h}(\delta_s)$.  Subsequently, we interpret a function $V:\cS \to \R$ as a vector in $\R^{|\cS|}$. 
Additionally, we use $V_h^\pi(\mu) := V_h^{\{\pi,\dots,\pi\}}(\mu)$ to denote the value function of the stationary policy $\pi$ being applied $h$ times in a row. For $h=\infty$ the resulting infinite-horizon discounted MDP admits a stationary optimal policy \citep{puterman2005markov}. We define $V_{\infty}^{\ast}(\mu):=\sup_{\pi\in\Pi} V_\infty^{\pi}(\mu)$ and use $\pi^*$ to denote a stationary policy that achieves $V_{\infty}^{\pi^\ast}(\mu) = V_{\infty}^{\ast}(\mu)$. In contrast, when $h$ is finite, the finite-horizon MDP optimization problem needs non-stationary optimal policies; thus, we define $V_h^\ast(\mu) := \sup_{\bfpi_h\in \Pi^h} V_h^{\bfpi_h}(\mu)$ and use $\bfpi_h^\ast := \{\pi_{h-1}^\ast, \dots, \pi_0^\ast\} \in \Pi^h$ to denote a sequence of policies that achieves $V_h^{\bfpi_h^*}(\mu) = V_h^\ast(\mu)$.

For any function $V\in\R^{|\cS|}$ 
and stationary policy $\pi$, the Bellman expectation operator $T^{\pi}: \R^{|\cS|} \to \R^{|\cS|}$ is defined for every $s\in\cS$ by
\begin{align*}
    T^{\pi} (V) (s)
    = \sum_{a\in\cA} \pi(a|s) \Big(r(s,a) + \gamma \sum_{s^\prime\in\cS} p(s^\prime|s,a)V(s^\prime) \Big).
\end{align*}
The Bellman's optimality operator $T^{\ast}: \R^{|\cS|} \to \R^{|\cS|}$ is then for every $s\in\cS$ given by
\begin{align*}
    T^{\ast} (V) (s)
    = \max_{a\in\cA}  \Big\{ r(s,a) + \gamma\sum_{s^\prime\in\cS} p(s^\prime|s,a)V(s^\prime) \Big\}.
\end{align*}
The operators $T^{\pi}$ and $T^\ast$ are $\gamma$-contracting with unique fixed points denoted by $V_\infty^{\pi}$ and $V_\infty^\ast$, respectively. 
In addition, the optimal $h$-step value functions $V^\ast_h$ can be obtained by iteratively applying $T^\ast$ 
and, as $h \to \infty$, $V^\ast_h$ converges to $V^\ast_{\infty}$ (c.f. \Cref{lem:distVast-VHast}).


\paragraph{Vanilla Policy Gradient.} Policy gradient is a policy search method for which a family of differential parameterized policies $(\pi_\gt)_{\gt\in\R^d}$ is fixed, say $\pi_{\theta}(\cdot| s) := \pi(\cdot| s, \theta)$, and the optimal policy is searched using gradient ascent:
\begin{align}\label{eq:vanilla-PG}
    \gt_{n+1} = \gt_n + \eta \cdot \nabla_{\gt_n} V_{\infty}^{\pi_{\gt_n}}(\mu)\,.
\end{align}
Here, $\eta >0$ is the 
step size. 
Analyzing the convergence of vanilla PG is generally non-trivial due to the non-convexity of $V_{\infty}^{\pi_\gt}$ and the approximation error of the policy approximation family. Specialized to tabular MDPs, \citet{agarwal2020theory} has shown global convergence of vanilla PG under the softmax parametrization, and \citet{mei2022global} established a non-asymptotic convergence rate. It should be emphasized that convergence is slow and constants can easily dominate  convergence rates.
\paragraph{Tabular Softmax Policy.}
Policy gradient can be applied in the tabular setting (i.e. considering separately each state-action pair). Although the tabular setting is not used in practical applications, it is the most tractable setting for a complete mathematical analysis and sheds light on general principles. We introduce the logit function $\theta:\cS \times \cA \to \R$ and the softmax policy parametrized by $\theta \in \R^{|\cS||\cA|}$ as  
\begin{align}\label{eq:softmax-policy}
    \pi_\gt(a|s) = \frac{\exp({\gt(s,a)})}{\sum_{a^\prime\in\cA} \exp({\gt(s,a^\prime)})}, \quad s\in\cS, a\in\cA.
\end{align}
The tabular softmax policy can represent any deterministic and therefore optimal policy, whereas other parametrizations such as neural networks may induce an approximation error. This error needs to be considered in the convergence analysis.

\paragraph{Policy Search Framework.}
Inspired by dynamic programming, \citet{bagnell2003, Kakade-2003-PhDThesis} proposed Policy Search by Dynamic Programming to search policies $\{\pi_{H-1}, \cdots, \pi_0\} \in \tilde\Pi^H$, where $H$ is the problem horizon given as an input to the algorithm and $\tilde\Pi \subseteq \Pi$ is the set of all deterministic policies. Formally, given $H$ and $\tilde\Pi$, PSDP computes
\begin{equation}\label{eq:PSDP-algo}
\begin{split}
    \tilde\pi^*_h &= \textrm{argmax}_{\pi_h\in\tilde\Pi}~ V_{h+1}^{\{\pi_h, \tilde\bfpi^*_{h}\}}(\mu) 
    = \textrm{argmax}_{\pi_h\in\tilde\Pi}~ V_{h+1}^{\{\pi_h, \tilde\pi^*_{h-1}, \cdots, \tilde\pi^*_0\}}(\mu),
\end{split}
\end{equation}
for $h=0, \dots, H-1$. PSDP solves an $(h+1)$-step MDP initialized at an $S_0 \sim \mu$ in the iteration indexed by $h$. Specifically, it finds the optimal deterministic policy for selecting the first action $A_0$, denoted by $\tilde\pi^*_h$, but then all the remaining actions in the episode $\{A_1, \cdots, A_{h-1}\}$ are selected according to the sequence of deterministic policies $\tilde\bfpi_h^*:=\{\tilde\pi^*_{h-1}, \cdots, \tilde\pi^*_0\}$. Note that for all $h$, $\tilde\bfpi_h^*$ have been computed in previous iterations and are kept fixed. Compared to vanilla PG, PDSP exploits the Markovian property of the environment, rendering each iteration into solving a contextual bandit problem. 
While considering deterministic policies may suffice in tabular MDPs, 
no explicit computational procedures have been provided in \citet{bagnell2003, Kakade-2003-PhDThesis} to solve the optimization problem in \eqref{eq:PSDP-algo}. Further, determining the policy to be applied post-training is not immediately evident. \citet{scherrer14} proposes to apply the non-stationary policy  $\tilde\bfpi_{H}^\ast$ in a loop. Still, to solve the discussed infinite horizon MDP, a stationary policy is sufficient. We provide answers to these issues using DynPG.

\section{Dynamic Policy Gradient}\label{sec:algorithm}

\subsection{The DynPG Algorithm}

DynPG starts by solving a one-step contextual bandit problem and then incrementally extends the problem horizon by one in each iteration, which is done by appending the new decision epoch in front of the current problem horizon (see the illustration in Figure \ref{fig:algorithm-illustation}). 

\begin{figure}[ht]
\begin{center}
    \includegraphics[width=0.49\textwidth]{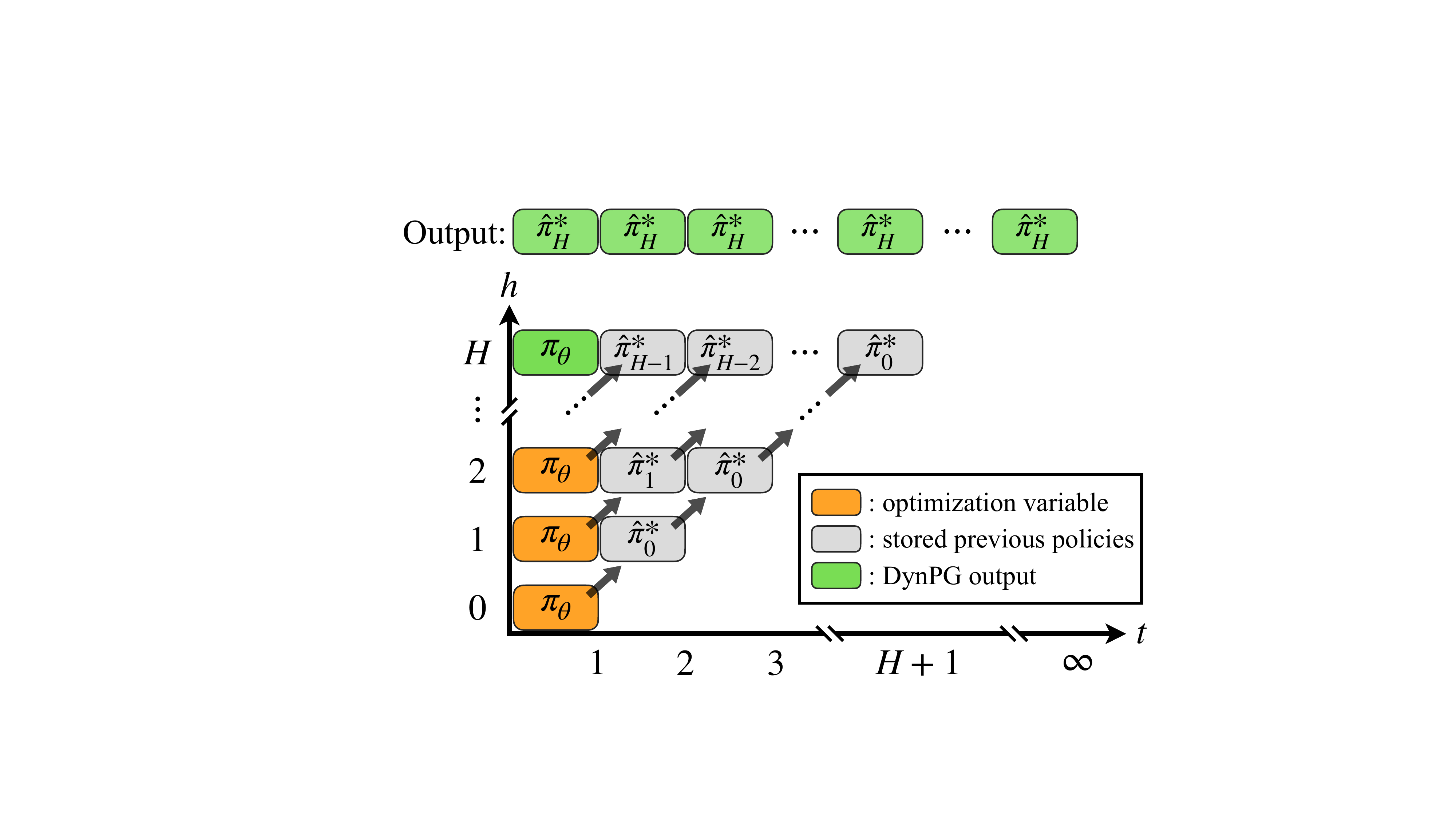}
\vspace{-0.5em}
\caption{DynPG solves a sequence of contextual bandit problems, iteratively storing the convergent policies to memory and applying them accordingly as fixed policies in later iterations. }
\label{fig:algorithm-illustation}
\end{center}
\vskip -0.2in
\end{figure}

In each iteration, the parametrized (stochastic) policy responsible for sampling action $A_0$ from the newly-added epoch is optimized using gradient ascent. In subsequent time steps, DynPG applies the previous convergent policies to sample actions. Upon convergence of the current iteration, the policy is stored in a designated memory location, which will be utilized in future iterations. We define $\hat\bfpi_h^\ast := \{\hat{\pi}^*_{h-1}, \cdots, \hat{\pi}^*_0\} \in \Pi^h$ with convention $\hat\bfpi_0^\ast = \emptyset$ to denote the learned policies. DynPG returns the first policy in $\Lambda$ when certain user-defined convergence criteria have been met. For example, this is the case if the relative value improvements between two consecutive DynPG iterations are small i.e., $\lVert V_{h+1}^{\hat\bfpi_{h+1}^\ast} - V_{h}^{\hat\bfpi_{h}^\ast}\rVert_\infty \leq \epsilon$.

\begin{algorithm}[ht]
\caption{DynPG}\label{alg:dynamicPG-for-discounting-forward}
\begin{algorithmic}
 \STATE {\bfseries Input:} Initial state distribution $\mu$ and policy class $(\pi_\gt)_{\gt\in\R^d}$.
 
 \STATE {\bfseries Result:} Approximation of $\pi^\ast$, denoted as $\hat{\pi}^\ast$.
 \STATE Set $h\leftarrow 0$ and initialize $\Lambda \leftarrow []$\;
  \WHILE{Convergence criterion not met}
        \STATE Initialize $\theta_0$ (e.g., $\gt_0 \leftarrow 0$)\;
        \STATE Design $\eta_h$ and $ N_h$ (cf., Remark \ref{rem:on-N-and-eta})\; 
        \FOR{$n=0,\dots, N_h -1$}
            \STATE Obtain $G \approx \nabla_{\gt_{n}} V_{h+1}^{\{\pi_{\gt}, \Lambda\}}(\mu)$ (e.g. by sampling)\; 
            \STATE Update $\gt_{n+1} \leftarrow \gt_n + \eta_h G$\;
        \ENDFOR
        \STATE Set $\hat{\pi}_h^\ast \hspace{-0.1em}\leftarrow\hspace{-0.1em}\; \pi_{\gt_{N_h}}\hspace{-0.2em}$ \\
        \STATE Attach $\hat{\pi}_h^\ast$ at the beginning of $\Lambda$\;
        \STATE Set $h \leftarrow h+1$\;
  \ENDWHILE
  \STATE Return $\hat\pi^\ast \leftarrow \hat{\pi}_{h-1}^\ast$ (the first element of $\Lambda$)\;
\end{algorithmic}
\end{algorithm}

Hereafter, we let $V_0 \equiv 0$; the construction of DynPG implies
\begin{equation}\label{eq:Bellman-iteration-dynPG}
    \begin{split}
    V_{h+1}^{\hat\bfpi_{h+1}^\ast}(s) &= T^{\hat\pi_h^\ast} (V_{h}^{\hat\bfpi_{h}^\ast}) (s)
    = \cdots 
    = (T^{\hat\pi_h^\ast} \circ \cdots \circ T^{\hat\pi_0^\ast}) (V_{0}^{\hat\bfpi_0^\ast} )(s), \quad s\in\cS,
    \end{split}
\end{equation}
which will be employed in the analyses of the convergence rate.

Compared to vanilla PG in \eqref{eq:vanilla-PG}, DynPG dynamically adjusts the episode horizon during the execution of the algorithm. This results in two notable advantages. Firstly, we observe that DynPG effectively reduces the variance in gradient estimation through the utilization of non-changing future policies. 
In contrast, the estimation of the gradient in vanilla PG involves assessing Q-values based on the stationary policy $\pi_\gt$, which changes during training. Secondly, DynPG requires significantly fewer samples, a topic we  discuss in detail later in this section. Moreover, each DynPG iteration has more benign optimization landscapes, as they are essentially contextual bandit problems. Specifically, DynPG has a simpler PG theorem (cf. \Cref{thm:policy-gradient-dynPG}) and under softmax parametrization a smaller smoothness constant, enabling the selection of a more aggressive step size, $\eta_h$, to enhance convergence. 

In contrast to PSDP, we specify the set of policies $\tilde\Pi$ to be a parameterized class of differentiable policies and a computational procedure, namely PG, for the inner loop. It is straight forward to incorporate function approximation (e.g. using neural networks) to the DynPG algorithm. This preserves the model-free optimization characteristic of gradient methods, while still exploiting the underlying structure of MDPs by Dynamic Programming. DynPG can be further modified with additional enhancements such as regularization, natural policy gradient or policy mirror descent in ever optimization epoch. 
In addition, we show that applying the policy of the last training epoch as stationary policy is sufficient. This is a non-trivial result and our analysis in Section \ref{sec:theory} reveals that in general it requires additional training to obtain a good stationary policy compared to using the non-stationary policy  $\hat\bfpi_H^\ast$ in a loop as proposed in \citet{scherrer14}. In the tabular softmax case we specify these additional computational cost explicitly. 

\paragraph{On the total sample complexity.}
For each gradient step in standard policy gradient, one must run the MDP until termination or up to a stochastic horizon $H \sim \textrm{Geom}(1-\gg)$ to obtain an unbiased sample of the gradient \citep{zhang-koppel2020}. 
DynPG, on the other hand, only requires $h$ interactions with the environment to sample an unbiased estimator of the gradient in epoch $h$. 
Thus, comparing other policy gradient methods to DynPG solely based on the number of gradient steps is inadequate. 
Instead, for fairness, the number of samples (interactions with the environment) should be compared in practical implementations. For DynPG the total sample complexity is given by $\sum_{h=0}^{H-1} (h+1) N_h$. This results in a trade-off between increasing samples required for estimation and more accurate training to obtain convergence (cf. \Cref{sec:error_decomposition}).

\subsection{A Numerical Example}\label{subsec:toy-example-DynPG}

To demonstrate DynPG's effectiveness we present a numerical study of a canonical example where vanilla PG suffers from committal behavior. 
A detailed description of the MDP and the experimental setup can be found in \Cref{sec:details-experimental-setup}. We note that the theoretical analysis for softmax DynPG in \Cref{subsec:softmax-theory} demonstrates its practical usefulness in fine-tuning the learning rates.


\begin{figure}[t]
    \centering
    \includegraphics[width=0.48\textwidth]{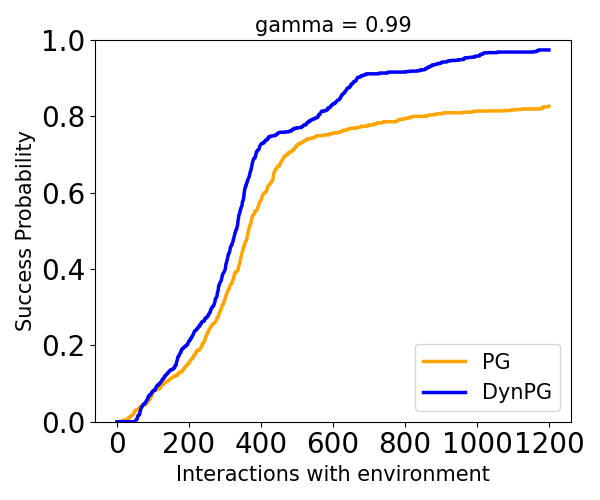}
    \caption{Success probability of achieving the sub-optimality gap of $\epsilon = 0.01$ in the overall error. }
    \label{fig:plot-sutton-example-099}
\end{figure} 

In \Cref{fig:plot-sutton-example-099}, we plotted the success probability to achieve an overall error in the value function of less than $\epsilon =0.01$ from the optimal, i.e. $V_\infty^\ast - V_\infty^{\hat\pi_H^\ast}\leq 0.01$. The success probability is calculated by executing each algorithm $2000$ times with randomly sampled initial states and the x-axis is the number of interactions with the environment used by both sample-based vanilla PG and sample-based DynPG. 
We observe that vanilla PG struggles solving this MDP, suffering from the high reward variance in certain states. Vanilla PG tends to concentrate on large rewards samples while DynPG circumvents this committal behavior by more accurate estimation of the future Q-values. It can be observed that the performance of DynPG and vanilla PG are similar for the first 400 interactions with the environment. However, vanilla PG fails to converge to the optimal policy and can only reach the success probability of around $0.8$ under the given training budget. 

As usually for variants of vanilla PG it can be expected that DynPG will not always be more efficient than the base algorithm. While vanilla PG is stuck in committal behavior in some environments (such as the one  chosen for the numerical example) there are other (simpler) environments in which vanilla PG performs well. In that case our more complex algorithm will not beat vanilla PG or other equally well performing variants.

\section{Convergence Analysis}\label{sec:theory}
The numerical example suggests improved convergence properties for DynPG. In this section we provide estimates for exact gradient and sample based algorithms. Specifically, Section \ref{sec:error_decomposition} introduces four different layers of approximations that the analysis of DynPG employs in solving the infinite-horizon discounted MDP. 
Based on this we present the asymptotic global convergence of DynPG under the assumption of small enough optimization errors and rich enough parametrization class. 
In Section~\ref{subsec:softmax-theory}, we establish non-asymptotic global convergence rates of DynPG for the softmax parametrization under the assumption that exact gradients are available. We provide suitable choices of $H$, $N_h$ and $\eta_h$ such that DynPG achieves an error in the value functions of at most $\epsilon$. In Section~\ref{subsec:softmax-theory-stochastic} we extend the previous analysis to gradient estimates obtained from MDP samples. We defer the proofs of all results to \Cref{sec:proof-theory}.


\subsection{Error Decomposition and General Convergence}\label{sec:error_decomposition}

The analysis of DynPG employs four different layers of approximations, including 1) adopting parametrizations incapable of modeling the optimal policy perfectly (approximation error), 2) truncating the infinite time horizon to a finite horizon (truncation error), 3) utilizing a finite number of gradient updates to approximately solve each optimization problem (accumulated optimization error), and 4) applying the first policy of $\hat{\bfpi}_h$ as a stationary policy in solving finite-horizon MDP (stationary policy error). 
We formally quantify these errors in the following lemma where the terms on the right-hand side of \eqref{eq:overall-error} correspond to the aforementioned errors, respectively.

\begin{proposition}\label{prop:error-decomposition}
The overall error of DynPG after $H$ iterations can be decomposed as follows
\begin{equation}\label{eq:overall-error}
\begin{split}
    &\lVert V_\infty^\ast - V_\infty^{\hat\pi_H^\ast} \rVert_\infty\\
    &\leq \Big\lVert V_\infty^\ast - \sup_\gt V_\infty^{\pi_\gt} \Big\rVert_\infty 
    +\Big\lVert \sup_\gt V_\infty^{\pi_\gt} - \sup_{\gt_{0},\dots,\gt_{H-1}} V_H^{\{\pi_{\gt_{H-1}},\dots \pi_{\gt_0}\}} \Big\rVert_\infty \\
    &\quad +\Big\lVert \sup_{\gt_{0},\dots,\gt_{H-1}} V_H^{\{\pi_{\gt_{H-1}},\dots \pi_{\gt_0}\}} - V_H^{\hat\bfpi_H^\ast} \Big\rVert_\infty
    +\lVert V_H^{\hat\bfpi_H^\ast} - V_\infty^{\hat\pi_H^\ast} \rVert_\infty \\
    &\leq \Big\lVert V_\infty^\ast - \sup_\gt V_\infty^{\pi_\gt} \Big\rVert_\infty  
    + \frac{\gg^H R^\ast}{1-\gg} \\
    & \quad+ \sum_{h=0}^{H-1} \gg^{H-h-1} \Big\lVert \sup_\gt T^{\pi_\gt} (V_{h}^{\hat\bfpi_h^\ast}) -  V_{h+1}^{\hat\bfpi_{h+1}^\ast} \Big\rVert_\infty
    + \frac{1}{1-\gg}  \lVert  V_{H+1}^{\hat\bfpi_{H+1}^\ast} - V_H^{\hat\bfpi_H^\ast}  \rVert_\infty.      
\end{split}
\end{equation}   
\end{proposition}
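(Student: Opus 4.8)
The plan is to control the nonnegative quantity $V_\infty^\ast - V_\infty^{\hat\pi_H^\ast}$ (it is nonnegative because $V_\infty^\ast$ is the optimal value) by inserting the three comparison values $\sup_\gt V_\infty^{\pi_\gt}$, then $\sup_{\gt_{0},\dots,\gt_{H-1}} V_H^{\{\pi_{\gt_{H-1}},\dots,\pi_{\gt_0}\}}$, and finally $V_H^{\hat\bfpi_H^\ast}$, and then estimating the four resulting signed differences one at a time. The first inequality is just the triangle inequality in $\lVert\cdot\rVert_\infty$; the content is the second inequality, where the approximation term is kept as is and the remaining three (truncation, accumulated optimization, stationary) are bounded separately. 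I would handle them in that order, since each reuses the $\gamma$-contraction of the Bellman operators recorded in the preliminaries.

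For the truncation term I would exploit that, for any fixed $\gt$, the stationary policy $\pi_\gt$ run over the infinite horizon and over $H$ steps differ only by the discounted tail, $V_\infty^{\pi_\gt}(s) - V_H^{\pi_\gt}(s) = \E\big[\sum_{t\ge H}\gamma^t r(S_t,A_t)\big]$, which is at most $\gamma^H R^\ast/(1-\gamma)$ in absolute value. Since a stationary $\pi_\gt$ is a special non-stationary sequence, $V_H^{\pi_\gt}\le \sup_{\gt_0,\dots,\gt_{H-1}}V_H^{\{\pi_{\gt_{H-1}},\dots,\pi_{\gt_0}\}}$; taking the supremum over $\gt$ then yields $\sup_\gt V_\infty^{\pi_\gt} - \sup_{\gt_0,\dots,\gt_{H-1}}V_H^{\{\cdot\}} \le \gamma^H R^\ast/(1-\gamma)$, which is exactly the one-sided bound the telescope needs. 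Equivalently, under a parametrization rich enough to represent optimal policies the two suprema become $V_\infty^\ast$ and $V_H^\ast$, and the estimate is precisely \Cref{lem:distVast-VHast}.

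The optimization term is the crux. I would introduce the within-class optimality operator $\hat T V := \sup_\gt T^{\pi_\gt}V$ (the one-step improvement each DynPG iteration tries to realize) and observe that, being a supremum of the $\gamma$-contractions $T^{\pi_\gt}$, it is itself $\gamma$-contracting in $\lVert\cdot\rVert_\infty$. Monotonicity of the $T^{\pi_\gt}$ then gives the componentwise sandwich $V_H^{\hat\bfpi_H^\ast}\le \sup_{\gt_0,\dots,\gt_{H-1}}V_H^{\{\cdot\}} \le \hat T^H 0$, so it suffices to bound $\hat T^H 0 - V_H^{\hat\bfpi_H^\ast}$. Using the DynPG identity \eqref{eq:Bellman-iteration-dynPG}, i.e. $V_{h+1}^{\hat\bfpi_{h+1}^\ast}=T^{\hat\pi_h^\ast}V_h^{\hat\bfpi_h^\ast}$, I would split
\begin{equation*}
\hat T^{h+1}0 - V_{h+1}^{\hat\bfpi_{h+1}^\ast} = \big(\hat T\hat T^h 0 - \hat T V_h^{\hat\bfpi_h^\ast}\big) + \big(\hat T V_h^{\hat\bfpi_h^\ast} - V_{h+1}^{\hat\bfpi_{h+1}^\ast}\big),
\end{equation*}
bound the first bracket by $\gamma\lVert \hat T^h 0 - V_h^{\hat\bfpi_h^\ast}\rVert_\infty$ via the contraction, recognize the second bracket as the per-iteration error $\sup_\gt T^{\pi_\gt}V_h^{\hat\bfpi_h^\ast}-V_{h+1}^{\hat\bfpi_{h+1}^\ast}$, and unroll the recursion to obtain the geometrically weighted sum $\sum_{h=0}^{H-1}\gamma^{H-h-1}\lVert\cdot\rVert_\infty$. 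The delicate points are justifying the $\gamma$-contraction of the nonlinear operator $\hat T$ and the monotone sandwich that lets me replace the joint supremum by the iterate $\hat T^H 0$; this is where I expect the main obstacle to be.

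Finally, for the stationary term I would use the standard fixed-point estimate for a $\gamma$-contraction. Writing $\pi:=\hat\pi_H^\ast$ and $V:=V_H^{\hat\bfpi_H^\ast}$, the vector $V_\infty^{\pi}$ is the fixed point of $T^{\pi}$ and, again by \eqref{eq:Bellman-iteration-dynPG}, $T^{\pi}V = V_{H+1}^{\hat\bfpi_{H+1}^\ast}$. Then $\lVert V - V_\infty^{\pi}\rVert_\infty \le \lVert V - T^{\pi}V\rVert_\infty + \lVert T^\pi V - V_\infty^\pi\rVert_\infty \le \lVert V - T^{\pi}V\rVert_\infty + \gamma\lVert V - V_\infty^{\pi}\rVert_\infty$ rearranges to $\lVert V - V_\infty^{\pi}\rVert_\infty \le (1-\gamma)^{-1}\lVert V_H^{\hat\bfpi_H^\ast} - V_{H+1}^{\hat\bfpi_{H+1}^\ast}\rVert_\infty$, which is the claimed stationary-policy bound. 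Summing the three estimates with the retained approximation term then reproduces \eqref{eq:overall-error}.
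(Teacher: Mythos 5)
Your decomposition and tools essentially coincide with the paper's own proof: the same four-term telescope, the same observation that $\hat T V:=\sup_\gt T^{\pi_\gt}(V)$ is $\gamma$-Lipschitz in $\lVert\cdot\rVert_\infty$ (this is exactly the paper's inequality \eqref{eq:sup-trick}), the same use of the DynPG identity \eqref{eq:Bellman-iteration-dynPG}, and the same $(1-\gg)^{-1}$ fixed-point estimate for the stationary term. Two of your variants are in fact cleaner than the paper's. For the stationary term the paper invokes \Cref{lem:control-stationary-policy}, which is proved there via an auxiliary-reward Bellman equation; your rearrangement $\lVert V-V_\infty^\pi\rVert_\infty\le \lVert V-T^\pi V\rVert_\infty+\gg\lVert V-V_\infty^\pi\rVert_\infty$ is a shorter, valid proof of the same bound. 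For the accumulated optimization error the paper runs an induction whose first step is the interchange identity $\sup_{\gt_0,\dots,\gt_H}V_{H+1}^{\{\cdot\}}=\sup_{\gt_H}T^{\pi_{\gt_H}}\big(\sup_{\gt_0,\dots,\gt_{H-1}}V_H^{\{\cdot\}}\big)$, whereas your monotone sandwich $V_H^{\hat\bfpi_H^\ast}\le\sup_{\gt_0,\dots,\gt_{H-1}}V_H^{\{\cdot\}}\le \hat T^H 0$ uses only the easy ($\le$) direction of that interchange together with the fact that DynPG's stored policies lie in the parametrized class; the recursion $D_{h+1}\le\gg D_h+e_h$ with $D_0=0$ then unrolls to exactly the stated geometric sum. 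Both the contraction of $\hat T$ and the sandwich step you flagged as "delicate" are sound.

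The one place where you fall short of the statement is the truncation term. In the proposition this term appears as a norm, $\big\lVert \sup_\gt V_\infty^{\pi_\gt}-\sup_{\gt_0,\dots,\gt_{H-1}}V_H^{\{\pi_{\gt_{H-1}},\dots,\pi_{\gt_0}\}}\big\rVert_\infty$, so the second displayed inequality requires controlling the difference in \emph{both} signs; you prove only $\sup_\gt V_\infty^{\pi_\gt}-\sup_{\gt_0,\dots,\gt_{H-1}}V_H^{\{\cdot\}}\le \gg^H R^\ast/(1-\gg)$ and explicitly discard the other direction. The reverse direction is not symmetric or free: when the finite-horizon non-stationary supremum exceeds the infinite-horizon stationary one, the paper (its ``Case 2'') compares $V_H^{\{\pi_{\gt_{H-1}},\dots,\pi_{\gt_0}\}}$ with the value of running that same finite sequence in an infinite loop, $V_\infty^{(\{\pi_{\gt_{H-1}},\dots,\pi_{\gt_0}\})_\infty}$, and again only a discounted tail survives. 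Your signed telescope — exploiting $V_\infty^\ast-V_\infty^{\hat\pi_H^\ast}\ge 0$ so that pointwise upper bounds on the four signed differences suffice — does correctly establish the outer inequality (first line $\le$ last line), which is what the paper actually uses downstream; but it does not establish the displayed intermediate inequality between the sum of the four norms and the final bound. To prove the proposition exactly as stated, add a Case-2-type argument (or an equivalent two-sided estimate) for the truncation term.
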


The error decomposition gives clear insights into algorithm design, for which it is necessary to discuss the practical implications of the four summands. 
\begin{enumerate}[left=5pt]
    \item To control the approximation error a rich enough policy parametrization is required. 
    \item To keep the truncation error small, DynPG should run at least $H \approx \frac{\log(1-\gg)}{\log(\gg)}$ rounds. 
    \item 
    The third summand shows that approximation errors in earlier iterations are discounted more than approximation errors in later iterations. To achieve optimal training efficiency, we will thus require a geometrically decreasing optimization error across the iterations of DynPG. Recall the sample complexity discussion at the end of Section \ref{sec:algorithm} to note that a trade-off between more accurate training and increasing samples required for estimating the gradient must be made to obtain the best performance of DynPG. 
    \item 
    DynPG approximates the value function by truncation at a fixed time $H$ and then replaces the optimal time-dependent policy by a stationary policy. As mentioned for PSDP, one could also apply the non-stationary policy $\hat\bfpi_h^\ast$ to approximate the value function. This would cause the fourth error term to vanish. Thus, in what follows, we distinguish between the \textbf{overall error} in \eqref{eq:overall-error} and the \textbf{value function error}: 
    \begin{equation}\label{eq:value-func-error}
        \begin{split}
            &\lVert V_\infty^\ast - V_H^{\hat\bfpi_H^\ast} \rVert_\infty \\
            &\leq 
            \big\lVert V_\infty^\ast - \sup_{\gt\in\R^d} V_\infty^{\pi_\gt} \big\rVert_\infty
            + \frac{\gg^H R^\ast}{1-\gg} 
            + \sum_{h=0}^{H-1} \gg^{H-h-1} \big\lVert   \sup_{\gt\in\R^d} T^{\pi_\gt} (V_{h}^{\hat\bfpi_{h}^\ast}) - V_{h+1}^{\hat\bfpi_{h+1}^\ast} \big\rVert_\infty.
        \end{split}
    \end{equation}

    It is important to note that the factor $(1-\gg)^{-1}$ in the stationary policy error causes an additional dependence on the effective horizon in the complexity bounds for softmax DynPG.
\end{enumerate}


For the remainder of this section, we will proceed under the assumption of zero approximation error stemming from the parametrization. This condition generally holds true when the class of policies $(\pi_\gt)$ can effectively approximate all deterministic policies, such as achieved through tabular softmax. Under this circumstance, it follows that $T^\ast = \sup_\gt T^{\pi_\gt}$ and $\lVert V_\infty^\ast - \sup_\gt V_\infty^{\pi_\gt} \rVert_\infty  =0$. 
Subsequently, we demonstrate that a certain reduction in the optimization error is adequate for achieving global convergence within the parametrized policy space.
\begin{assumption}\label{ass:error-decrease-over-time}
   The class $(\pi_\gt)$ has zero approximation error and there exists a positive sequence $\epsilon_h$ such that 
   \begin{itemize}
       \item $\sum_{t=0}^{H-1} \gg^{H-h-1} \epsilon_{h} \to 0$ for $H \to \infty$,
       \item the policies obtained by DynPG satisfy $\lVert  T^\ast (V_h^{\hat\bfpi_h^\ast} ) - V_{h+1}^{\hat\bfpi_{h+1}^\ast} \rVert_\infty \leq \epsilon_{h}$ for all $h\geq 0$.
   \end{itemize}
\end{assumption}
As an example for such a sequence one might think of $\epsilon_h= c\gg^h$ for some $c>0$.  Under this assumption we prove convergence directly from the error decomposition in \Cref{prop:error-decomposition}.
\begin{corollary}\label{cor:convergence-for-error-sequence-ass}
    Let Assumption \ref{ass:error-decrease-over-time} hold. 
    Then, Algorithm~\ref{alg:dynamicPG-for-discounting-forward} generates a sequence of non-stationary policies $\hat\bfpi_{H}^\ast \in \Pi^{H}$ that satisfy
    \begin{align*}
        \lVert V_\infty^\ast - V_H^{\hat\bfpi_H^\ast} \rVert_\infty \to 0 \quad \textrm{ for } H\to\infty.
    \end{align*}
    Furthermore, the overall error vanishes in the limit:
    \begin{align*}
        \lVert V_\infty^\ast - V_{\infty}^{\hat\pi_{H}^\ast}\rVert_\infty \to 0 \quad \textrm{ for } H\to\infty.
    \end{align*}
\end{corollary}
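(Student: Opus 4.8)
The plan is to read both limits directly off the error decomposition in \Cref{prop:error-decomposition}, feeding in the two hypotheses of \Cref{ass:error-decrease-over-time}. Since the parametrization has zero approximation error, we have $T^\ast = \sup_\gt T^{\pi_\gt}$ and the approximation term $\lVert V_\infty^\ast - \sup_\gt V_\infty^{\pi_\gt}\rVert_\infty$ vanishes identically; this removes the first summand in both \eqref{eq:overall-error} and \eqref{eq:value-func-error}. Everything else reduces to checking that the remaining summands tend to zero.

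I would first settle the value function error, that is, the first claim. Inserting $\sup_\gt T^{\pi_\gt} = T^\ast$ into \eqref{eq:value-func-error} leaves three contributions. The truncation term $\gg^H R^\ast/(1-\gg)$ tends to $0$ because $\gg\in(0,1)$. The accumulated optimization term equals $\sum_{h=0}^{H-1}\gg^{H-h-1}\lVert T^\ast(V_h^{\hat\bfpi_h^\ast}) - V_{h+1}^{\hat\bfpi_{h+1}^\ast}\rVert_\infty$, which the second bullet of the assumption bounds above by $\sum_{h=0}^{H-1}\gg^{H-h-1}\epsilon_h$, and the first bullet guarantees that this sum vanishes as $H\to\infty$. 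Adding the three bounds gives $\lVert V_\infty^\ast - V_H^{\hat\bfpi_H^\ast}\rVert_\infty \to 0$.

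For the overall error I would use the full decomposition \eqref{eq:overall-error}. Its first three summands are identical to those just handled and hence vanish; the only new ingredient is the stationary policy error $\frac{1}{1-\gg}\lVert V_{H+1}^{\hat\bfpi_{H+1}^\ast} - V_H^{\hat\bfpi_H^\ast}\rVert_\infty$. The crucial observation is that the value function bound just proven holds with $H$ replaced by $H+1$ (the shifted sum $\sum_{h=0}^{H}\gg^{H-h}\epsilon_h$ is again of the form covered by the first bullet of the assumption), so $\lVert V_\infty^\ast - V_{H+1}^{\hat\bfpi_{H+1}^\ast}\rVert_\infty \to 0$ as well. A triangle inequality through $V_\infty^\ast$ then yields
\[
\lVert V_{H+1}^{\hat\bfpi_{H+1}^\ast} - V_H^{\hat\bfpi_H^\ast}\rVert_\infty \leq \lVert V_{H+1}^{\hat\bfpi_{H+1}^\ast} - V_\infty^\ast\rVert_\infty + \lVert V_\infty^\ast - V_H^{\hat\bfpi_H^\ast}\rVert_\infty \to 0,
\]
and since the prefactor $(1-\gg)^{-1}$ is a fixed constant the stationary policy error vanishes as well. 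Summing the four vanishing bounds closes the overall-error claim.

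The only genuinely non-routine step is the control of the stationary policy error: unlike the other summands it is not directly dominated by the assumed $\epsilon_h$. The trick that makes it harmless is to route it through $V_\infty^\ast$ and reuse the value function convergence at the two consecutive horizons $H$ and $H+1$, rather than attempting to estimate $\lVert T^{\hat\pi_H^\ast}(V_H^{\hat\bfpi_H^\ast}) - V_H^{\hat\bfpi_H^\ast}\rVert_\infty$ from \eqref{eq:Bellman-iteration-dynPG} by hand. I expect no further obstacles for this asymptotic statement; quantitative rates, which would require tracking how fast each of the four terms decays, are deferred to the softmax analysis of \Cref{subsec:softmax-theory}.
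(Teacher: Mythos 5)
Your proposal is correct and follows essentially the same route as the paper: the first claim is read off the value-function error decomposition \eqref{eq:value-func-error} with zero approximation error, and the stationary-policy term in \eqref{eq:overall-error} is handled by observing that $\lVert V_{H+1}^{\hat\bfpi_{H+1}^\ast} - V_H^{\hat\bfpi_H^\ast}\rVert_\infty \to 0$ because both converge to $V_\infty^\ast$. Your explicit triangle inequality through $V_\infty^\ast$ is exactly the paper's ``convergent implies Cauchy'' argument, just spelled out.
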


\begin{remark}\label{rem:on-N-and-eta}
    The condition $\sum_{h=0}^{H-1} \gg^{H-h-1} \epsilon_{h} \to 0$ for $H \to \infty$ implies that the optimization error must decrease with increasing $h$ to guarantee convergence. Hence, training must become more precise over time. It is advisable to increase the number of gradient steps $N_h$ and decrease the step size $\eta_h$.
\end{remark}

\subsection{Convergence Rates under Tabular Softmax Parametrization - Exact Gradients}\label{subsec:softmax-theory}
In this section, we assume that all gradients are known explicitly, a strong assumption for practical use but standard for a rigorous analysis. The analysis is extended in the next section to sample based estimates.

One might inquire whether \Cref{ass:error-decrease-over-time} is reasonable and whether there are situations in which the condition on the algorithm holds. Indeed, we will show that this is the case for the softmax class of policies. More precisely, for any $\epsilon_h>0$, we can specify a step size $\eta_h$ and a number of gradient steps $N_h$ such that \Cref{ass:error-decrease-over-time} is satisfied. In a second step, we optimize $H$ and the error sequence $(\epsilon_h)$ to obtain the optimal sample complexity for DynPG under softmax parametrization, where we distinguish again between the value function error and the overall error. 

\begin{assumption}\label{ass:for-softmax}
    Suppose that $\mu(s) >0$ for all $s\in\cS$. Furthermore, the parametrization in DynPG $(\pi_\gt)_{\gt\in\R^{|\cS||\cA|}}$ is chosen to be the tabular softmax parametrization introduced in \eqref{eq:softmax-policy}. We assume further that the initialization of $\gt_0$ in \Cref{alg:dynamicPG-for-discounting-forward} is a uniform distribution over the action space. 
\end{assumption}

\begin{theorem}\label{thm:one-step-softmax-error}
    Let Assumption \ref{ass:for-softmax} hold true and the gradient be accessed exactly. Let $h\geq 0$, $\epsilon_h>0$ and $\Lambda = \hat\bfpi_h^\ast \in \Pi^h$ be a collection of $h$ arbitrary policies. Then, using step size $\eta_h = \frac{1-\gg}{2R^\ast(1-\gg^{h+1})}$ and gradient steps $N_h = \frac{4 R^\ast (1-\gg^{h+1}) |\cA|^2}{(1-\gg)\epsilon_h} \big\lVert \frac{1}{\mu}\big\rVert_\infty$ in DynPG  guarantees that the policy $\hat\pi_h^\ast$ of iteration $h$ achieves
    \begin{align*}
        \lVert  T^\ast (V_h^{\hat\bfpi_h^\ast} ) -   V_{h+1}^{\hat\bfpi_{h+1}^\ast}  \rVert_\infty \leq \epsilon_{h}.
    \end{align*}
\end{theorem}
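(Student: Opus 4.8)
The starting point is to recognize iteration $h$ of DynPG as a \emph{contextual bandit}. Since $\Lambda = \hat\bfpi_h^\ast$ is fixed, the Bellman identity \eqref{eq:Bellman-iteration-dynPG} gives $V_{h+1}^{\{\pi_\gt,\Lambda\}}(s) = \sum_{a} \pi_\gt(a|s) Q_h(s,a)$, where $Q_h(s,a) := r(s,a) + \gg\sum_{s'}p(s'|s,a) V_h^{\hat\bfpi_h^\ast}(s')$ is a fixed vector of ``rewards'' not depending on $\gt$. Thus the objective optimized in iteration $h$ is $J_h(\gt) := V_{h+1}^{\{\pi_\gt,\Lambda\}}(\mu) = \sum_s \mu(s)\sum_a \pi_\gt(a|s) Q_h(s,a)$, its supremum is $\sum_s\mu(s)\max_a Q_h(s,a)$, and since $T^\ast(V_h^{\hat\bfpi_h^\ast})(s) = \max_a Q_h(s,a)$ while $V_{h+1}^{\hat\bfpi_{h+1}^\ast}(s) = \sum_a \pi_{\gt_{N_h}}(a|s)Q_h(s,a)$, the quantity to be bounded equals the per-state bandit sub-optimality gap $\max_a Q_h(s,a) - \sum_a\pi_{\gt_{N_h}}(a|s)Q_h(s,a)$, maximized over $s$. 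A telescoping estimate on $V_h^{\hat\bfpi_h^\ast}$ yields the crucial reward bound $\lVert Q_h\rVert_\infty \le \tilde R_h := \frac{(1-\gg^{h+1})R^\ast}{1-\gg}$, which fixes the scale of every constant below; note that $\eta_h = 1/(2\tilde R_h)$.

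Next I would exploit that under tabular softmax the coordinates $\gt(s,\cdot)$ of distinct states are disjoint, so $J_h(\gt) = \sum_s \mu(s)\, g_s(\gt_s)$ with $g_s(\gt_s) = \sum_a \pi_{\gt_s}(a) Q_h(s,a)$, and exact gradient ascent on $J_h$ with step $\eta_h$ decomposes into independent ascent on each $f_s := \mu(s)\, g_s$ with \emph{effective} step $\eta_h\mu(s)$. The first technical ingredient is a smoothness bound: a direct Hessian computation (the simpler policy gradient theorem available for DynPG) shows $g_s$ is $\beta_h$-smooth with $\beta_h \le 2\tilde R_h$. Because $\mu(s)\le 1$, the choice $\eta_h = 1/(2\tilde R_h) = 1/\beta_h$ then satisfies the standard ascent lemma for every state simultaneously, yielding $f_s(\gt_{s,n+1}) - f_s(\gt_{s,n}) \ge \tfrac{\eta_h}{2}\lVert \nabla f_s(\gt_{s,n})\rVert^2$.

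The second ingredient is a non-uniform \L ojasiewicz (gradient-domination) inequality in the spirit of \citet{mei2022global}: for the softmax bandit, $\lVert \nabla g_s(\gt_s)\rVert \ge \pi_{\gt_s}(a^\ast_s)\,(\max_a Q_h(s,a) - g_s(\gt_s))$, where $a^\ast_s$ is the greedy action for $Q_h(s,\cdot)$. Writing $\delta_{s,n} := f_s^\ast - f_s(\gt_{s,n})$ and combining with the ascent lemma gives the recursion $\delta_{s,n} - \delta_{s,n+1} \ge \tfrac{\eta_h}{2}\, c_s^2\, \delta_{s,n}^2$ with $c_s := \inf_n \pi_{\gt_{s,n}}(a^\ast_s)$, which integrates to the $O(1/n)$ rate $\delta_{s,n} \le \frac{2}{\eta_h c_s^2 n}$. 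Undoing the $\mu(s)$ weighting turns this into $\max_a Q_h(s,a) - g_s(\gt_{s,n}) \le \frac{2}{\mu(s)\eta_h c_s^2 n}$; taking the supremum over $s$ produces the factor $\lVert 1/\mu\rVert_\infty$, and substituting $\eta_h = 1/(2\tilde R_h)$, $c_s \ge 1/|\cA|$ and $n = N_h$ gives exactly the claimed $\epsilon_h$.

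The crux --- and the step I expect to be the main obstacle --- is controlling $c_s$ uniformly in $n$. In general softmax PG the probability of the optimal action can decay to zero along the trajectory, which is precisely the mechanism behind the exponential lower bounds of \citet{Li2023}. Here the bandit structure rescues us: because $Q_h$ is \emph{fixed} during iteration $h$, one can show $n \mapsto \pi_{\gt_{s,n}}(a^\ast_s)$ is monotonically non-decreasing along gradient ascent, so together with the uniform initialization in \Cref{ass:for-softmax} we obtain $c_s \ge \pi_{\gt_{s,0}}(a^\ast_s) = 1/|\cA|$. Establishing this monotonicity carefully (and handling ties in the $\arg\max$) is the delicate part; it is exactly the structural advantage that DynPG buys by reducing each iteration to a contextual bandit rather than a full MDP.
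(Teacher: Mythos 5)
Your plan is correct and is essentially the paper's own proof: the same smoothness constant $L_h = \frac{2R^\ast(1-\gg^{h+1})}{1-\gg}$ (\Cref{lem:smoothness-softmax-objective}), the same non-uniform gradient-domination inequality with prefactor $\min_{s}\pi_\gt(a^\ast(s)|s)$ (\Cref{lem:gradient-domination-softmax-objective}), the same lower bound $1/|\cA|$ on that prefactor along the gradient trajectory (\Cref{lem:constant-is-one-over-actions}), the same $O(1/n)$ rate for smooth gradient-dominated functions (\Cref{lem:conv-rate-for-smooth-dominated-fucnctions}), and the same distribution-mismatch factor $\lVert 1/\mu\rVert_\infty$; your per-state decomposition is only a repackaging of the paper's aggregated argument, whose gradient-domination proof already proceeds state-by-state before taking the minimum over states. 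One caveat on your final step: monotonicity of $n\mapsto\pi_{\gt_n}(a^\ast(s)|s)$ does \emph{not} follow from the fixed-bandit structure alone --- it holds because the uniform initialization places $\gt_0$ in a forward-invariant region where the greedy action's probability and partial derivative dominate those of every other action (the three-claim invariance argument in \Cref{lem:constant-is-one-over-actions}); under Assumption~\ref{ass:for-softmax} this is exactly what the paper establishes, so your plan closes, but the monotonicity must be proved as initialization-dependent rather than asserted in general.
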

By Corollary~\ref{cor:convergence-for-error-sequence-ass}, we obtain from \Cref{thm:one-step-softmax-error} convergence for softmax DynPG by choosing a sufficient decreasing error sequence $(\epsilon_h)$. 
\begin{remark}
    The proof is adapted from \citet[Lem. 3.4]{klein2024beyond} to the discounted setting and is provided in \Cref{sec:proof-one-step-softmax-error}.
    Note that we cannot directly apply \citet[Thm. 4]{mei2022global} with $\gg=0$ to prove this theorem for contextual bandits because the authors assumed rewards in $[0,1]$. Even when we make the same assumption, the problem horizon increases with $h$ such that the contextual bandits we consider, $V_{h+1}^{\{\pi_\gt, \Lambda\}}$, will have maximal reward greater then $1$ after the first iteration. Therefore we provide a more general framework with bounded rewards in $[-R^\ast, R^\ast]$.
\end{remark}

In order to obtain complexity bounds from \Cref{thm:one-step-softmax-error} for a given accuracy $\epsilon$ we optimize the total number of gradient steps $\sum_{h=0}^{H} N_h(\epsilon_h)$
with respect to $H$ and $(\epsilon_h)$ under the constraint that the overall error in \eqref{eq:overall-error} or the value function error in \eqref{eq:value-func-error} is bounded by $\epsilon$. 
We provide a detailed solution to the described constrained optimization problem in Appendix \ref{sec:proof-sample-complexity}. Parts of the proof are motivated by a similar optimization problem solved in \citet[Sec. 3.2]{pmlr-v178-weissmann22a}. We summarize the complexity bounds for both error types in the following.

\begin{theorem}\label{thm:sample-complexity}
    [cf. Theorem~\ref{thm:sample-complexity-detailed-value} and Theorem~\ref{thm:sample-complexity-detailed-overall} for detailed versions] 
    Let Assumption \ref{ass:for-softmax} hold true and the gradient be accessed exactly. Choose $\epsilon>0$. 
    \begin{enumerate}
        \item Overall error: We can specify $H$, $(N_h)_{h=0}^{H}$ and $(\eta_h)_{h=0}^H$ such that, 
    \begin{align*}
         \sum_{h=0}^{H} N_h \leq
         \frac{48 R^\ast |\cA|^2}{(1-\gg)^3 \epsilon} \Big\lceil \frac{\log(6R^\ast(1-\gg)^{-2}\epsilon^{-1})}{\log(\gg^{-1})}\Big\rceil \Big\lVert \frac{1}{\mu}\Big\rVert_\infty 
    \end{align*}
    accumulated gradient steps are required to achieve $\lVert V_\infty^\ast - V_\infty^{\hat\pi_H^\ast}\rVert_\infty \leq \epsilon$.
    \item Value function error: We can specify $H$, $(N_h)_{h=0}^{H-1}$ and $(\eta_h)_{h=0}^{H-1}$ such that, 
    \begin{align*}
        \sum_{h=0}^{H-1} N_h \leq
        \frac{8R^\ast |\cA|^2}{(1-\gg)^2 \epsilon} \Big\lceil \frac{\log(2R^\ast (1-\gg)^{-1}\epsilon^{-1} )}{\log(\gg^{-1})}\Big\rceil \Big\lVert \frac{1}{\mu}\Big\rVert_\infty 
    \end{align*}
    accumulated gradient steps are required to achieve $\lVert V_\infty^\ast - V_H^{\hat\bfpi_H^\ast}\rVert_\infty \leq \epsilon$.
    \end{enumerate}
\end{theorem}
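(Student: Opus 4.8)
The plan is to combine \Cref{prop:error-decomposition} with \Cref{thm:one-step-softmax-error} and then solve a constrained optimization over the horizon $H$ and the per-iteration target errors $(\epsilon_h)$. Under \Cref{ass:for-softmax} the parametrization has zero approximation error, so $T^\ast = \sup_\gt T^{\pi_\gt}$ and the first summand in \eqref{eq:overall-error} and \eqref{eq:value-func-error} vanishes. \Cref{thm:one-step-softmax-error} says that to realize $\lVert T^\ast(V_h^{\hat\bfpi_h^\ast}) - V_{h+1}^{\hat\bfpi_{h+1}^\ast}\rVert_\infty \le \epsilon_h$ it suffices to spend $N_h = \tfrac{4R^\ast(1-\gamma^{h+1})|\cA|^2}{(1-\gamma)\epsilon_h}\lVert\tfrac1\mu\rVert_\infty$ gradient steps with the prescribed $\eta_h$. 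Hence the quantity to be minimized is $\sum_h N_h(\epsilon_h)$, and since $1-\gamma^{h+1}\le 1$ each term is at most $\tfrac{4R^\ast|\cA|^2}{(1-\gamma)\epsilon_h}\lVert\tfrac1\mu\rVert_\infty$.

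For the value function error (part (ii)) I would split the target $\epsilon$ into a truncation budget and an optimization budget. Setting the truncation term $\tfrac{\gamma^H R^\ast}{1-\gamma}\le \tfrac\epsilon2$ forces $H = \lceil \log(2R^\ast(1-\gamma)^{-1}\epsilon^{-1})/\log(\gamma^{-1})\rceil$, exactly the ceiling appearing in the statement. The remaining constraint $\sum_{h=0}^{H-1}\gamma^{H-h-1}\epsilon_h \le \tfrac\epsilon2$ must then be met while minimizing $\sum_h N_h$. Using the geometric-sum bound $\sum_{h=0}^{H-1}\gamma^{H-h-1}\le(1-\gamma)^{-1}$, the uniform choice $\epsilon_h\equiv \tfrac{(1-\gamma)\epsilon}{2}$ is feasible, and substituting it together with $\sum_{h=0}^{H-1}1 = H$ gives the claimed bound $\tfrac{8R^\ast|\cA|^2}{(1-\gamma)^2\epsilon}\,H\,\lVert\tfrac1\mu\rVert_\infty$. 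The full constrained optimization over $(\epsilon_h)$, in the spirit of \citet[Sec. 3.2]{pmlr-v178-weissmann22a}, can only improve constant and logarithmic factors, so this uniform choice already certifies the stated rate.

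For the overall error (part (i)) the extra difficulty is the stationary-policy term $\tfrac{1}{1-\gamma}\lVert V_{H+1}^{\hat\bfpi_{H+1}^\ast}-V_H^{\hat\bfpi_H^\ast}\rVert_\infty$ in \eqref{eq:overall-error}. I would control it by writing $V_{H+1}^{\hat\bfpi_{H+1}^\ast}=T^{\hat\pi_H^\ast}(V_H^{\hat\bfpi_H^\ast})$ and estimating
\[
\lVert T^{\hat\pi_H^\ast}(V_H^{\hat\bfpi_H^\ast})-V_H^{\hat\bfpi_H^\ast}\rVert_\infty \le \epsilon_H + \lVert T^\ast(V_H^{\hat\bfpi_H^\ast})-V_H^{\hat\bfpi_H^\ast}\rVert_\infty \le \epsilon_H + (1+\gamma)\lVert V_\infty^\ast - V_H^{\hat\bfpi_H^\ast}\rVert_\infty,
\]
where the last step uses that $V_\infty^\ast$ is the fixed point of the $\gamma$-contraction $T^\ast$. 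Thus the stationary-policy error is at most $(1-\gamma)^{-1}$ times ($\epsilon_H$ plus twice the already-bounded value function error). Requiring the whole right-hand side to be $\le\epsilon$ therefore forces the value-function-error budget to be of order $(1-\gamma)\epsilon$ rather than $\epsilon$: this enlarges the horizon to $H=\lceil\log(6R^\ast(1-\gamma)^{-2}\epsilon^{-1})/\log(\gamma^{-1})\rceil$ (the $(1-\gamma)^{-2}$ now sitting inside the logarithm) and shrinks the target errors by a factor $\sim(1-\gamma)$, which is precisely the source of the extra $(1-\gamma)^{-1}$ relative to part (ii). Summing $N_h$ up to $h=H$ (one additional iteration is needed to form $V_{H+1}^{\hat\bfpi_{H+1}^\ast}$) then yields the stated $\tfrac{48R^\ast|\cA|^2}{(1-\gamma)^3\epsilon}\,H\,\lVert\tfrac1\mu\rVert_\infty$.

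The geometric-series bookkeeping, the splitting of $\epsilon$, and the resulting closed forms for $H$ and $\epsilon_h$ are routine. I expect the main obstacle to be twofold. First, bounding the stationary-policy term $\lVert V_{H+1}^{\hat\bfpi_{H+1}^\ast}-V_H^{\hat\bfpi_H^\ast}\rVert_\infty$ by the previously controlled quantities \emph{without} paying more than a single extra factor of $(1-\gamma)^{-1}$, since a careless estimate would degrade the polynomial degree in the effective horizon. Second, carrying the explicit constants through the constrained optimization so that the bounds emerge with the precise prefactors $8$ and $48$ rather than merely the correct order in $(1-\gamma)^{-1}$, $\epsilon^{-1}$ and $|\cA|$.
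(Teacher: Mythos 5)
Your proposal is correct and follows the paper's overall architecture (zero approximation error under softmax, \Cref{prop:error-decomposition} plus the per-epoch guarantee of \Cref{thm:one-step-softmax-error}, then a budget-splitting argument), but it replaces both of the paper's key technical lemmas with simpler arguments, and both replacements check out. For the value function error, the paper solves the constrained allocation problem exactly via Lagrange multipliers (\Cref{lem:a_k-b_k-c_k-optimization}), obtaining geometrically varying $\epsilon_h \propto \big((1-\gg^{h+1})/\gg^{H-h-1}\big)^{1/2}$, and then bounds the resulting sum by Jensen's inequality; your uniform choice $\epsilon_h \equiv \tfrac{(1-\gg)\epsilon}{2}$ is feasible by the geometric series and, because the paper's Jensen step erases whatever the optimization gained, yields \emph{exactly} the same final bound $\tfrac{8R^\ast|\cA|^2}{(1-\gg)^2\epsilon}H\lVert\tfrac1\mu\rVert_\infty$ — so the Lagrange machinery buys nothing in the stated rate, only a (potentially) sharper unstated constant. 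For the overall error, the paper's \Cref{lem:decompose-stationary-policy-error} controls $\lVert V_{H+1}^{\hat\bfpi_{H+1}^\ast}-V_H^{\hat\bfpi_H^\ast}\rVert_\infty$ by a triangle inequality through $V_\infty^\ast$ and two applications of the error decomposition, under the side condition $\epsilon_H \le (1-\gg)\sum_{h<H}\gg^{H-h-1}\epsilon_h$; your contraction argument
\begin{equation*}
  \lVert T^{\hat\pi_H^\ast}(V_H^{\hat\bfpi_H^\ast})-V_H^{\hat\bfpi_H^\ast}\rVert_\infty
  \le \epsilon_H + (1+\gg)\lVert V_\infty^\ast - V_H^{\hat\bfpi_H^\ast}\rVert_\infty
\end{equation*}
is valid (the first step is the epoch-$H$ guarantee via \eqref{eq:Bellman-iteration-dynPG}, the second uses that $V_\infty^\ast$ is the fixed point of the $\gg$-contraction $T^\ast$), gives the marginally tighter relation $\lVert V_\infty^\ast - V_\infty^{\hat\pi_H^\ast}\rVert_\infty \le \tfrac{2\,\mathrm{VF}+\epsilon_H}{1-\gg}$, and sidesteps the side condition entirely (which, as stated, the choices in \Cref{thm:sample-complexity-detailed-overall} satisfy only up to a constant). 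Your budget split then reproduces the stated $H=\lceil\log(6R^\ast(1-\gg)^{-2}\epsilon^{-1})/\log(\gg^{-1})\rceil$ and, with $\epsilon_h=\tfrac{(1-\gg)^2\epsilon}{6}$ for $h<H$ and $\epsilon_H$ of order $(1-\gg)\epsilon$, the prefactor $48$ after summing the $H+1$ epochs; carrying out this last arithmetic is routine, as you anticipate. One thing the paper's route buys that yours does not automatically: \Cref{lem:decompose-stationary-policy-error} is stated as a standalone bound on the overall error in terms of the $\epsilon_h$'s and is reused verbatim in the stochastic analysis (\Cref{thm:sample-complexity-DynPG-stochastic-detailed}), whereas your inequality would need to be restated there, though it would serve equally well.
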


\begin{remark}
    We address each case individually and offer comprehensive versions of Theorem \ref{thm:sample-complexity} in \Cref{thm:sample-complexity-detailed-value} and \Cref{thm:sample-complexity-detailed-overall} delineating explicit selections for $H$, $(N_h)_{h=0}^{H}$, and $(\eta_h)_{h=0}^H$.
\end{remark}
To obtain the convergence behavior of DynPG in terms of $\gg$ close to $1$, in \Cref{lem:asymptotic-equivalence} we derive that $\log(\gg^{-1})$ is asymptotically equivalent to the term $(1-\gg)$. 
Combined with \Cref{thm:sample-complexity}, we find that the required gradient steps for $\gg$ close to $1$ behave like
\begin{align}\label{eq:rates-dynPG}
    O\big( (1-\gg)^{-4} \epsilon^{-1} \log((1-\gg)^{-2} \epsilon^{-1}) \big)
\end{align}
for the overall error and
\begin{align*}
     O\big( (1-\gg)^{-3} \epsilon^{-1} \log((1-\gg)^{-1} \epsilon^{-1}) \big)
\end{align*}
for the value function error. Compared to softmax PG, we observe an additional $\log(\epsilon^{-1})$ factor in the convergence rate and future research could explore the possibility of eliminating the log-factor. In terms of $\gamma$, however, DynPG offers a resilient upper bound, which, in comparison to vanilla PG, remains polynomial in the effective horizon. 

\subsection{Convergence Rates under Tabular Softmax Parametrization - Sampled Gradients}\label{subsec:softmax-theory-stochastic}

For this part of the paper we drop the exact gradient  assumption and analyze DynPG for sample based  gradient estimates (generated by MDP roll-outs). We emphasize that the main insight into DynPG (breaking the (possibly) exponential $\gamma$-dependence) is part of the exact gradient analysis. Even though the estimates provided here are not relevant for practical use, we add the discussion for completeness.

For a fixed future policy $\hat\bfpi_h^\ast$, consider $K_h$ trajectories $(s_k^i, a_k^i)_{k=0}^{h}$, for $i=1, \dots,K_h$,  generated by $s_0^i\sim\mu$, $a_0^i\sim \pi_{\gt}$ and $a_k^i\sim \hat{\pi}_{h-k}^\ast$ for $1\leq k\leq h$. In analogy to the gradient estimator for vanilla PG (motivated by the policy gradient theorem) the estimator of $\nabla_\gt V_{h+1}^{\pi_\gt, \hat\bfpi_h^\ast}(\mu)$ is defined by
\begin{equation}
  \widehat{G}_{K_h}(\gt) = \frac{1}{K_h} \sum_{i=1}^{K_h} \nabla \log( \pi_\gt(a_0^i | s_0^i)) \hat{Q}_{h+1}^i,
\end{equation}
where $\hat{Q}_{h+1}^i = \sum_{k=0}^{h} \gg^k r(s_k^i, a_k^i)$ is an unbiased estimator of $Q_{h+1}^{\hat\bfpi_h^\ast}(s_0^i,a_0^i)$. Then the stochastic PG update for training the parameter $\gt$ in epoch $h$ is given by
\begin{equation}\label{eq:SGD_updates}
  \bar{\gt}_{n+1} = \bar{\gt}_n + \eta_h \widehat{G}_{K_h}(\bar{\gt}_n).
\end{equation} 
Our main result for the optimization of the contextual bandits in DynPG is given as follows. 
\begin{theorem}\label{thm:conv-DynPG-stochastic-one-step}
  Let Assumption \ref{ass:for-softmax} hold true and DynPG be used with stochastic updates from \eqref{eq:SGD_updates}. Suppose, we are in epoch $h$ with fixed future policy $\Lambda = \hat\bfpi_h^\ast$ and for any $\delta_h >0$ and $\epsilon_h>0$, choose $N_h \ge \Big( \frac{12|\cA|^2 (1-\gg^{h+1}) R^\ast}{(1-\gg)\epsilon \delta}\Big)^2$, $\eta_h=\frac{1-\gg}{2(1-\gg^{h+1})R^\ast \sqrt{N_h}}$ and $K_h \geq \frac{5|\cA|^2 N_h^3}{\delta^2}$. Then, it holds  
  \begin{equation*}
     \PP\Big( T^\ast ( V_{h}^{\Lambda})(\mu)- V_{h+1}^{\{\pi_{\bar{\gt}_{n}},\Lambda\}}(\mu) \geq \epsilon_h \Big)\leq  \delta_h.
  \end{equation*}
\end{theorem}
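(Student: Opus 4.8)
The plan is to read epoch $h$ of DynPG as stochastic gradient ascent on the scalar objective $J(\gt):=V_{h+1}^{\{\pi_\gt,\Lambda\}}(\mu)$ with the stored future policies $\Lambda=\hat\bfpi_h^\ast$ frozen, and to track its gap to $J^\ast:=\sup_\gt J(\gt)$. Since softmax has zero approximation error, $J^\ast = T^\ast(V_h^\Lambda)(\mu)$, so the probability in the statement is exactly $\PP(J^\ast - J(\bar\gt_{N_h}) \ge \epsilon_h)$. Because $\Lambda$ is fixed, $J$ is a contextual-bandit objective, and the policy gradient theorem (\Cref{thm:policy-gradient-dynPG}) shows that $\nabla_\gt J$ depends only on the first action; consequently $\widehat{G}_{K_h}(\gt)$ is an average of $K_h$ i.i.d.\ terms that is unbiased for $\nabla_\gt J(\gt)$.

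I would then collect the facts inherited from the exact-gradient analysis behind \Cref{thm:one-step-softmax-error}. First, $J$ is $\beta_h$-smooth with $\beta_h = \tfrac{2R^\ast(1-\gg^{h+1})}{1-\gg}$, which is why the deterministic step size equals $1/\beta_h$. Second, each summand of $\widehat G_{K_h}$ is the product of $\nabla\log\pi_\gt(a_0\mid s_0)$, whose Euclidean norm is bounded by a softmax constant, and the return $\hat Q_{h+1}$, bounded by $\tfrac{R^\ast(1-\gg^{h+1})}{1-\gg}$; this yields a variance bound $\E\lVert \widehat G_{K_h}(\gt) - \nabla J(\gt)\rVert^2 \le \sigma_h^2/K_h$ with explicit $\sigma_h^2$. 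Third, softmax gradient domination gives $\lVert \nabla J(\gt)\rVert \ge c(\gt)\,(J^\ast - J(\gt))$, where $c(\gt)$ is bounded below by $\min_s \pi_\gt(a^\ast(s)\mid s)$ up to an $|\cA|$-factor. \emph{The special feature of the bandit is that, under exact gradients from the uniform initialization, $\pi_\gt(a^\ast\mid s)$ is nondecreasing, so $c(\gt)\gtrsim 1/|\cA|$ uniformly along the trajectory} --- this is what removes the hidden constant $\cC(\gg)$ and produces the clean $|\cA|$ dependence.

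With these ingredients I would run the inexact ascent recursion. Writing $\Delta_n := J^\ast - J(\bar\gt_n)$ and $e_n := \widehat G_{K_h}(\bar\gt_n) - \nabla J(\bar\gt_n)$, the ascent lemma with $\eta_h\beta_h\le 1$ gives $J(\bar\gt_{n+1}) \ge J(\bar\gt_n) + \tfrac{\eta_h}{2}\lVert \nabla J(\bar\gt_n)\rVert^2 - \tfrac{\eta_h}{2}\lVert e_n\rVert^2$; notably the cross term cancels, so only $\E\lVert e_n\rVert^2 \le \sigma_h^2/K_h$ is needed. Combining with gradient domination (wherever the constant $c(\bar\gt_n)\ge c_0$ holds, an event I control below) and Jensen turns this into the quadratic recursion $\E[\Delta_{n+1}] \le \E[\Delta_n] - \tfrac{\eta_h c_0^2}{2}\E[\Delta_n]^2 + \tfrac{\eta_h\sigma_h^2}{2K_h}$ for $c_0\sim 1/|\cA|$. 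The deliberately small step $\eta_h = 1/(\beta_h\sqrt{N_h})$ trades the deterministic $O(1/N_h)$ rate for the noise-robust solution $\E[\Delta_{N_h}] \lesssim \tfrac{\beta_h|\cA|^2}{\sqrt{N_h}} + \tfrac{\sigma_h}{c_0\sqrt{K_h}}$; the choice $K_h \gtrsim N_h^3/\delta_h^2$ drives the noise floor far below $1/\sqrt{N_h}$, leaving $\E[\Delta_{N_h}] \le \tfrac{12|\cA|^2(1-\gg^{h+1})R^\ast}{(1-\gg)\sqrt{N_h}}$. A single Markov step $\PP(\Delta_{N_h}\ge\epsilon_h) \le \E[\Delta_{N_h}]/\epsilon_h$ then yields the claim precisely when $N_h \ge (12|\cA|^2(1-\gg^{h+1})R^\ast/((1-\gg)\epsilon_h\delta_h))^2$.

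The main obstacle is keeping the gradient-domination constant $c(\bar\gt_n)\gtrsim 1/|\cA|$ valid along the \emph{random} trajectory: the monotonicity of $\pi_\gt(a^\ast\mid s)$ that supplies this bound holds for exact gradients but can be destroyed by adverse noise, so the quadratic recursion step is only justified on the event that $\pi_{\bar\gt_n}(a^\ast\mid s)$ stays bounded below. This is exactly why $K_h$ must be taken so large: concentration of the $K_h$-sample average, together with a union bound over the $N_h$ iterations, confines the iterates to that region with probability at least $1-\delta_h$ and controls the bad-event contribution to $\E[\Delta_{N_h}]$. Making this confinement quantitative --- propagating the admissible per-step noise back into the stated $K_h$ --- is the delicate part; the remaining smoothness and variance estimates are the routine computations already used for \Cref{thm:one-step-softmax-error}.
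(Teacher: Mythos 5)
Your architecture matches the paper's proof (which adapts \citet[Sec.~D.2]{klein2024beyond}): an unbiased estimator with variance of order $\big(\tfrac{(1-\gg^{h+1})R^\ast}{1-\gg}\big)^2/K_h$, smoothness constant $L_h=\tfrac{2R^\ast(1-\gg^{h+1})}{1-\gg}$, step size $1/(L_h\sqrt{N_h})$, gradient domination with constant $\approx 1/|\cA|$ kept alive along the random trajectory by a confinement event, and Markov's inequality at the end. However, there is a concrete quantitative gap in how you assemble the final bound. You apply Markov to the \emph{unconditional} expectation $\E[\Delta_{N_h}]$. But the $O(1/\sqrt{N_h})$ bound on this expectation is only available on the confinement event; on its complement the best one can say is $\Delta_{N_h}\le \tfrac{2(1-\gg^{h+1})R^\ast}{1-\gg}$, so $\E[\Delta_{N_h}]\le \E[\Delta_{N_h}\mathbf{1}_{\mathrm{good}}]+\tfrac{2(1-\gg^{h+1})R^\ast}{1-\gg}\,\PP(\mathrm{bad})$. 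After dividing by $\epsilon_h$, the bad-event term is of order $\tfrac{(1-\gg^{h+1})R^\ast}{(1-\gg)\epsilon_h}\,\PP(\mathrm{bad})$, which exceeds $\delta_h$ for every nontrivial tolerance $\epsilon_h<\tfrac{2(1-\gg^{h+1})R^\ast}{1-\gg}$ if $\PP(\mathrm{bad})\asymp\delta_h$; forcing this term below $\delta_h$ would require $K_h\gtrsim N_h^4/|\cA|^2$ (plug the minimal admissible $N_h$ into the requirement $\PP(\mathrm{bad})\lesssim\tfrac{(1-\gg)\epsilon_h\delta_h}{(1-\gg^{h+1})R^\ast}$), strictly more than the stated $5|\cA|^2N_h^3/\delta_h^2$ once $N_h$ is large. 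The paper avoids this entirely by splitting the probability \emph{before} applying Markov: $\PP(\Delta_{N_h}\ge\epsilon_h)\le \PP(\{N_h\le\tau\}\cap\{\Delta_{N_h}\ge\epsilon_h\})+\PP(\tau\le N_h)\le \E[\Delta_{N_h}\mathbf{1}_{\{N_h\le\tau\}}]/\epsilon_h+\PP(\tau\le N_h)$, and bounding each term by $\delta_h/2$ (Lemma~\ref{lem:conv-d_l-in-SGD} and Lemma~\ref{lem:tau_leq_L}). Your argument closes only after reordering it in this form.

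The second issue is that what you defer as ``the delicate part'' is the mathematical core of the proof, not a routine estimate. The paper's device is a stopping time $\tau=\min\{n\ge 0:\lVert\gt_n-\bar\gt_n\rVert_2\ge\tfrac{1}{4|\cA|}\}$ that couples the stochastic iterates to the \emph{deterministic exact-gradient trajectory} $(\gt_n)$ run from the same uniform initialization with the same step size: since the deterministic iterates satisfy $\pi_{\gt_n}(a^\ast(s)\mid s)\ge 1/|\cA|$ for all $n$ (Lemma~\ref{lem:constant-is-one-over-actions}), closeness in parameter space transfers this to $\pi_{\bar\gt_n}(a^\ast(s)\mid s)\ge\tfrac{1}{2|\cA|}$ for all $n\le\tau$ (Lemma~\ref{lem:c_larger_0_SGD-dynamical}), and Lemma~\ref{lem:tau_leq_L} shows $\PP(\tau\le N_h)<\delta_h/2$ precisely under the stated $K_h\ge 5|\cA|^2N_h^3/\delta_h^2$ and $\eta_h\le 1/(\sqrt{N_h}L_h)$. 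Your proposed mechanism (per-iteration concentration of the $K_h$-sample average plus a union bound over $N_h$ iterations, confining the iterates directly) is plausible but is never carried out, and whether it reproduces the stated $K_h$ scaling is exactly the open question; as written, the proposal asserts rather than proves the one estimate on which the theorem rests.
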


From the previous contextual bandit result, we can derive a sample complexity result under which we obtain convergence in stochastic DynPG with arbitrary high probability. 

\begin{theorem}\label{thm:sample-complexity-DynPG-stochastic}
    Let Assumption \ref{ass:for-softmax} hold true and DynPG be used with stochastic updates from \eqref{eq:SGD_updates}. Choose any $\epsilon>0$, $\delta>0$. Then, we can specify $H$, $(N_h)_{h=0}^{H}$, $(\eta_h)_{h=0}^H$ and $(K_h)_{h=0}^H$ such that the total number of samples is bounded by
    \begin{align*}
        &\sum_{h=0}^{H} N_h K_h \leq  \Big\lVert \frac{1}{\mu}\Big\rVert_\infty^8  \frac{c_2|\cA|^{18} (R^\ast)^8}{(1-\gg)^{17}   \delta^{10}} \Big\lceil \frac{\log(\frac{2R^\ast}{(1-\gg)\epsilon })}{\log(\gg^{-1})}\Big\rceil^{18}\epsilon^{-8}
    \end{align*}
    and the stationary policy $\hat\pi_H^\ast$ obtained by stochastic DynPG achieves $\PP(\lVert V_\infty^\ast - V_\infty^{\hat\pi_H^\ast}\rVert_\infty < \epsilon) \geq 1-\delta$. 
\end{theorem}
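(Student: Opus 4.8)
The plan is to lift the exact-gradient sample-complexity result (\Cref{thm:sample-complexity}) to the stochastic regime by combining the error decomposition of \Cref{prop:error-decomposition} with the per-epoch high-probability guarantee of \Cref{thm:conv-DynPG-stochastic-one-step} and a union bound over epochs. First I would invoke \Cref{prop:error-decomposition}: since \Cref{ass:for-softmax} places us in the tabular softmax regime, the approximation error vanishes and the overall error splits into the truncation term $\gg^H R^\ast/(1-\gg)$, the accumulated optimization error $\sum_{h=0}^{H-1}\gg^{H-h-1}\lVert T^\ast(V_h^{\hat\bfpi_h^\ast}) - V_{h+1}^{\hat\bfpi_{h+1}^\ast}\rVert_\infty$, and the stationary policy error $(1-\gg)^{-1}\lVert V_{H+1}^{\hat\bfpi_{H+1}^\ast} - V_H^{\hat\bfpi_H^\ast}\rVert_\infty$. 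I would allocate the target accuracy $\epsilon$ among these three terms, fixing $H\approx\lceil\log(2R^\ast/((1-\gg)\epsilon))/\log(\gg^{-1})\rceil$ so that truncation falls below a constant fraction of $\epsilon$; this choice is exactly the logarithmic ceiling appearing in the claimed bound.

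Step two converts the per-epoch guarantee, which is stated at the initial distribution $\mu$, into the sup-norm quantities the decomposition requires. Because $T^\ast(V_h^{\hat\bfpi_h^\ast})(s) - V_{h+1}^{\hat\bfpi_{h+1}^\ast}(s)\geq 0$ pointwise and $\mu$ has full support, one has $\lVert T^\ast(V_h^{\hat\bfpi_h^\ast}) - V_{h+1}^{\hat\bfpi_{h+1}^\ast}\rVert_\infty \leq \lVert 1/\mu\rVert_\infty\,[\,T^\ast(V_h^{\hat\bfpi_h^\ast})(\mu) - V_{h+1}^{\hat\bfpi_{h+1}^\ast}(\mu)\,]$. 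Hence driving the $\mu$-evaluated error below $\epsilon_h/\lVert 1/\mu\rVert_\infty$ via \Cref{thm:conv-DynPG-stochastic-one-step} yields sup-norm accuracy $\epsilon_h$. Since the per-epoch sample count behaves like $N_h K_h \sim \epsilon_h^{-8}\delta_h^{-10}$ (from $N_h\sim(\epsilon_h\delta_h)^{-2}$ and $K_h\sim N_h^3\delta_h^{-2}$), this rescaling contributes precisely the factor $\lVert 1/\mu\rVert_\infty^{8}$ in the final estimate.

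Step three handles the sequential randomness and the stationary policy term. Each epoch trains on trajectories conditioned on the already-fixed (but random) future policies $\Lambda=\hat\bfpi_h^\ast$; as \Cref{thm:conv-DynPG-stochastic-one-step} holds for an arbitrary fixed $\Lambda$, I would apply it conditionally and take a union bound over the $H+1$ epochs with per-epoch failure probabilities $\delta_h$ summing to $\delta$, so that all epoch errors hold simultaneously with probability at least $1-\delta$. The stationary policy error is controlled by the one additional training epoch $H$: writing $V_{H+1}^{\hat\bfpi_{H+1}^\ast}=T^{\hat\pi_H^\ast}(V_H^{\hat\bfpi_H^\ast})$ from \eqref{eq:Bellman-iteration-dynPG} and inserting $T^\ast(V_H^{\hat\bfpi_H^\ast})$, the difference $V_{H+1}^{\hat\bfpi_{H+1}^\ast}-V_H^{\hat\bfpi_H^\ast}$ splits into the epoch-$H$ optimization error and a remainder $T^\ast(V_H^{\hat\bfpi_H^\ast})-V_H^{\hat\bfpi_H^\ast}$ governed by the $\gg$-contraction of $T^\ast$ toward its fixed point $V_\infty^\ast$; both are already controlled, which explains why $\sum_{h=0}^H N_h K_h$ runs up to $H$ and carries the extra $(1-\gg)^{-1}$.

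The main obstacle, and the bulk of the work, is the constrained optimization underlying the final summation: choosing the error budget $(\epsilon_h)$ and confidence budget $(\delta_h)$ across epochs to minimize $\sum_{h=0}^H N_h K_h$ subject to the accuracy and probability constraints, in the spirit of the allocation in \citet[Sec. 3.2]{pmlr-v178-weissmann22a}. The unfavorable compounding $N_h K_h\sim\epsilon_h^{-8}\delta_h^{-10}$ forces a geometric allocation $\epsilon_h\propto\gg^{\alpha h}$ together with a roughly uniform split $\delta_h\approx\delta/(H+1)$, which must be tuned carefully; substituting the prescribed $N_h$, $\eta_h$, $K_h$, summing the resulting geometric-type series, bounding $H$ by the logarithmic ceiling, and collecting every factor of $1-\gg$, $|\cA|$, $R^\ast$, $\delta$ and $\lVert 1/\mu\rVert_\infty$ is where the high exponents $(1-\gg)^{-17}$, $\delta^{-10}$, $|\cA|^{18}$, $\epsilon^{-8}$ and $\lVert 1/\mu\rVert_\infty^{8}$ emerge. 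Finally I would invoke \Cref{lem:asymptotic-equivalence} to replace $\log(\gg^{-1})$ by $(1-\gg)$ in the asymptotic reading of the bound.
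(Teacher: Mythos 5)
Your proposal is correct and follows essentially the same route as the paper's own proof: the per-epoch high-probability guarantee of \Cref{thm:conv-DynPG-stochastic-one-step} converted to sup-norm accuracy through the $\lVert 1/\mu\rVert_\infty$ factor (using pointwise non-negativity of $T^\ast(V_h^{\hat\bfpi_h^\ast})-V_{h+1}^{\hat\bfpi_{h+1}^\ast}$), a union bound with uniform confidence split $\delta/(H+1)$, the geometric error allocation $\epsilon_h\propto\gg^{h/2}$ inherited from the exact-gradient overall-error analysis, an extra training epoch $H$ to control the stationary-policy term, and the final bookkeeping yielding $|\cA|^{18}$, $(1-\gg)^{-17}$, $\delta^{-10}$, $\epsilon^{-8}$, $\lVert 1/\mu\rVert_\infty^{8}$ and the $\lceil\cdot\rceil^{18}$ factor. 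The only cosmetic difference is that you bound the stationary-policy error by inserting $T^\ast(V_H^{\hat\bfpi_H^\ast})$ and using the $\gg$-contraction toward $V_\infty^\ast$, whereas the paper's \Cref{lem:decompose-stationary-policy-error} applies the triangle inequality through $V_\infty^\ast$ directly (under the side condition $\epsilon_H\le(1-\gg)\sum_{h<H}\gg^{H-h-1}\epsilon_h$); both reduce to the same quantities and give the same order of constants.
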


Note first, that the constant $c_2$ in the sample complexity is indeed a natural number, hence independent of any model parameters. For the proof and concrete values of $H$, $(N_h)_{h=0}^{H}$, $(\eta_h)_{h=0}^H$ and $(K_h)_{h=0}^H$, see the detailed version \Cref{thm:sample-complexity-DynPG-stochastic-detailed}.
\begin{remark}
    We obtain again a sample complexity that scales at most polynomial in $(1-\gg)^{-1}$ and guarantees convergence in high probability. 
    In \citet{ding2022beyondEG} a sample complexity for softmax PG with entropy regularization is derived, where even larger batch sizes are required, which leads to an exponentially bad sample complexity with respect to $(1-\gg)^{-1}$. 
    Still, we do not expect the rates in \Cref{thm:sample-complexity-DynPG-stochastic} to be tight, but they prove that convergence in stochastic settings is achievable. We also want to remind the reader of the sample based example presented in \Cref{subsec:toy-example-DynPG}, where no batch is needed for the implementation. 
\end{remark}

\section{Application of DynPG}\label{sec:applications}

\subsection{Breaking the lower bound example with DynPG}\label{subsec:lower-bound-example}

In \citet{Li2023} a lower bound example was given for which softmax PG takes exponential time in the expected horizon  $(1-\gg)^{-1}$ to converge. More precisely, it is shown that at least $|\cS|^{2^{\Omega((1-\gamma)^{-1})}}$ 
gradient steps are required to approximate the optimal value function with $\epsilon = 0.15$ accuracy. 
The constructed MDP is designed such that the unknown constant $C(\gg)$ in the upper bound on softmax PG (see \Cref{tab:convergence_rate}) is exponential in the effective time horizon $(1-\gg)^{-1}$. 
To understand this behavior, we take a closer look at the definition of $C(\gg) :=  \big( \min_{s\in\cS} \inf_{n\geq 1} \pi^{\gt_n} (a^\ast(s)|s)\big)^{-1}$ in \citet{mei2022global},
where $a^\ast(s)$ denotes the optimal action in state $s$. 
To ensure small $C(\gamma)$ the probability of choosing the best action should not get close to $0$ during training.
But by construction in \citet{Li2023}, finding the best action in state $s$ decreases as long as the probability of choosing the best action a previous state is not close enough to 1. This results in the phenomenon that the gradient steps required to converge towards $a^\ast(s)$ grows at least geometrically as $s$ increases.
However, using DP one can solve the MDP within $|\cS|$ steps of exact value iteration \citep[Lemma 1]{Li2023}. This already implies that DynPG easily circumvents this exponential convergence time by employing DP. As future actions are determined by the previously trained policies, DynPG can evaluate the MDP under a non-stationary policy during training and thereby avoid the above described phenomenon. 
The upper bound on the complexity of softmax DynPG provides a theoretical proof that the needed gradient steps scale at most polynomial in $(1-\gg)^{-1}$!

\subsection{DynPG in Large or Complex Environments}\label{subsec:DynAC} 
In the following, we aim to discuss a challenge that may arise when applying DynPG in practice.
In very complex MDPs, where the policy parametrization needs to be rich, the DynPG approach in Algorithm \ref{alg:dynamicPG-for-discounting-forward} can suffer from a storage problem. The number of policy parameters that need to be stored for future decisions scale linearly with the number of training steps. In order to circumvent this problem for practical applications we propose an actor-critic based modification of DynPG, called Dynamic Actor-Critic (DynAC). The general idea behind actor-critic in vanilla PG is to introduce a so called critic, a second parametrized class of functions $(Q^w)_{w\in\R^l}$, which approximates the Q-function $Q^{\pi^\gt}$ in the policy gradient theorem.
Using the critic as estimator of the Q-values no more roll-outs are needed to estimate the rewards-to-go. In the actor-critic variant of DynPG, we include an additional training procedure after optimizing policy $\hat\pi_h^\ast$ to update the critic based on the old critic, $Q^{w_h} \approx \cQ_h^{\hat\pi_{h-1}^\ast, \dots , \hat\pi_0^\ast}$, and the newly trained policy, $\hat\pi_{h}^\ast$, using the relation:
\begin{align*}
    &Q_{h+1}^{\hat\pi_{h}^\ast, \dots , \hat\pi_0^\ast}(s,a) 
    = r(s,a) + \gg \sum_{s^\prime} p(s^\prime|s,a) \sum_{a^\prime} \hat\pi_{h}^\ast(a^\prime|s^\prime) \cQ_h^{\hat\pi_{h-1}^\ast, \dots , \hat\pi_0^\ast}.\notag
\end{align*}
The critic is used in the policy gradient theorem as an estimator for $Q_{h+1}^{\bfpi_h}$ such that the new gradient in DynAC is given by 
\begin{align*}
 G^{\text{Dyn-AC}} = \nabla_\gt \log(\pi^\gt(A|S)) Q^{w_{h+1}}(S,A),
\end{align*}
where $S\sim \mu$ and $A \sim \pi^{\gt}(\cdot|s)$ and $\pi^\gt$ is the policy we are currently training for epoch $h+1$. This way there is no need to store all trained policies only the currently trained policy and the current critic are needed. For applications where the performance of vanilla PG is poor and the parametrized policy needs to be very rich such that storing many policies might cause storage issues, we suggest to keep this variant of DynPG in mind.
We call this approach DynAC for dynamic actor-critic and summarize the steps in Algorithm~\ref{alg:dyn-AC}.

\begin{algorithm}[ht]
\caption{DynAC}\label{alg:dyn-AC}
\begin{algorithmic}
  \STATE{\textbf{Input:}} Initial state distribution $\mu$, class of policies $(\pi^\gt)_{\gt\in\R^d}$, class of critic parametrization $(Q_w)_{w\in\R^l}$.
  \STATE{\textbf{Result:}} Approximation of $\pi^\ast$, denoted as $\hat{\pi}^\ast$.
  
  \STATE Set $h= 0$\;
  \STATE Train $Q^{w_1}$ to approximate the reward function $r$\;
  \WHILE{Convergence criterion not met}
        \STATE Initialize $\theta_0$ (e.g., $\gt_0 \equiv 0$)\;
        \STATE Choose $\alpha_h$ and $ N_h$ (cf., Remark \ref{rem:on-N-and-eta})\; 
        \FOR{$n=0,\dots, N_h -1$}
            \STATE Sample $S\sim \mu$ and $A \sim \pi^\gt(\cdot|S)$\;
            \STATE Set $G^{\text{Dyn-AC}} = \nabla_\gt \log(\pi^\gt(A|S)) Q^{w_{h+1}}(S,A)$\;
            \STATE Update $\gt_{n+1} = \gt_n + \alpha_h G$\;
        \ENDFOR
        \STATE Set $\hat{\pi}_h^\ast = \pi^{\gt_{N_h}}$\;
        \STATE Train $w_{h+2}$ s.t. for all $s,a$: $Q^{w_{h+2}}(s,a) \approx \E_{S \sim p(\cdot|s,a), A \sim \hat{\pi}_h^\ast(\cdot|S)} [r(s,a) + \gg Q^{w_{h+1}}(S,A)]$ \;
        \STATE Set $h = h+1$\;
  \ENDWHILE
  \STATE Return $\hat\pi^\ast =\hat{\pi}_{h-1}^\ast$\;
\end{algorithmic}
\end{algorithm}

A theoretical analysis of this approach would require an assumption on the approximation error to train the Q-functions. As the gradients are no longer unbiased, convergence towards the global optimum cannot be  guaranteed and the bias errors will appear in the overall error. We leave further investigations of DynAC to future work, as an in-depth exploration would exceed the scope of this paper.

\section{Conclusion and Future Work}\label{sec:conclusion}

This paper contributes to the theoretical understanding of PG methods by directly introducing dynamic programming ideas to gradient based policy search. To this end, we have introduced DynPG, an algorithm that directly employs DP to solve $\gg$-discounted infinite horizon MDPs using gradient ascent methods. We provide mathematically rigorous performance estimates (for exact and estimated gradients) in simplified situations which are typical for theory papers on PG methods. The main strength of the algorithm, in contrast to vanilla PG, is to avoid exponential $\gamma$-dependence.

In this article we have provided theoretical evidence that dynamic programming, combined with PG methods, can address well-known issues inherent to vanilla PG. This opens up a range of further avenues for future research, such as:
\begin{itemize}
    \item Investigate whether the  complexity bounds in the exact gradient setting are tight (numerically or theoretically) and explore the possibility of eliminating the log-factor.
    \item Analyze the variant DynAC. \citet{koppel2023} present a sample complexity analysis of AC methods. Combining their results to control the critic error with the SGD analysis could result in convergence rates to achieve a small value function error in DynAC. 
    \item Perform a simulation study of DynPG and DynAC on standard tabular environments and explore how numerical instability issues in standard PG can be overcome.
    \item Combine DynPG and DynAC with  deep RL implementations for PG algorithms such as PPO, TRPO.
\end{itemize}

\paragraph{Acknowledgements}  Sara Klein thankfully acknowledges the funding support by the Hanns-Seidel-Stiftung e.V. and is grateful to the DFG RTG1953 ”Statistical Modeling of Complex Systems and Processes” for funding this research. Research of the second and third authors (XZ and TB) was sponsored in part by the US Army Research Office and was accomplished under Grant Number W911NF-24-1-0085.


\bibliographystyle{abbrvnat} 
\bibliography{references}

\appendix

\section{Notations}\label{sec:notations}

\begin{tabular}{l|l}
    $\cS$ &  {state space} \\
    $\cA$ &  {action space} \\
    $p(\cdot|s,a)$ & {transition kernel in $(s,a)$; in $\Delta(\cS)$} \\
    $r(s,a)$ & {expected reward in $(s,a)$} \\
    $\gg\in(0,1)$ & {discount factor} \\
    $R^\ast >0$ & {maximal absolute value of rewards} \\
    $\lVert x \rVert_\infty = \max_{i=1,\dots,d} |x_i|$ & {supremum norm for $x\in\R^d$} \\
    $\pi:\cS \to \Delta(\cA) \in \Pi$ & stationary policy (and stationary policy set) \\
    $\bfpi_h =\{\pi_{h-1},\dots,\pi_0\} \in \Pi^h$ & $h$-step non-stationary policy (and the set of all such policies)\\
    $\mu$ & {initial state distribution} \\
    $V_h^{\bfpi_h}(\mu)$ & $h$-step discounted value function; policy $\bfpi_h$, start distribution $\mu$ \\
    $V_\infty^\ast (\mu)$ & optimal discounted value function \\
    $\pi^\ast$ & optimal policy s.t. $V_\infty^\ast (\mu) = V_\infty^{\pi^\ast} (\mu)$\\
    $V_h^\ast(\mu)$ & optimal discounted $h$-step reward function \\
    $\bfpi_h^\ast$ & optimal $h$-step policy s.t. $V_h^\ast (\mu) = V_\infty^{\bfpi_h^\ast} (\mu)$\\
    $V_0^\pi \equiv 0$ and $\bfpi_0 = \{\}$ & {conventions used in the manuscript} \\
    $Q_h^{\bfpi_h}(s,a)$ & $h$-step Q-function \\
    $T_\gg^\pi$ & one-step Bellman operator under policy $\pi$ \\
    $T_\gg^\ast$ & one-step Bellman optimality operator \\
    $\pi^V$ & stationary greedy policy after $V \in\R^{|\cS|}$\\
    $Q^V(s,a)$ & $Q$-matrix obtained from $V \in\R^{|\cS|}$\\
    $\eta_h$, $N_h$ & {learning rate,  number of training steps in epoch $h$}\\
    $\Lambda$ & {set of trained policies in the algorithm} \\
    $\hat\pi_h^\ast$ & trained policy by DynPG in epoch $h$ \\
    $\hat\bfpi_h^\ast$ & set of trained policy by DynPG from $h-1$ to $0$\\
    $\pi_\gt$ & stationary parametrized policy \\
    $(\bfpi_{h})_\infty$ & apply $\bfpi_{h}$ in a loop for the infinite horizon problem
\end{tabular}

\section{Details of the Numerical Example}\label{sec:details-experimental-setup}

The example is an extension of \citet[Example 6.7]{sutton2018}, which has been used to compare different variants of Q-learning algorithms, as it suffers from overestimation of the Q-values. Since the overestimation problem in Q-learning and the committal behavior problem in policy gradient are closely related \citet{fujimoto18a}, we decided to use this example. 
The example is certainly artificial, as the number of actions and the rewards in the MDP are chosen to trap policy gradient from convergence. If we vary the MDP parameters from the current setting, DynPG consistently performs better, but the advantage over vanilla PG will become less significant.

The MDP is defined as follows:
\begin{itemize}
    \item The state space is given by $\mathcal{S} := \{0, \dots, 6\}$; States $0,3,6$ are the terminal states and states $1,2,4,5$ are the initial states. We sample $s_0$ uniformly from $\{1, 2, 4, 5\}$.
    \item The action space is given by $\mathcal{A} := \{0,\dots, 299\}$. 
    \item The state transitions and state-dependent actions are visualized in \Cref{fig:visualize-sutton-mdp}. Each node represents a state, with squared ones being terminal states and elliptical ones being initial states. Each arrow represents an action that deterministically transits from one state to another. From state $1$ to state $0$, there is a total of $300$ possible actions, succinctly visualized using the dots.
    \item Taking any $a \in \mathcal{A}$ from state $1$ reaches state $0$ and receives a reward of $r(1, a) \sim \cN(-0.3,10)$.
    \item Taking any of the $5$ possible actions from state $4$ reaches state $5$ and receives the reward of $r(4, a) \sim \cN(1.25,1.25)$.
    \item Taking any of the $5$ possible actions from state $5$ reaches state $6$ and receives the reward of $r(5, a) \sim \cN(1.25,1.25)$.
    \item Taking any of the $3$ possible actions from state $2$ receives the reward of $r(2,a) = 0$.
\end{itemize}

\begin{figure}[htb!]
    \centering 
    \includegraphics[width=0.45\textwidth]{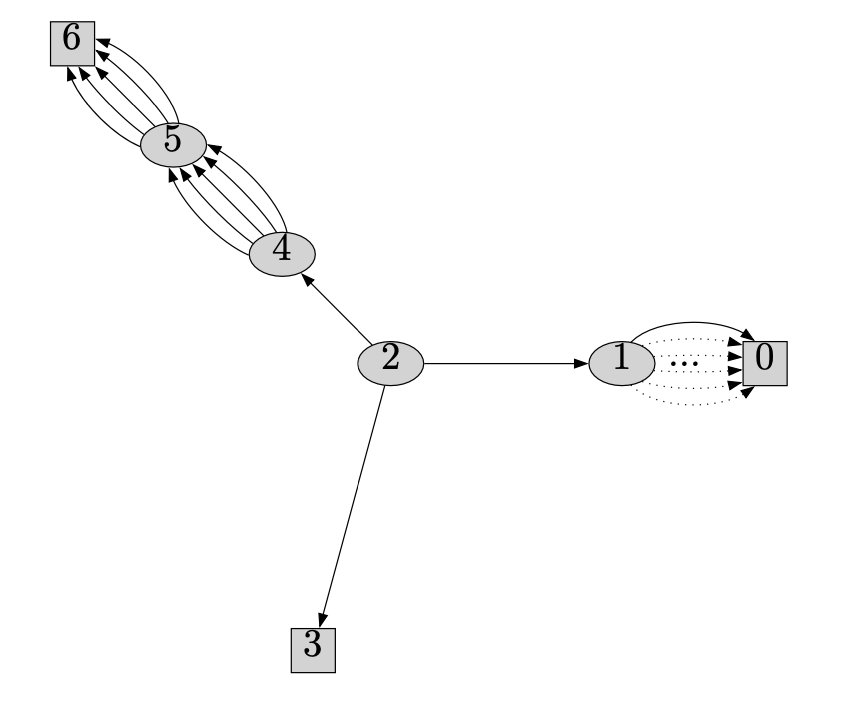}
    \caption{Visualization of the MDP state transitions. }\label{fig:visualize-sutton-mdp}
\end{figure}

\textbf{Experimental setup:} 
We evaluated the performance of (stochastic) vanilla PG and (stochastic) DynPG under two different discount factors,  $\gamma=0.9$ and $\gamma=0.99$. 
We used the tabular softmax parametrization studied in the convergence analysis for both algorithms. In DynPG, we used the $1$-batch Monte-Carlo estimator to sample the gradient according to Theorem A.3. In vanilla PG, we chose the classical REINFORCE $1$-batch estimator with truncation horizon $3$, such that the estimator is also unbiased due to the episodic setting (the maximum episode length in our example is $3$).

In DynPG, we chose the step size $\eta_h$ and number of training steps $N_h$ according to \Cref{thm:sample-complexity-detailed-value} and \Cref{thm:sample-complexity-detailed-overall}, and only fine-tuned the constants $2$ and $45$:
\begin{align*}
    \eta_h &= 2\frac{1-\gamma}{1-\gamma^h}, \quad \quad
    N_h = \Big\lceil 45 \frac{1-\gamma^{h+1}}{1-\gamma} \Big\rceil.
\end{align*}
We want to emphasize that the choices of $\eta_h$ and $N_h$ consistently perform well under different choices of $\gamma$ (see therefore a second convergence picture with $\gg=0.9$ in \Cref{fig:gg09}), which underscores that the algorithmic parameters developed in our theory also provide good guidance in practice. For a fair comparison, we fine-tuned $\eta = 2\frac{1-\gamma}{1-\gamma^6}$ for stochastic vanilla PG, which is much larger than the pessimistic $\eta = c*(1-\gamma)^3$ suggested in \citet{mei2022global}.

\begin{figure}
    \centering
    \includegraphics[width=0.5\linewidth]{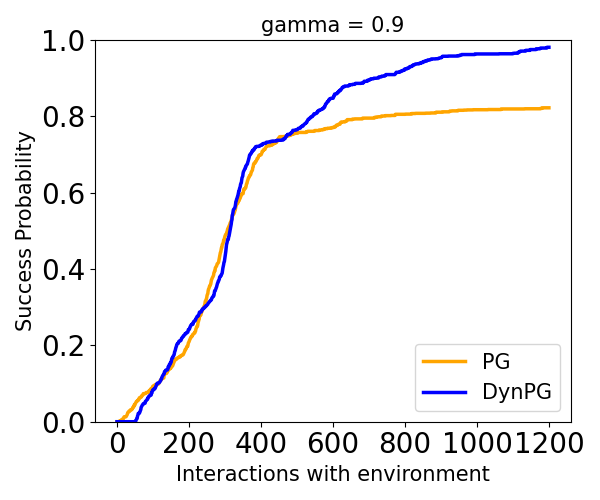}
    \caption{Success probability of achieving the sub-optimality gap of $\epsilon=0.01$ in the overall error.}
    \label{fig:gg09}
\end{figure}

\section{Auxiliary Results}\label{sec:auxillary-results}

\subsection{Q-function, Advantage Function and Greedy Policy}

First, we define the state-action value of $\bfpi_{h-1}\in \Pi^{h-1}$ at any $(s,a)\in\cS\times\cA$ as
\begin{equation*}
\begin{split}
    Q_h^{\bfpi_{h-1}} (s,a) &:= \underset{\substack{S_{t+1} \sim p(\cdot|S_t,A_t)\\ A_t \sim \pi_{h-t-1}(\cdot|S_t)}}{\E}\Big[\sum_{t=0}^{h-1} \gg^t r(S_t,A_t)\Big\vert S_0 =s, A_0=a\Big] \\
    &= r(s,a) + \gg \sum_{s^\prime\in\cS} p(s^\prime|s,a) V_{h-1}^{\bfpi_{h-1}}(s^\prime).
\end{split}
\end{equation*}

For $h\geq 1$ and $\bfpi_{h} \in \Pi^{h}$ we define the advantage function
\begin{align}\label{eq:advantage-function}
    A_{h}^{\bfpi_{h}}(s,a) :=  Q_{h}^{\bfpi_{h-1}}(s,a)  - V_{h}^{\bfpi_{h}}(s)
\end{align}

For $V\in \R^{|\cS|}$, a greedy policy $\d^V$ chooses the action that maximizes the $Q$-matrix\footnote{When multiple actions are equally optimal, $\pi^V$ selects an arbitrary one among them.}:
\begin{align*}
    Q^V(s,a):=r(s,a)+\gamma \sum_{s'\in\cS} p(s'|s,a)V(s'),\quad s\in \cS, \ a\in \cA.
\end{align*}
By the definitions of the Bellman operators, it holds that a policy $\d$ is greedy with respect to $V$ if and only if $T^\d V= T^\ast V$ \citep{BertsekasDimitriP2001Dpao}.

\subsection{Dynamic Programming - Finite Time Approximation}

\begin{lemma}\label{lem:distVast-VHast} 
    Let $V_0\equiv 0$.  For any $h\geq 1$, it holds that 
    \begin{enumerate}
        \item[(i)] $V_h^\ast(s) = T^\ast ( V_{h-1}^\ast)(s)$, for all $s\in\cS$,
        \item[(ii)] $\lVert V_\infty^\ast - V_{h}^{\ast} \rVert_\infty \leq \frac{\gg^h}{1-\gg} R^\ast.$
    \end{enumerate}
\end{lemma}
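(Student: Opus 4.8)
The plan is to prove both statements by working directly with the Bellman optimality operator $T^\ast$ and exploiting its $\gamma$-contraction property together with the convention $V_0 \equiv 0$.

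\textbf{Part (i).} I would prove this by induction on $h$. The claim is that the optimal $h$-step value function satisfies the dynamic programming recursion $V_h^\ast = T^\ast(V_{h-1}^\ast)$. The base case $h=1$ follows directly from the definitions: $V_1^\ast(s) = \sup_{\pi_0} V_1^{\{\pi_0\}}(s) = \max_a r(s,a) = T^\ast(V_0)(s)$ since $V_0 \equiv 0$. For the inductive step, I would expand the $h$-step value function by conditioning on the first action $A_0 \sim \pi_{h-1}(\cdot|S_0)$ and the first transition, writing
\begin{align*}
    V_h^{\{\pi_{h-1},\dots,\pi_0\}}(s) = \sum_a \pi_{h-1}(a|s)\Big( r(s,a) + \gg \sum_{s'} p(s'|s,a) V_{h-1}^{\{\pi_{h-2},\dots,\pi_0\}}(s')\Big).
\end{align*}
Taking the supremum over $\{\pi_{h-1},\dots,\pi_0\}$ factorizes: the inner supremum over $\{\pi_{h-2},\dots,\pi_0\}$ maximizes each $V_{h-1}$ term to $V_{h-1}^\ast(s')$ (this maximization is independent of the choice of $\pi_{h-1}$ since later epochs do not affect the first reward), and the outer supremum over $\pi_{h-1}$ then picks the maximizing action in each state, yielding exactly $T^\ast(V_{h-1}^\ast)(s)$. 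The key subtlety I must justify carefully is the interchange of the supremum with the summation over the first action, which holds because the optimal choice in later epochs can be made state-wise and does not constrain $\pi_{h-1}$.

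\textbf{Part (ii).} With part (i) established, I would bound $\lVert V_\infty^\ast - V_h^\ast \rVert_\infty$ using the contraction property. Since $T^\ast$ is $\gamma$-contracting with unique fixed point $V_\infty^\ast$ (stated in the preliminaries), I have $V_\infty^\ast = T^\ast(V_\infty^\ast)$ and $V_h^\ast = T^\ast(V_{h-1}^\ast)$, so
\begin{align*}
    \lVert V_\infty^\ast - V_h^\ast \rVert_\infty = \lVert T^\ast(V_\infty^\ast) - T^\ast(V_{h-1}^\ast)\rVert_\infty \leq \gg \lVert V_\infty^\ast - V_{h-1}^\ast\rVert_\infty.
\end{align*}
Iterating this down to $h=0$ gives $\lVert V_\infty^\ast - V_h^\ast\rVert_\infty \leq \gg^h \lVert V_\infty^\ast - V_0\rVert_\infty = \gg^h \lVert V_\infty^\ast\rVert_\infty$. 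To finish, I would bound $\lVert V_\infty^\ast\rVert_\infty \leq R^\ast/(1-\gg)$, which follows from the reward bound $|r(s,a)| \leq R^\ast$ and the geometric series $\sum_{t=0}^\infty \gg^t = (1-\gg)^{-1}$ applied to the definition of the infinite-horizon value function.

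The main obstacle I anticipate is the careful justification in part (i) that the supremum over the non-stationary policy sequence decouples into a state-wise outer maximization composed with the inner optimal continuation value $V_{h-1}^\ast$. This is intuitively the dynamic programming principle, but making the interchange of supremum and expectation rigorous (rather than hand-waving) requires noting that the choice of continuation policies $\{\pi_{h-2},\dots,\pi_0\}$ can be optimized independently for each successor state $s'$ and is unconstrained by the earlier policy $\pi_{h-1}$. Once this decoupling is clean, part (ii) is a routine contraction argument.
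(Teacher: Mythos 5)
Your proof is correct, and part (i) follows essentially the same route as the paper: a direct decoupling of the supremum over $\bfpi_h$ into a maximization over the first action and the optimal continuation value $V_{h-1}^\ast$, with the same key justification that the continuation policies can be chosen simultaneously optimal for all successor states (your induction framing is harmless but superfluous --- the inductive step never actually invokes the inductive hypothesis). Part (ii), however, takes a genuinely different route. The paper does not use contractivity at all: it fixes the optimal finite-horizon policy $\bfpi_h^\ast$ and the optimal stationary policy $\pi^\ast$, compares each side of $V_\infty^\ast - V_h^\ast$ pointwise against a common policy (looping $\bfpi_h^\ast$ forever in one case, truncating $\pi^\ast$ in the other), and reduces everything to the tail sum $\sum_{t\geq h}\gg^t R^\ast = \gg^h R^\ast/(1-\gg)$; this requires a case distinction on the sign of $V_\infty^\ast(s) - V_h^\ast(s)$. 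Your argument instead combines part (i) with the fixed-point identity $V_\infty^\ast = T^\ast(V_\infty^\ast)$ and the $\gg$-contraction of $T^\ast$, both asserted in the paper's preliminaries, and telescopes to $\gg^h \lVert V_\infty^\ast\rVert_\infty \leq \gg^h R^\ast/(1-\gg)$. Your route is shorter and avoids the case analysis entirely (incidentally, the paper's two cases as written assume a uniform sign across all states and are therefore not exhaustive, a gap easily repaired by arguing state by state --- your approach sidesteps this issue); what it costs is reliance on the Bellman optimality theorem, i.e.\ that the unique fixed point of $T^\ast$ coincides with $\sup_{\pi\in\Pi} V_\infty^\pi$, whereas the paper's tail-sum argument needs only the existence of a stationary optimal policy and is in that sense more elementary and self-contained.
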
 
\begin{proof}
    C.f. \citep[Prop. 1.2.1]{BertsekasDimitriP2001Dpao}\\
    The first claim, follows directly from the definition of the Bellman optimality operator
    \begin{align*}
         V_h^{\ast}(s)
        &= \sup_{\bfpi_h \in \Pi^h} \sum_{a\in\cA} \pi_0(a|s) \left(r(s,a) + \gg \sum_{s^\prime\in\cS} p(s^\prime|s,a) V_{h-1}^{\bfpi_{h-1}}(s^\prime) \right)\\
        &= \max_{a\in\cA} \left(r(s,a) + \gg \sum_{s^\prime\in\cS} p(s^\prime|s,a) \sup_{\bfpi_{h-1} \in \Pi^{h-1}} V_{h-1}^{\bfpi_{h-1}}(s^\prime) \right)\\
        &= T^\ast (V_{h-1}^\ast) (s).
    \end{align*}
    For the second part, we fix $h\geq 0$. Recall that $\pi^\ast \in \Pi$ denotes a stationary optimal policy for the infinite-time problem 
    and $\bfpi_{h}^\ast \in \Pi_{h}$ an optimal non-stationary policy.
    
    We divide the proof of the second claim into two cases. 
    
    \underline{Case 1:} Assume that $V_\infty^\ast(s) - V_{h}^{\ast}(s) \leq 0$ for all $s\in\cS$.
    Note that $V_\infty^\ast(s) \geq V_\infty^{(\bfpi_{h}^\ast)_\infty}(s)$, where $(\bfpi_{h}^\ast)_\infty$ denotes that we apply the finite time policy $\bfpi_{h}^\ast$ in a loop for the infinite time problem. We have 
    \begin{align*}
        V_{h}^{\bfpi_{h}^\ast}(s)  - V_\infty^\ast(s) 
        &\leq V_{h}^{\bfpi_{h}^\ast}(s)  - V_\infty^{(\bfpi_{h}^\ast)_\infty}(s) \\
        &= - \underset{\substack{S_0 =s , A_t \sim \pi_{h-(t \textrm{ mod }h)-1}(\cdot|S_t)\\S_{t+1} \sim p(\cdot|S_t,A_t)}}{\E} \left[ \sum_{t=h}^{\infty} \gg^t r(S_t,A_t)\right] \\
        &\leq \sum_{t=h}^{\infty} \gg^t   R^\ast
        = \frac{\gg^h}{1-\gg} R^\ast,
    \end{align*}
    where $R^\ast$ bounds the absolute value of rewards.
    
    \underline{Case 2:} Assume that $V^\ast(s) - V_{h}^{\ast}(s) > 0$ for all $s\in\cS$. Note that $V_{h}^{\bfpi_{h}^\ast}(s) \geq V_{h}^{\pi^\ast}(s)$ due to the optimality of $\bfpi_{h}^\ast$ over the finite-time horizon. Further, by the definition of $V_\infty^\ast(s) = V_\infty^{\pi^\ast}(s)$, we have
    \begin{align*}
        V_\infty^\ast(s) - V_{h}^{\bfpi_{h}^\ast}(s) 
        &\leq V_\infty^{\pi^\ast}(s) - V_{h}^{\pi^\ast}(s) \\
        &= \underset{\substack{S_0 =s , A_t \sim \pi^\ast(\cdot|S_t)\\S_{t+1} \sim p(\cdot|S_t,A_t)}}{\E} \left[ \sum_{t=h}^{\infty} \gg^t r(S_t,A_t)\right] \\
        &\leq \sum_{t=h}^{\infty} \gg^t   R^\ast
        = \frac{\gg^h}{1-\gg} R^\ast.
    \end{align*}
    
    Hence, we arrive at
    \begin{align*}
        \lVert V_\infty^\ast - V_{h}^{\pi_{h}^\ast} \rVert_\infty = \max_{s\in\cS} |V_\infty^\ast(s) - V_h^{\pi_h^\ast}(s) | \leq \frac{\gg^h}{1-\gg} R^\ast.
    \end{align*}
\end{proof}

\subsection{Error bound for stationary policy}
The following Lemma is inspired by error bounds presented in \citet[Sec. 1.3]{BertsekasDimitriP2001Dpao}. The purpose of this result is to establish validation of applying the last policy trained in DynPG as stationary policy.

\begin{lemma}\label{lem:control-stationary-policy}
    For any $V\in\R^{|\cS|}$ and policy $\pi\in\Pi$ it holds that
    \begin{align*}
        \lVert V - V_\infty^\pi\rVert_\infty \leq \frac{ \lVert T^\pi (V) - V \rVert_\infty}{1-\gg}.
    \end{align*}
\end{lemma}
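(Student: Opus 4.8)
The plan is to exploit the two facts recorded earlier: that $T^\pi$ is a $\gg$-contraction on $\R^{|\cS|}$ in the supremum norm, and that $V_\infty^\pi$ is its unique fixed point, i.e.\ $T^\pi(V_\infty^\pi) = V_\infty^\pi$. The whole argument is then a one-line combination of the triangle inequality with the contraction estimate, and no genuine obstacle arises; the only ``idea'' needed is to insert $T^\pi(V)$ as an intermediate term.

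Concretely, I would start from
\begin{align*}
    \lVert V - V_\infty^\pi \rVert_\infty
    &\leq \lVert V - T^\pi(V) \rVert_\infty + \lVert T^\pi(V) - V_\infty^\pi \rVert_\infty \\
    &= \lVert V - T^\pi(V) \rVert_\infty + \lVert T^\pi(V) - T^\pi(V_\infty^\pi) \rVert_\infty,
\end{align*}
where the equality uses the fixed-point identity $V_\infty^\pi = T^\pi(V_\infty^\pi)$. Applying the $\gg$-contraction property to the second term gives $\lVert T^\pi(V) - T^\pi(V_\infty^\pi) \rVert_\infty \leq \gg \lVert V - V_\infty^\pi \rVert_\infty$. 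Substituting this in and moving the resulting $\gg \lVert V - V_\infty^\pi \rVert_\infty$ to the left-hand side yields $(1-\gg)\lVert V - V_\infty^\pi \rVert_\infty \leq \lVert T^\pi(V) - V \rVert_\infty$, and dividing by $1-\gg>0$ is exactly the claim.

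An equivalent route, if one prefers to avoid the rearrangement, is to write $V_\infty^\pi = \lim_{n\to\infty}(T^\pi)^n V$ (again by the contraction/fixed-point property) and telescope: $V - V_\infty^\pi = \sum_{n=0}^\infty \big((T^\pi)^n V - (T^\pi)^{n+1}V\big)$, bound each summand by $\gg^n \lVert V - T^\pi(V)\rVert_\infty$ via $n$-fold contraction, and sum the geometric series to $\tfrac{1}{1-\gg}\lVert T^\pi(V)-V\rVert_\infty$. I would present the first (rearrangement) version as the main proof since it is shortest and uses nothing beyond the contraction constant; the telescoping version is worth a one-sentence remark as it makes transparent why the factor is precisely $(1-\gg)^{-1}$, matching the effective-horizon dependence that the stationary policy error contributes in \Cref{prop:error-decomposition}.
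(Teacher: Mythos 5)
Your proof is correct, but it takes a genuinely different and more elementary route than the paper. You insert $T^\pi(V)$ via the triangle inequality, use the fixed-point identity $V_\infty^\pi = T^\pi(V_\infty^\pi)$ together with the $\gg$-contraction of $T^\pi$ (both recorded in the paper's preliminaries), and rearrange to absorb the $\gg\lVert V - V_\infty^\pi\rVert_\infty$ term into the left-hand side; this is the standard fixed-point perturbation bound and needs only three lines. The paper instead follows a Bertsekas-style argument: it sets $g_\pi(s) = T^\pi(V)(s) - V(s)$ and $J_\pi(s) = V_\infty^\pi(s) - V(s)$, shows that $J_\pi$ solves the Bellman equation of an auxiliary MDP whose reward is the residual $\tilde r(s,a) = g_\pi(s)$, invokes uniqueness of that Bellman equation's solution to represent $J_\pi(s)$ as the discounted infinite sum $\sum_{k\ge 0}\gg^k\,\E[g_\pi(S_k)]$, and bounds the geometric series. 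What the paper's longer route buys is a signed, two-sided estimate,
\begin{align*}
    \frac{\min_{s'\in\cS} g_\pi(s')}{1-\gg} \;\leq\; V_\infty^\pi(s) - V(s) \;\leq\; \frac{\max_{s'\in\cS} g_\pi(s')}{1-\gg},
\end{align*}
which is finer information than the sup-norm inequality, although only the sup-norm version is actually stated and used later. Your telescoping remark, $V - V_\infty^\pi = \sum_{n\ge 0}\big((T^\pi)^n V - (T^\pi)^{n+1}V\big)$ with $n$-fold contraction, is in spirit the same geometric-series mechanism that underlies the paper's representation of $J_\pi$, so the two arguments are close relatives; yours simply bypasses the auxiliary-MDP interpretation. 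For the purposes of this lemma, your shorter proof is fully adequate.
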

\begin{proof}
    Consider any state $s \in \cS$. Since $V_\infty^\pi$ is the unique fixed point of the operator $T^\pi$, we have
    $$ V_\infty^\pi (s) = T^\pi (V_\infty^\pi) (s) = \sum_{a\in\cA} \pi(a|s) r(s,a) + \gg \sum_{a\in\cA} \pi(a|s) \sum_{s^\prime\in\cS} p(s^\prime|s,a) V_\infty^\pi(s^\prime). $$
    This implies that
    \begin{align*}
        &V_\infty^\pi(s) - V(s) \\
        &= \sum_{a\in\cA} \pi(a|s) r(s,a) + \gg \sum_{a\in\cA} \pi(a|s) \sum_{s^\prime\in\cS} p(s^\prime|s,a) V_\infty^\pi(s^\prime) - V(s)\\
        &= \sum_{a\in\cA} \pi(a|s) r(s,a) + \gg \sum_{a\in\cA} \pi(a|s) \sum_{s^\prime\in\cS} p(s^\prime|s,a)(V_\infty^\pi(s^\prime)-V(s')+V(s')) - V(s)\\
        &= \sum_{a\in\cA} \pi(a|s) r(s,a) + \gg \sum_{a\in\cA} \pi(a|s) \sum_{s^\prime\in\cS} p(s^\prime|s,a) V(s^\prime) \\
        &\quad +\gg \sum_{a\in\cA} \pi(a|s) \sum_{s^\prime\in\cS} p(s^\prime|s,a) (V_\infty^\pi(s^\prime) - V(s^\prime)) - V(s)\\
        &= T^\pi (V)(s)  - V(s) + \gg \sum_{a\in\cA} \pi(a|s) \sum_{s^\prime\in\cS} p(s^\prime|s,a) (V_\infty^\pi(s^\prime) - V(s^\prime))\,,
    \end{align*}
    for any $s\in\cS$. We define the mappings $s \mapsto g_\pi(s) = T^\pi (V)(s)  - V(s)$ and $s \mapsto J_\pi(s) =  V_\infty^\pi(s) - V(s)$, $s\in\cS$. Then, the above equation simplifies to
    \[ J_\pi(s) = g_\pi(s) + \gg \sum_{a\in\cA} \sum_{s^\prime\in\cS} \pi(a|s) p(s^\prime|s,a) J_\pi(s^\prime).\]
    By definition $J_\pi$ satisfies
    \begin{align*}
        J_\pi(s) = \sum_{a\in\cA} \pi(a|s) \left( g_\pi(s) + \gg  \sum_{s^\prime\in\cS} p(s^\prime|s,a) J_\pi(s^\prime)\right)
    \end{align*}
    for all $s\in \cS$, and therefore is a solution of the Bellman equation with an auxiliary reward function $(s,a) \mapsto \tilde r(s,a) = g_\pi(s)$. Note that this reward function is also bounded and by the uniqueness of the solution of the Bellman equation it has to hold that 
    \begin{align*}
        J_\pi(s) = \underset{\substack{S_0 =s, A_t \sim \pi(\cdot|S_t) \\ S_{t+1} \sim p(\cdot|S_t,A_t) }}{\E}[ \sum_{k=0}^\infty \gg^k  g_\pi(S_k)] = g_\pi(s) + \sum_{k=1}^\infty \gg^k \underset{\substack{S_0 =s, A_t \sim \pi(\cdot|S_t) \\ S_{t+1} \sim p(\cdot|S_t,A_t) }}{\E}[g_\pi(S_k)].
    \end{align*}
    Define $\underline{\beta} = \min_{s\in\cS} g_\pi(s)$ and $\overline{\beta} = \max_{s\in\cS} g_\pi(s)$. Then,
    \[ \frac{\underline \beta}{1-\gg} \leq g_\pi(s) + \frac{\gg \underline \beta}{1-\gg} \leq 
    J_\pi(s) \leq g_\pi(s) + \frac{\gg \overline \beta}{1-\gg} \leq \frac{\overline \beta}{1-\gg}. \]
    It follows that for any $s \in\cS$,
    $$|J_\pi(s)| \leq \frac{\max_{s\in\cS} |g_\pi(s)|}{1-\gg}.$$
    By definition of $g_\pi$ and $J_\pi$ this yields the claim.
\end{proof}

\subsection{Policy Gradient Theorem for DynPG}\label{sec:appendix-PGtheorem}

\begin{theorem}\label{thm:policy-gradient-dynPG}
Suppose that $\bfpi_h \in \Pi^h$ is fixed for any $h \geq 0$, with the convention $\bfpi_0 = \emptyset$. Then, for any differentiable parametrized family, say $(\pi_\gt)_{\gt\in\R^d}$ (i.e.~$\theta\mapsto \pi_\theta(s,a)$, $\theta\in\R^d$ is differentiable for all $(s,a)\in \cS\times \cA$), it holds that 
\begin{align*}
    \nabla_\gt V_{h+1}^{\{\pi_\gt, \bfpi_h\}} (s) = \underset{S=s, A \sim \pi_\gt(\cdot|S)}{\E}\left[ \nabla_\gt \log(\pi_\gt(A|S)) Q_{h+1}^{\bfpi_h}(S,A) \right].
\end{align*}
\end{theorem}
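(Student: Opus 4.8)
The plan is to exploit the defining structural feature of DynPG: among the policies appearing in $V_{h+1}^{\{\pi_\gt,\bfpi_h\}}$, only the leading one, $\pi_\gt$, carries the parameter $\gt$, while the continuation $\bfpi_h = \{\pi_{h-1},\dots,\pi_0\}$ is held fixed. This collapses the usual recursive policy-gradient argument into a single-step computation, which is precisely the point the paper wishes to highlight.

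First I would expand the value function over the first action. Conditioning on $S_0 = s$ and $A_0 = a\sim\pi_\gt(\cdot|s)$ in the definition \eqref{eq:def-value-functions} and peeling off the $t=0$ term yields the Bellman-type identity $V_{h+1}^{\{\pi_\gt,\bfpi_h\}}(s) = \sum_{a\in\cA}\pi_\gt(a|s)\,Q_{h+1}^{\bfpi_h}(s,a)$, where $Q_{h+1}^{\bfpi_h}(s,a) = r(s,a) + \gg\sum_{s'\in\cS} p(s'|s,a)V_h^{\bfpi_h}(s')$ is the $Q$-function defined in the appendix. This is just one application of $T^{\pi_\gt}$ to $V_h^{\bfpi_h}$.

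The crucial observation is then that $Q_{h+1}^{\bfpi_h}(s,a)$ does not depend on $\gt$: it is assembled only from the reward, the transition kernel, and $V_h^{\bfpi_h}$, and $\bfpi_h$ is fixed independently of $\gt$. This is exactly where DynPG departs from vanilla PG, in which the continuation value itself depends on $\pi_\gt$ and forces a recursion through the discounted state-occupancy measure. Differentiating the finite sum term by term — legitimate since $\cS$ and $\cA$ are finite and $\gt\mapsto\pi_\gt(a|s)$ is differentiable — gives $\nabla_\gt V_{h+1}^{\{\pi_\gt,\bfpi_h\}}(s) = \sum_{a\in\cA}\nabla_\gt\pi_\gt(a|s)\,Q_{h+1}^{\bfpi_h}(s,a)$.

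Finally I would apply the log-derivative identity $\nabla_\gt\pi_\gt(a|s) = \pi_\gt(a|s)\,\nabla_\gt\log\pi_\gt(a|s)$, valid wherever $\pi_\gt(a|s)>0$ (which holds throughout for the softmax family), and rewrite the resulting weighted sum as the claimed expectation over $A\sim\pi_\gt(\cdot|s)$. I do not anticipate a genuine obstacle: the whole message is that the Markov/dynamic-programming structure eliminates the hard part of the classical policy-gradient theorem — tracking the cumulative state-visitation distribution — so the proof reduces to a single differentiation of a contextual-bandit objective. The only points warranting a sentence of care are the term-by-term differentiation (trivial by finiteness) and the strict positivity of $\pi_\gt$ needed for the log-derivative step.
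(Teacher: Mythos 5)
Your proof is correct, but it takes a different (and more direct) route than the paper. The paper's proof mimics the classical likelihood-ratio argument: it writes $V_{h+1}^{\{\pi_\gt,\bfpi_h\}}(s)$ as a finite sum over trajectories $\tau_h$, applies the log-trick to the full trajectory probability $p_s^{\{\pi_\gt,\bfpi_h\}}(\tau_h)$ — observing that only the first factor $\pi_\gt(a_0|s_0)$ depends on $\gt$ — and then invokes the tower property to condition on $(S_0,A_0)$ and recognize $Q_{h+1}^{\bfpi_h}$. You instead collapse the value function immediately via the one-step Bellman expansion $V_{h+1}^{\{\pi_\gt,\bfpi_h\}}(s)=\sum_{a\in\cA}\pi_\gt(a|s)\,Q_{h+1}^{\bfpi_h}(s,a)$, note that $Q_{h+1}^{\bfpi_h}$ is $\gt$-free because $\bfpi_h$ is fixed, differentiate the finite sum term by term, and finish with the log-derivative identity. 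Both arguments hinge on the same structural fact — only the leading policy carries the parameter — but your version avoids the trajectory-space sum and the tower-property step entirely, which is cleaner as a proof of the stated identity. What the paper's trajectory-level formulation buys in exchange is the intermediate identity $\nabla_\gt V_{h+1}^{\{\pi_\gt,\bfpi_h\}}(s)=\E\bigl[\nabla_\gt\log\pi_\gt(A_0|S_0)\sum_{k}\gg^k r(S_k,A_k)\bigr]$, which directly justifies the unbiasedness of the Monte-Carlo gradient estimator $\widehat{G}_{K_h}$ used later in the stochastic analysis (Lemma~\ref{lem:unbiased-bounded-var}); with your route one would recover that by applying the tower property in the reverse direction. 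One shared caveat, which you correctly flag and the paper leaves implicit: the log-derivative step needs $\pi_\gt(a|s)>0$ (otherwise the gradient contribution $\nabla_\gt\pi_\gt(a|s)\,Q_{h+1}^{\bfpi_h}(s,a)$ of a zero-probability action is dropped by the expectation form), so the identity as stated really holds for strictly positive families such as softmax, or under the convention that such boundary terms vanish.
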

\begin{proof}
    The proof can be found in \citet[Thm A.6]{klein2024beyond}. However, we will need an index shift and additionally to consider the discount factor $\gamma$.
    An $h$-step trajectory $\tau_h = (s_0, a_0, \dots, s_{h-1}, a_{h-1})$ under policy $\{\pi_\gt, \bfpi_h\}$ and initial state distribution $\delta_s$ occurs with probability 
   \begin{align*}
     p_s^{\{\pi_\gt, \bfpi_h\}}(\tau_h) = \delta_s(s_0) \pi_\gt(a_0|s_0) \prod_{k=1}^{h-1} p(s_{k}|s_{k-1},a_{k-1}) {\pi}_k(a_k |s_k).
   \end{align*}
   Then, the log trick yields that 
   \begin{align*}
     &\nabla_{\theta} \log(p_s^{\{\pi_\gt, \bfpi_h\}}(\tau_h)) \\
     &= \nabla_{\theta}  \Big(\log(\delta_s(s_0)) + \log(\pi_\gt(a_0|s_0))  +  \sum_{k=1}^{h-1} \log(p(s_{k}|s_{k-1},a_{k-1})) + \log({\pi}_k(a_k |s_k))\Big)\\
     &= \nabla_{\theta}  \log(\pi_\gt(a_0|s_0)).
   \end{align*}
   Let $\cW_h$ be the set of all trajectories from $0$ to $h-1$. Then, the set $\cW_h$ is finite due to the assumption that state and action space are finite. For $s \in \cS$ we have
   \begin{align*}
     \begin{split}
      &\nabla_\gt V_{h+1}^{\{\pi_\gt, \bfpi_h\}} (s)  \\
      &= \nabla_\gt \sum_{\tau_h\in\cW_h} p_s^{\{\pi_\gt, \bfpi_h\}}(\tau_h) \sum_{k=0}^{h-1} \gg^k r(s_k,a_k)\\
     &= \sum_{\tau_h\in\cW_h} p_s^{\{\pi_\gt, \bfpi_h\}}(\tau_h) \nabla_\gt \log(p_s^{\{\pi_\gt, \bfpi_h\}}(\tau_h))  \sum_{k=0}^{h-1} \gg^k r(s_k,a_k)\\
     &= \sum_{\tau_h\in\cW_h} p_s^{\{\pi_\gt, \bfpi_h\}}(\tau_h) \nabla_\gt \log(\pi_\gt(a_0|s_0)) \sum_{k=0}^{h-1} \gg^k r(s_k,a_k)\\
     &= \underset{\substack{S_0=s, A_0 \sim \pi_\gt(\cdot|S) \\ S_{t+1} \sim p(\cdot|A_t,S_t), A_t\sim \pi_{h-t-1}(\cdot|S_t)}}{\E}\Big[\nabla_{\theta}  \log(\pi^\gt(A_0|S_0)) \sum_{k=0}^{h-1} \gg^k r(S_k,A_k)\Big]\\ 
     &=\underset{S_0=s, A_0 \sim \pi_\gt(\cdot|S) }{\E}\Big[\nabla_{\theta}  \log(\pi^\gt(A_0|S_0)) \underset{ S_{t+1} \sim p(\cdot|A_t,S_t), A_t\sim \pi_{h-t-1}(\cdot|S_t)}{\E}\Big[\sum_{k=0}^{h-1} \gg^k r(S_k,A_k) \big|S_0,A_0\Big]\Big]\\
     &=\underset{S=s, A \sim \pi_\gt(\cdot|S)}{\E}\left[ \nabla_\gt \log(\pi_\gt(A|S)) Q_{h+1}^{\bfpi_h}(S,A) \right].
     \end{split}
   \end{align*}
\end{proof}

\subsection{Performance Difference Lemma}

\begin{lemma}\label{lem:performance-difference}
    Let $\bfpi_{h+1}, \bfpi_{h+1}^\prime \in \Pi^{h+1}$. Then, it holds that
    \begin{equation*}
        (i)\quad V_{h+1}^{\bfpi_{h+1}} (s) -  V_{h+1}^{\bfpi_{h+1}^\prime} (s)
        = \underset{\substack{S_0=s, A_t \sim \pi_{h-t}(\cdot|S_t) \\ S_{t+1}\sim p(\cdot|S_t,A_t)}}{\E}\Big[ \sum\limits_{t=0}^{h} \gg^t A_{h+1-t}^{\bfpi_{h+1-t}^\prime}(S_t,A_t) \Big]\,.
    \end{equation*}
    (ii) If both policies only differ in the first policy, i.e. $\bfpi_h = \bfpi_h^\prime$, then the above equation simplifies to
    \begin{equation}\label{eq:performance-difference}
        V_{h+1}^{\bfpi_{h+1}} (s) -  V_{h+1}^{\bfpi_{h+1}^\prime} (s)
        = \underset{S_0=s, A_0 \sim \pi_h(\cdot|S_0) }{\E}\Big[ A_{h+1}^{\bfpi_{h+1}^\prime}(S_0,A_0) \Big].
    \end{equation}
\end{lemma}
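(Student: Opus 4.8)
The plan is to establish part (i) by a single telescoping argument along a trajectory generated by the unprimed policy $\bfpi_{h+1}$, and then obtain part (ii) as a corollary via the fact that advantages average to zero under the action distribution of their own leading policy.

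First I would fix the trajectory $(S_0,A_0,\dots,S_h,A_h)$ with $S_0=s$, $A_t\sim\pi_{h-t}(\cdot|S_t)$ and $S_{t+1}\sim p(\cdot|S_t,A_t)$, i.e.\ generated under $\bfpi_{h+1}$. For each summand I would unfold the advantage via \eqref{eq:advantage-function}, writing $A_{h+1-t}^{\bfpi_{h+1-t}^\prime}(S_t,A_t)=Q_{h+1-t}^{\bfpi_{h-t}^\prime}(S_t,A_t)-V_{h+1-t}^{\bfpi_{h+1-t}^\prime}(S_t)$, and then expand the $Q$-term by $Q_{h+1-t}^{\bfpi_{h-t}^\prime}(s,a)=r(s,a)+\gg\sum_{s'}p(s'|s,a)V_{h-t}^{\bfpi_{h-t}^\prime}(s')$. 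Taking the conditional expectation $\E[\,\cdot\mid S_t,A_t]$ turns the $p$-weighted sum into $\E[V_{h-t}^{\bfpi_{h-t}^\prime}(S_{t+1})\mid S_t,A_t]$, so that in expectation over the full trajectory each term reads
\[
\E\big[\gg^t A_{h+1-t}^{\bfpi_{h+1-t}^\prime}(S_t,A_t)\big]
=\E\big[\gg^t r(S_t,A_t)\big]
+\E\big[\gg^{t+1}V_{h-t}^{\bfpi_{h-t}^\prime}(S_{t+1})\big]
-\E\big[\gg^t V_{h+1-t}^{\bfpi_{h+1-t}^\prime}(S_t)\big].
\]

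Summing over $t=0,\dots,h$, the reward contributions assemble into $V_{h+1}^{\bfpi_{h+1}}(s)$ by the value-function definition \eqref{eq:def-value-functions}. The two families of value terms then telescope once I reindex the middle sum by $u=t+1$: writing $c_t:=\E[\gg^t V_{h+1-t}^{\bfpi_{h+1-t}^\prime}(S_t)]$, the middle sum equals $\sum_{t=1}^{h}c_t$ because the top boundary term at $u=h+1$ carries $V_0^{\bfpi_0^\prime}\equiv 0$ and drops, while the third sum is $-\sum_{t=0}^{h}c_t$; their combination is $-c_0=-V_{h+1}^{\bfpi_{h+1}^\prime}(s)$ since $S_0=s$. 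This yields claim (i). I expect the main obstacle to be purely bookkeeping: keeping the ``remaining-horizon'' superscripts $h+1-t$ and $h-t$ aligned across the shift $u=t+1$ and verifying that the terminal term genuinely vanishes under the convention $V_0\equiv 0$; once the indices are pinned down the cancellation is mechanical.

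For part (ii) I would use the identity $\E_{A\sim\pi_{m-1}(\cdot|s)}[A_m^{\bfpi_m}(s,A)]=0$, which is immediate from $V_m^{\bfpi_m}(s)=\E_{A\sim\pi_{m-1}(\cdot|s)}[Q_m^{\bfpi_{m-1}}(s,A)]$. Under the hypothesis $\bfpi_h=\bfpi_h^\prime$ the two policies agree in every coordinate except the leading one, so for each $t\geq 1$ the action $A_t$ is drawn from $\pi_{h-t}=\pi_{h-t}^\prime$, which is exactly the leading policy of the truncated family $\bfpi_{h+1-t}^\prime$ appearing in $A_{h+1-t}^{\bfpi_{h+1-t}^\prime}$. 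Hence every term with $t\geq 1$ has vanishing conditional expectation and drops from the sum in (i), leaving only the $t=0$ term $\E_{S_0=s,\,A_0\sim\pi_h(\cdot|S_0)}[A_{h+1}^{\bfpi_{h+1}^\prime}(S_0,A_0)]$, which is \eqref{eq:performance-difference}.
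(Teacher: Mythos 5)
Your proposal is correct and takes essentially the same approach as the paper: part (i) is the same telescoping identity (the paper inserts and removes the terms $\sum_t \gg^t V_{h+1-t}^{\bfpi_{h+1-t}^\prime}(S_t)$ into the value difference and reindexes, while you expand the advantage sum and telescope in the reverse direction, with the same use of the convention $V_0\equiv 0$ at the boundary and the $Q$-function expansion under conditional expectation). Part (ii) matches the paper's argument exactly, namely that for $t\geq 1$ the advantage $A_{h+1-t}^{\bfpi_{h+1-t}^\prime}$ has zero conditional mean because $A_t$ is drawn from the leading policy of the truncated primed family.
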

\begin{proof} A similar proof can be found in \citet[Thm A.6]{klein2024beyond}. However, we will need an index shift and additionally to consider the discount factor $\gamma$.
    First, let $\bfpi_{h+1}, \bfpi_{h+1}^\prime \in \Pi^{h+1}$ be two arbitrary policies. We have
    \begin{equation*}
    \begin{split}
        &V_{h+1}^{\bfpi_{h+1}} (s) -  V_{h+1}^{\bfpi_{h+1}^\prime} (s)\\
        &= \underset{\substack{S_0=s, A_t \sim \pi_{h-t}(\cdot|S_t) \\ S_{t+1}\sim p(\cdot|S_t,A_t)}}{\E}\Big[ \sum\limits_{t=0}^{h} \gg^t r(S_t,A_t) \Big] - V_{h+1}^{\bfpi_{h+1}^\prime}(s)\\
        &= \underset{\substack{S_0=s, A_t \sim \pi_{h-t}(\cdot|S_t) \\ S_{t+1}\sim p(\cdot|S_t,A_t)}}{\E}\Big[ \sum\limits_{t=0}^{h} \gg^t r(S_t,A_t) + \sum\limits_{t=0}^{h} \gg^t V_{h+1-t}^{\bfpi_{h+1-t}^\prime}(S_t) - \sum\limits_{t=0}^{h} \gg^t V_{h+1-t}^{\bfpi_{h+1-t}^\prime}(S_t)\Big] - V_{h+1}^{\bfpi_{h+1}^\prime}(s)\\
        &= \underset{\substack{S_0=s, A_t \sim \pi_{h-t}(\cdot|S_t) \\ S_{t+1}\sim p(\cdot|S_t,A_t)}}{\E}\Big[ \sum\limits_{t=0}^{h} \gg^t r(S_t,A_t) 
        + \sum\limits_{t=1}^{h}\gg^t V_{h+1-t}^{\bfpi_{h+1-t}^\prime}(S_t) 
        - \sum\limits_{t=0}^{h}\gg^t V_{h+1-t}^{\bfpi_{h+1-t}^\prime}(S_t)\Big]\\
    \end{split}
    \end{equation*}
    \begin{equation*}
    \begin{split}
        &= \underset{\substack{S_0=s, A_t \sim \pi_{h-t}(\cdot|S_t) \\ S_{t+1}\sim p(\cdot|S_t,A_t)}}{\E}\Big[ \sum\limits_{t=0}^{h} \gg^t r(S_t,A_t) 
        + \sum\limits_{t=0}^{h-1}\gg^{t+1} V_{h-t}^{\bfpi_{h-t}^\prime}(S_{t+1}) 
        - \sum\limits_{t=0}^{h} \gg^t V_{h+1-t}^{\bfpi_{h+1-t}^\prime}(S_t)\Big]\\
        &= \underset{\substack{S_0=s, A_t \sim \pi_{h-t}(\cdot|S_t) \\ S_{t+1}\sim p(\cdot|S_t,A_t)}}{\E}\Big[ \sum\limits_{t=0}^{h} \gg^t \left(  r(S_t,A_t) 
        +  \gg V_{h-t}^{\bfpi_{h-t}^\prime}(S_{t+1}) 
        -  V_{h+1-t}^{\bfpi_{h+1-t}^\prime}(S_t)\right)\Big]\\
        &= \underset{\substack{S_0=s, A_t \sim \pi_{h-t}(\cdot|S_t) \\ S_{t+1}\sim p(\cdot|S_t,A_t)}}{\E}\Big[ \sum\limits_{t=0}^{h} \gg^t \left( Q_{h+1-t}^{\bfpi_{h-t}^\prime}(S_t,A_t) 
        -  V_{h+1-t}^{\bfpi_{h+1-t}^\prime}(S_t)\right)\Big]\\
        &= \underset{\substack{S_0=s, A_t \sim \pi_{h-t}(\cdot|S_t) \\ S_{t+1}\sim p(\cdot|S_t,A_t)}}{\E}\Big[ \sum\limits_{t=0}^{h} \gg^t A_{h+1-t}^{\bfpi_{h+1-t}^\prime}(S_t,A_t) \Big],
    \end{split}
    \end{equation*}
    where in the fifth equation we used the convention $V_0 \equiv 0$, in the sixth equation the definition of the Q-function, 
    and in the last equation the definition of the advantage function.
    Second, suppose that $\bfpi_{h+1}$ and $\bfpi_{h+1}^\prime$ agree on all policies besides $\pi_{h}$, i.e. $\bfpi_{h}=\bfpi_{h}^\prime$. Then, for any $t>0$, it holds that
    \begin{align*}
        &\underset{\substack{S_0=s, A_t \sim \pi_{h-t}(\cdot|S_t) \\ S_{t+1}\sim p(\cdot|S_t,A_t)}}{\E}\Big[ A_{h+1-t}^{\bfpi_{h+1-t}^\prime}(S_t,A_t) \Big]\\
        &= \sum_{a\in\cA} \pi_h(a|s) \sum_{s^\prime\in\cS}p(s^\prime|s,a) \underset{\substack{S_0=s^\prime, A_t \sim \pi_{h-t-1}(\cdot|S_t) \\ S_{t+1}\sim p(\cdot|S_t,A_t)}}{\E}\Big[ A_{h+1-t}^{\bfpi_{h+1-t}^\prime}(S_t,A_t) \Big]\\
        &= \sum_{a\in\cA} \pi_h(a|s) \sum_{s^\prime\in\cS}p(s^\prime|s,a) \left(\underset{\substack{S_0=s^\prime, A_t \sim \pi_{h-t-1}^\prime(\cdot|S_t) \\ S_{t+1}\sim p(\cdot|S_t,A_t)}}{\E}\Big[ Q_{h+1-t}^{\bfpi_{h-t}^\prime}(S_t,A_t) - V_{h+1-t}^{\bfpi_{h+1-t}^\prime}(S_t)  \Big]\right)\\
        &=0.
    \end{align*}
    This proves the claim.
\end{proof}

\subsection{Convergence under Gradient Domination}

\begin{lemma}\label{lem:conv-rate-for-smooth-dominated-fucnctions}
    Let $f:\R^d\to \R$ be $L$-smooth (i.e. $L$-Lipschitz continuity of the gradient), $f^\ast = \sup_x f(x) <\infty$. Denote the gradient ascent procedure by $x_{n+1} = x_n + \alpha \nabla f(x_n)$ for some $x_1 \in\R^d$ and $\alpha = \frac{1}{L}$ and assume that $f$ satisfies the following gradient domination property for some $b>0$ along the gradient trajectory, 
    \[ \lVert \nabla f(x_n) \rVert_2 \leq b (f^\ast - f(x_n)) \quad \forall n \geq 1.\]
    Then the convergence rate
    \[ f^\ast - f(x_n) \leq \frac{2}{\alpha b^2 n},\]
    holds true if $f^\ast - f(x_1) \leq \frac{2}{\alpha b^2}$.
\end{lemma}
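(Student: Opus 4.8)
The plan is to run the classical smoothness-plus-gradient-domination argument that converts one gradient step into a self-improving quadratic recursion for the optimality gap. Write $\delta_n := f^\ast - f(x_n)$, which is nonnegative since $f^\ast = \sup_x f(x)$. First I would invoke the descent lemma for $L$-smooth functions along the ascent step $x_{n+1} = x_n + \alpha \nabla f(x_n)$: $L$-smoothness gives $f(x_{n+1}) \ge f(x_n) + \langle \nabla f(x_n), x_{n+1} - x_n \rangle - \frac{L}{2}\lVert x_{n+1} - x_n\rVert_2^2$, and substituting the update with the choice $\alpha = 1/L$ collapses the right-hand side to $f(x_n) + \frac{\alpha}{2}\lVert \nabla f(x_n)\rVert_2^2$. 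Equivalently $\delta_{n+1} \le \delta_n - \frac{\alpha}{2}\lVert \nabla f(x_n)\rVert_2^2$, which in particular shows that $(\delta_n)$ is nonincreasing.

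The second step is to substitute the gradient-domination property into this descent inequality, replacing the squared gradient norm by the squared gap and thereby producing the quadratic recursion $\delta_{n+1} \le \delta_n - \frac{\alpha b^2}{2}\delta_n^2$. Setting $c := \frac{\alpha b^2}{2}$, the remainder is purely the unrolling of $\delta_{n+1} \le \delta_n - c\delta_n^2$. The standard device is to pass to reciprocals: whenever $c\delta_n < 1$ and $\delta_n > 0$, dividing through and using $\frac{1}{1-x} \ge 1 + x$ for $x \in [0,1)$ gives $\frac{1}{\delta_{n+1}} \ge \frac{1}{\delta_n}\cdot\frac{1}{1 - c\delta_n} \ge \frac{1}{\delta_n} + c$. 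Telescoping from $1$ to $n$ yields $\frac{1}{\delta_n} \ge \frac{1}{\delta_1} + (n-1)c$, and the hypothesis $\delta_1 \le \frac{2}{\alpha b^2} = \frac{1}{c}$ (equivalently $\frac{1}{\delta_1} \ge c$) upgrades this to $\frac{1}{\delta_n} \ge nc$, i.e. $\delta_n \le \frac{1}{nc} = \frac{2}{\alpha b^2 n}$, which is the claimed rate.

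The hard part will be the bookkeeping that legitimizes the reciprocal step, namely ensuring the iterates never leave the regime $c\delta_n \le 1$ in which $1 - c\delta_n \ge 0$ and the division is valid. I would settle this by induction: the initial condition gives $c\delta_1 \le 1$, and since $(\delta_n)$ is nonincreasing by the descent inequality, $c\delta_n \le c\delta_1 \le 1$ for every $n$. The only degenerate case is $c\delta_n = 1$ (which can occur only when $\delta_1 = \frac{2}{\alpha b^2}$ at the first step), where the recursion already forces $\delta_{n+1} \le \delta_n(1 - c\delta_n) = 0$; then the optimum is reached and the bound holds trivially for all larger indices. Likewise, if $\delta_n = 0$ the claim is immediate, so one may assume $\delta_n > 0$ throughout when taking reciprocals. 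Apart from this case analysis the argument is routine, with all the substantive content concentrated in the first two steps, where $L$-smoothness guarantees per-step progress and the gradient-domination property turns that progress into the quadratic recursion that drives the $O(1/n)$ decay.
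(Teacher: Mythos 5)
Your proof is correct and takes essentially the same route as the paper: the descent lemma with $\alpha=1/L$ combined with gradient domination (used, as in the paper's own proof, in the direction $\lVert \nabla f(x_n)\rVert_2 \ge b\,(f^\ast-f(x_n))$, which is clearly the intended reading of the statement) yields the quadratic recursion $\delta_{n+1}\le \delta_n - \tfrac{\alpha b^2}{2}\delta_n^2$. The only difference is that where the paper concludes by citing \citet[Lem.~B.7]{klein2024beyond} to unroll this recursion, you prove that step directly via the standard reciprocal--telescoping argument (including the degenerate cases $\delta_n=0$ and $c\delta_n=1$), which makes the argument self-contained but is otherwise the same proof.
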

\begin{proof}
    We apply the descent lemma of smooth functions \citep[Lem 5.7]{beck2017}  on $-f$ and obtain
    \begin{align*}
    f(y) \geq f(x)+ \nabla f(x)^T (y-x) -\frac{L}{2} \lVert y-x \rVert^2.
   \end{align*}
   Now for gradient ascent updates we have that 
   \begin{align*}
    f(x_{n+1}) &\geq f(x_n) + \nabla f(x_n)^T (x_{n+1}-x_n) - \frac{L}{2} \lVert x_{n+1} -x_n \rVert^2 \\
    &= f(x_n) + \alpha \lVert\nabla f(x_n)\rVert^2 - \frac{L\alpha^2}{2}\lVert\nabla f(x_n)\rVert^2 \\
    &= f(x_n) + \Big( \alpha - \frac{L \alpha^2}{2} \Big) \lVert\nabla f(x_n)\rVert^2.
   \end{align*}
   It follows that 
   \begin{align*}
    f^\ast - f(x_{n+1}) &\leq f^\ast - f(x_n) - \Big( \alpha - \frac{L \alpha^2}{2} \Big) \lVert\nabla f(x_n)\rVert^2.
   \end{align*}
   Exploiting the gradient domination along the gradient trajectory  results in
   \begin{align*}
      f^\ast - f(x_{n+1}) &\leq f^\ast - f(x_n) - \Big( \alpha - \frac{L \alpha^2}{2} \Big) b^2 (f^\ast - f(x_n))^2.
   \end{align*}
   As $\alpha = \frac{1}{L}$, we have 
   \begin{align*}
      f^\ast - f(x_{n+1}) &\leq f^\ast - f(x_n) -  \frac{b^2}{2L}  (f^\ast - f(n_k))^2.
   \end{align*}
   When $f^\ast - f(x_1) \leq \frac{2}{\alpha b^2}$, then $ f^\ast - f(x_n) \leq \frac{2}{\alpha b^2 n}$ by \citet[Lem. B.7]{klein2024beyond}.
    
\end{proof}

\section{Proofs of Section \ref{sec:theory}}\label{sec:proof-theory}

\subsection{Proof for Section~\ref{sec:error_decomposition}}\label{sec:proof-error-decomposition}
\begin{proof}[Proof for Proposition~\ref{prop:error-decomposition}]
    Recall the claim:
    \begin{equation*}
\begin{split}
    \lVert V_\infty^\ast - V_\infty^{\hat\pi_H^\ast} \rVert_\infty
    &\leq \Big\lVert V_\infty^\ast - \sup_\gt V_\infty^{\pi_\gt} \Big\rVert_\infty 
    +\Big\lVert \sup_\gt V_\infty^{\pi_\gt} - \sup_{\gt_{0},\dots,\gt_{H-1}} V_H^{\{\pi_{\gt_{H-1}},\dots \pi_{\gt_0}\}} \Big\rVert_\infty \\
    & +\Big\lVert \sup_{\gt_{0},\dots,\gt_{H-1}} V_H^{\{\pi_{\gt_{H-1}},\dots \pi_{\gt_0}\}} - V_H^{\hat\bfpi_H^\ast} \Big\rVert_\infty
    +\lVert V_H^{\hat\bfpi_H^\ast} - V_\infty^{\hat\pi_H^\ast} \rVert_\infty \\
    &\leq \Big\lVert V_\infty^\ast - \sup_\gt V_\infty^{\pi_\gt} \Big\rVert_\infty  
    + \frac{\gg^H R^\ast}{1-\gg}\\
    &+ \sum_{h=0}^{H-1} \gg^{H-h-1} \Big\lVert \sup_\gt T^{\pi_\gt} (V_{h}^{\hat\bfpi_h^\ast}) -  V_{h+1}^{\hat\bfpi_{h+1}^\ast} \Big\rVert_\infty
    + \frac{1}{1-\gg}  \lVert  V_{H+1}^{\hat\bfpi_{H+1}^\ast} - V_H^{\hat\bfpi_H^\ast}  \rVert_\infty.    
\end{split}
\end{equation*} 
    The first inequality follows directly from the triangle inequality of the supremum norm. Note here that we cannot simplify the first error further as it will depend on the chosen parametrization.
    Before we prove the second inequality, note that  
    \begin{align*}
        \Big| \sup_\gt f(\gt) - \sup_\gt g(\gt) \Big| \leq \sup_\gt |f(\gt)-g(\gt)|
    \end{align*}
    for any two functions $f,g:\R^d \to \R$. If $\sup_\gt f(\gt) - \sup_\gt g(\gt) \geq 0$, then 
    $$ \Big| \sup_\gt f(\gt) - \sup_\gt g(\gt) \Big| = \sup_\gt f(\gt) - \sup_\gt g(\gt) \leq  \sup_\gt (f(\gt) - g(\gt)) \leq \sup_\gt |f(\gt) - g(\gt)|.$$
    If $\sup_\gt f(\gt) - \sup_\gt g(\gt) <0$, then the role of $f$ and $g$ are swapped.
    Thus, we have 
    \begin{align}\label{eq:sup-trick}
        \lVert \sup_\gt T^{\pi_\gt}(V) - \sup_\gt T^{\pi_\gt}(G)\rVert_\infty 
        &\leq \lVert \sup_\gt \big(T^{\pi_\gt}(V) - T^{\pi_\gt}(G)\big)\rVert_\infty 
        \leq\gg \lVert V-G\rVert_\infty.
    \end{align}
    
    We treat the second, third and fourth terms 
    separately.\\
    \textbf{For the second term}, we prove that $\Big\lVert \sup_\gt V_\infty^{\pi_\gt} - \sup_{\gt_{0},\dots,\gt_{H-1}} V_H^{\{\pi_{\gt_{H-1}},\dots \pi_{\gt_0}\}} \Big\rVert_\infty \leq \frac{\gg^H R^\ast}{1-\gg}$ similar to \Cref{lem:distVast-VHast}.
    Let $s\in\cS$ be arbitrary but fixed. If $\sup_\gt V_\infty^{\pi_\gt} (s) - \sup_{\gt_{0},\dots,\gt_{H-1}} V_H^{\{\pi_{\gt_{H-1}},\dots \pi_{\gt_0}\}} (s) \geq 0$, then 
    \begin{align*}
        \sup_\gt V_\infty^{\pi_\gt} (s) - \sup_{\gt_{0},\dots,\gt_{H-1}} V_H^{\{\pi_{\gt_{H-1}},\dots \pi_{\gt_0}\}} (s)
        &\leq \sup_\gt V_\infty^{\pi_\gt} (s) - \sup_{\gt} V_H^{\pi_\gt} (s) \\
        &\leq \sup_\gt ( V_\infty^{\pi_\gt} (s) - V_H^{\pi_\gt} (s) )\\
        &=\sup_\gt \underset{\substack{S_0 =s , A_t \sim \pi_\gt(\cdot|S_t)\\S_{t+1} \sim p(\cdot|S_t,A_t)}}{\E} \left[ \sum_{t=H}^{\infty} \gg^t r(S_t,A_t)\right] \\
        &\leq \sum_{t=H}^{\infty} \gg^t   R^\ast
        = \frac{\gg^H}{1-\gg} R^\ast,
    \end{align*}
    where we used \eqref{eq:sup-trick} in the second inequality.
    On the other hand, if $\sup_\gt V_\infty^{\pi_\gt} (s) - \sup_{\gt_{0},\dots,\gt_{H-1}} V_H^{\{\pi_{\gt_{H-1}},\dots \pi_{\gt_0}\}} (s) < 0$, then 
    \begin{align*}
        &\sup_{\gt_{0},\dots,\gt_{H-1}} V_H^{\{\pi_{\gt_{H-1}},\dots \pi_{\gt_0}\}} (s) - \sup_\gt V_\infty^{\pi_\gt} (s)\\
        &\leq \sup_{\gt_{0},\dots,\gt_{H-1}} V_H^{\{\pi_{\gt_{H-1}},\dots \pi_{\gt_0}\}} (s) - \sup_{\gt_{0},\dots,\gt_{H-1}} V_\infty^{(\{\pi_{\gt_{H-1}},\dots \pi_{\gt_0}\})_\infty} (s) \\
        &\leq \sup_{\gt_{0},\dots,\gt_{H-1}} \Big( V_H^{\{\pi_{\gt_{H-1}},\dots \pi_{\gt_0}\}} (s) - V_\infty^{(\{\pi_{\gt_{H-1}},\dots \pi_{\gt_0}\})_\infty} (s)\Big)\\
        &= \sup_{\gt_{0},\dots,\gt_{H-1}}  - \underset{\substack{S_0 =s , A_t \sim \pi_{\gt_{H-(t\textrm{ mod } H) -1}}(\cdot|S_t)\\S_{t+1} \sim p(\cdot|S_t,A_t)}}{\E} \left[ \sum_{t=H}^{\infty} \gg^t r(S_t,A_t)\right] \\
        &\leq \sum_{t=H}^{\infty} \gg^t   R^\ast
        = \frac{\gg^H}{1-\gg} R^\ast.
    \end{align*}
    Collecting these together, we obtain that $\Big\lVert \sup_\gt V_\infty^{\pi_\gt} - \sup_{\gt_{0},\dots,\gt_{H-1}} V_H^{\{\pi_{\gt_{H-1}},\dots \pi_{\gt_0}\}} \Big\rVert_\infty \leq \frac{\gg^H R^\ast}{1-\gg}$, since $s\in\cS$ was chosen arbitrary.
    
    \textbf{For the third term}, we show that 
    \begin{align}\label{eq:induction-assumption-error-decomposition}
        \lVert \sup_{\gt_{0},\dots,\gt_{H-1}} V_H^{\{\pi_{\gt_{H-1}},\dots \pi_{\gt_0}\}}  - V_H^{\hat\bfpi_H^\ast}\rVert_\infty \leq \sum_{h=0}^{H-1} \gg^{H-h-1} \lVert \sup_\gt T^{\pi_\gt} (V_h^{\hat\bfpi_h^\ast}) - T^{\hat\pi_h^\ast} (V_h^{\hat\bfpi_h^\ast}) \rVert_\infty
    \end{align}
    holds for all $H\geq 1$ by induction. For $H=1$ we have by \eqref{eq:Bellman-iteration-dynPG} that
    \begin{align*}
        \lVert \sup_{\gt} V_1^{\pi_{\gt}} - V_1^{\hat\bfpi_1^\ast}\rVert_\infty
        =  \Big\lVert \sup_\gt T^{\pi_\gt} (V_0^{\hat\bfpi_0^\ast}) - T^{\hat\pi_1^\ast} (V_0^{\hat\bfpi_0^\ast}) \Big\rVert_\infty,
    \end{align*}
    with the convention $V_0 \equiv 0$ and $\hat\bfpi_0^\ast = \emptyset$. So assume that \Cref{eq:induction-assumption-error-decomposition} holds for some $H\geq 1$; then for $H+1$ we have 
    \begin{align*}
        &\Big\lVert \sup_{\gt_{0},\dots,\gt_{H}} V_{H+1}^{\{\pi_{\gt_{H}},\dots \pi_{\gt_0}\}} - V_{H+1}^{\hat\bfpi_{H+1}^\ast}\Big\rVert_\infty \\
        &= \Big\lVert \sup_{\gt_H} T^{\pi_{\gt_H}} \big(\sup_{\gt_{0},\dots,\gt_{H-1}} V_H^{\{\pi_{\gt_{H-1}},\dots \pi_{\gt_0}\}}\big) - T^{\hat\pi_H^\ast} (V_H^{\hat\bfpi_H^\ast}) \Big\rVert_\infty\\
        &\leq  \Big\lVert \sup_{\gt_H} T^{\pi_{\gt_H}} \big(\sup_{\gt_{0},\dots,\gt_{H-1}} V_H^{\{\pi_{\gt_{H-1}},\dots \pi_{\gt_0}\}}\big) -  \sup_{\gt_H} T^{\pi_{\gt_H}}  (V_H^{\hat\bfpi_H^\ast}) \Big\rVert_\infty \\
        &\quad + \Big\lVert \sup_{\gt_H} T^{\pi_{\gt_H}} (V_h^{\hat\bfpi_H^\ast}) - T^{\hat\pi_H^\ast} (V_H^{\hat\bfpi_H^\ast}) \Big\rVert_\infty\\
        &\leq \gg \Big\lVert \sup_{\gt_{0},\dots,\gt_{H-1}} V_H^{\{\pi_{\gt_{H-1}},\dots \pi_{\gt_0}\}} -  V_H^{\hat\bfpi_H^\ast}\Big\rVert_\infty +  \Big\lVert \sup_{\gt_H} T^{\pi_{\gt_H}} (V_H^{\hat\bfpi_H^\ast}) - T^{\hat\pi_H^\ast} (V_H^{\hat\bfpi_H^\ast}) \Big\rVert_\infty\\
        &\leq \gg\sum_{h=0}^{H-1} \gg^{H-h-1} \lVert \sup_\gt T^{\pi_\gt} (V_h^{\hat\bfpi_h^\ast}) - T^{\hat\pi_h^\ast} (V_h^{\hat\bfpi_h^\ast}) \rVert_\infty + \Big\lVert \sup_{\gt} T^{\pi_{\gt}} (V_H^{\hat\bfpi_H^\ast}) - T^{\hat\pi_H^\ast} (V_H^{\hat\bfpi_H^\ast}) \Big\rVert_\infty\\
        &= \sum_{h=0}^{H} \gg^{H-h} \lVert \sup_\gt T^{\pi_\gt} (V_h^{\hat\bfpi_h^\ast}) - T^{\hat\pi_h^\ast} (V_h^{\hat\bfpi_h^\ast}) \rVert_\infty,
    \end{align*}
    where we used \eqref{eq:Bellman-iteration-dynPG}, as well as \eqref{eq:sup-trick}, and the induction assumption. This yields the desired claim \eqref{eq:induction-assumption-error-decomposition} for all $H\ge1$.
    
    \textbf{For the fourth error term}, we have to deal with the error of applying the final policy as stationary policy. 
    We use \Cref{lem:control-stationary-policy} to arrive at
    \begin{align*}
        \lVert V_H^{\hat\bfpi_H^\ast} - V_\infty^{\hat\pi_H^\ast} \rVert_\infty
        &\leq \frac{1}{1-\gg}  \lVert T^{\hat\pi_H^\ast} (V_H^{\hat\bfpi_H^\ast}) - V_H^{\hat\bfpi_H^\ast}  \rVert_\infty \\
        &=  \frac{1}{1-\gg}  \lVert  V_{H+1}^{\hat\bfpi_{H+1}^\ast} - V_H^{\hat\bfpi_H^\ast}  \rVert_\infty,
    \end{align*}
    where we used again \eqref{eq:Bellman-iteration-dynPG} in the last line.
\end{proof}

\begin{proof}[Proof of Corollary~\ref{cor:convergence-for-error-sequence-ass}]
    Adjusting \eqref{eq:value-func-error} to zero approximation error and exploiting \Cref{ass:error-decrease-over-time} we obtain
    \begin{align*}
        \rVert V_\infty^\ast - V_H^{\hat\bfpi_{H}^\ast} \rVert_\infty 
        \leq \frac{\gg^H R^\ast}{1-\gg} + \sum_{h=0}^{H-1} \gg^{H-h-1} \epsilon_h \to 0 \quad \textrm{for } H\to \infty.
    \end{align*}
    For the second part of the theorem note that $V_H^{\bfpi_H^\ast} $ converges in the supremum norm to $V_\infty^\ast$ by the first part. This implies that $V_H^{\bfpi_H^\ast}$ is a Cauchy sequence with respect to the supremum norm, i.e. $ \lVert  V_{H+1}^{\bfpi_{H+1}^\ast} - V_H^{\bfpi_H^\ast}  \rVert_\infty \to 0$ for $H \to \infty$. The claim follows directly from \Cref{prop:error-decomposition}.
\end{proof}

\subsection{Proof of Theorem~\ref{thm:one-step-softmax-error}}\label{sec:proof-one-step-softmax-error}

Before we prove \Cref{thm:one-step-softmax-error}, we will discuss the similarities and differences between DynPG and the finite-time Dynamic Policy Gradient (FT-DynPG) Algorithm proposed in \citet{klein2024beyond}. 

\begin{enumerate}
    \item FT-DynPG has a prefixed time horizon $H$ and trains policy backwards in time. In contrast, DynPG adds arbitrarily many policies in the beginning and can therefore be applied without prefixed $H$.
    \item Given a fixed time horizon $H$, FT-DynPG returns a non-stationary policy $\hat\bfpi_H^\ast$ for an $H$-step MDP, but without discounting $\gamma = 1$. 
    When DynPG is run for the same fixed number of iterations $H$, then FT-DynPG and DynPG only differ by the discount factor: The value functions $V_{h,H}$ defined in \citet{klein2024beyond} for FT-DynPG are by the strong Markov property equivalent to $V_{0,H-h}$ via index shifting. The function $V_{0,H-h}$ differs from our $H-h$-step value function $V_{H-h}$ defined in \eqref{eq:def-value-functions} only by the discount factor $\gamma$. 
\end{enumerate}

In order to prove \Cref{thm:one-step-softmax-error}, we have to adapt the proof for \citet[Lem. 3.4]{klein2024beyond} to an additional discount factor. Note, that we cannot use \citet[Thm. 2]{mei2022global} for bandits, as we consider contextual bandits. Further, we cannot use \citet[Thm. 4]{mei2022global} with $\gg=0$ as they just consider positive rewards in $[0,1]$ which is inconvenient for our contextual bandit setting where the maximal rewards grows when we add a new time-epoch in the beginning. \\
Therefore, we will adapt the proofs in \citet{mei2022global,klein2024beyond} and start with deriving the smoothness of our objective functions.
\begin{lemma}\label{lem:smoothness-softmax-objective}
    Suppose $\pi_\gt = {\textrm{softmax}}(\gt)$. Then, for arbitrary $\bfpi_h \in\Pi^h$ and $\mu\in\Delta(\cS)$, the function $\gt \mapsto V_{h+1}^{\{\pi_\gt, \bfpi_h\}}(\mu)$ is $L_h$-smooth with $L_h =\frac{2 R^\ast (1-\gg^{h+1})}{(1-\gg)}$.
\end{lemma}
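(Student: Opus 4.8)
The plan is to exploit the fact that each DynPG objective is really a contextual bandit: under the non-stationary policy $\{\pi_\gt, \bfpi_h\}$ only the first action is drawn from $\pi_\gt$, while all later actions follow the fixed $\bfpi_h$. Hence the objective is linear in the first-step distribution. Conditioning on $S_0$ and $A_0$ and invoking the definition of the $Q$-function gives
\begin{equation*}
V_{h+1}^{\{\pi_\gt, \bfpi_h\}}(\mu) = \sum_{s\in\cS}\mu(s)\sum_{a\in\cA}\pi_\gt(a\mid s)\,Q_{h+1}^{\bfpi_h}(s,a),
\end{equation*}
where the coefficients $Q_{h+1}^{\bfpi_h}(s,a)$ are independent of $\gt$. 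The first step is to bound them: since $Q_{h+1}^{\bfpi_h}(s,a)$ is an $(h+1)$-step discounted sum of rewards in $[-R^\ast,R^\ast]$, one has $\lVert Q_{h+1}^{\bfpi_h}\rVert_\infty \le \sum_{k=0}^{h}\gg^k R^\ast = R^\ast\tfrac{1-\gg^{h+1}}{1-\gg}$.

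Next I would reduce the Hessian estimate to a single-state softmax computation. Because the tabular parametrization assigns independent logits $\gt(s,\cdot)$ to distinct states, for any direction $y=(y_s)_{s\in\cS}\in\R^{|\cS||\cA|}$ the second directional derivative of $t\mapsto V_{h+1}^{\{\pi_{\gt+ty}, \bfpi_h\}}(\mu)$ splits as a $\mu$-weighted sum over states of single-state contributions. It therefore suffices to control, for a fixed coefficient vector $c=Q_{h+1}^{\bfpi_h}(s,\cdot)$, the scalar map $t\mapsto \sum_{a}\pi_{\gt+ty}(a\mid s)\,c_a$. Writing $\E_s$ and $\mathrm{Var}_s$ for expectation and variance under $\pi_\gt(\cdot\mid s)$, $\bar y_s=\E_s[y_s]$ and $\tilde c_a = c_a-\E_s[c]$, differentiating the softmax twice (using $\tfrac{d}{dt}\pi_{\gt+ty}(a\mid s)=\pi_\gt(a\mid s)(y(s,a)-\bar y_s)$ and evaluating at $t=0$, the base point $\gt$ being arbitrary) yields the clean identity
\begin{equation*}
\frac{d^2}{dt^2}\Big(\sum_{a\in\cA}\pi_{\gt+ty}(a\mid s)\,c_a\Big)\Big|_{t=0} = \sum_{a\in\cA}\pi_\gt(a\mid s)\,(y(s,a)-\bar y_s)^2\,\tilde c_a = \E_s\big[(y_s-\bar y_s)^2\,\tilde c\big].
\end{equation*}

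I would then bound this expression by $\lVert\tilde c\rVert_\infty\,\mathrm{Var}_s(y_s)$. Since $\E_s[c]$ is a convex combination of the $c_a$ we have $\lVert\tilde c\rVert_\infty\le 2\lVert c\rVert_\infty$, and since $\mathrm{Var}_s(y_s)\le \E_s[y_s^2]\le \lVert y_s\rVert_2^2$, summing the single-state contributions against $\mu$ gives
\begin{equation*}
\Big|\tfrac{d^2}{dt^2}V_{h+1}^{\{\pi_{\gt+ty}, \bfpi_h\}}(\mu)\Big| \le 2\lVert Q_{h+1}^{\bfpi_h}\rVert_\infty\sum_{s\in\cS}\mu(s)\lVert y_s\rVert_2^2 \le 2\lVert Q_{h+1}^{\bfpi_h}\rVert_\infty\,\lVert y\rVert_2^2,
\end{equation*}
using $\mu(s)\le 1$ and $\sum_{s}\lVert y_s\rVert_2^2=\lVert y\rVert_2^2$. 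A uniform bound on the second directional derivative over all unit directions $y$ (and all base points $\gt$) is equivalent to $L_h$-smoothness of the gradient, so combining with the $Q$-bound from the first step yields $L_h=2R^\ast\tfrac{1-\gg^{h+1}}{1-\gg}$, as claimed.

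The step I expect to be the crux is the second-derivative identity $\E_s[(y_s-\bar y_s)^2\tilde c]$: it is exactly what allows the sharp constant $2\lVert Q\rVert_\infty$ to emerge, since a crude triangle-inequality bound on the Hessian matrix $\mathrm{diag}(w)-w\pi^\top-\pi w^\top$ (with $w=\pi\odot\tilde c$) overcounts and produces a markedly worse constant. Everything else—the contextual-bandit linearization, the geometric-series bound on $Q$, the block-diagonal splitting across states, and the convex-combination and $\ell_2$ norm inequalities—is routine bookkeeping. I note also that this argument is the discounted analogue of \citet[Lem.~3.3]{klein2024beyond}, the only change being the appearance of $\gg$ in the $Q$-function bound.
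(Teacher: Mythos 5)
Your proof is correct, and it starts exactly where the paper starts: both interpret $V_{h+1}^{\{\pi_\gt,\bfpi_h\}}(\mu)$ as a contextual bandit, linear in $\pi_\gt(\cdot\mid s)$ with $\gt$-independent coefficients $Q_{h+1}^{\bfpi_h}(s,a)$, and both bound those coefficients by the geometric series $\sum_{k=0}^{h}\gg^{k}R^\ast=\frac{1-\gg^{h+1}}{1-\gg}R^\ast$. The difference is how the smoothness constant is then obtained. The paper computes nothing further: it cites \citet[Lem.~4.4 and Lem.~4.8]{YuanGrower22}, the general smoothness bound for policy-gradient objectives under parametrizations with bounded gradient and Hessian of $\log\pi_\gt$, instantiated at discount $0$ with $R_{\max}=\frac{1-\gg^{h+1}}{1-\gg}R^\ast$, $G^{2}=1-\frac{1}{|\cA|}\le 1$ and $F=1$. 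You replace that citation with a self-contained Hessian calculation: the per-state block structure of the tabular logits, the exact second-derivative identity
\begin{equation*}
\frac{d^{2}}{dt^{2}}\Big(\sum_{a\in\cA}\pi_{\gt+ty}(a\mid s)\,c_{a}\Big)\Big|_{t=0}=\E_{s}\big[(y_{s}-\bar y_{s})^{2}\,\tilde c\big],
\end{equation*}
which I verified (it follows since $\frac{d}{dt}\bar y_{s}(t)\big|_{t=0}=\mathrm{Var}_{s}(y_{s})$, so the correction term recombines $\E_{s}[(y_{s}-\bar y_{s})^{2}c]-\mathrm{Var}_{s}(y_{s})\E_{s}[c]$ into the centered form), and the elementary bounds $\lVert\tilde c\rVert_\infty\le 2\lVert c\rVert_\infty$, $\mathrm{Var}_{s}(y_{s})\le\lVert y_{s}\rVert_{2}^{2}$ and $\sum_{s}\mu(s)\lVert y_{s}\rVert_{2}^{2}\le\lVert y\rVert_{2}^{2}$, giving $L_{h}=2\lVert Q_{h+1}^{\bfpi_h}\rVert_\infty\le\frac{2R^\ast(1-\gg^{h+1})}{1-\gg}$ via the standard equivalence between a uniform second-directional-derivative bound and Lipschitz continuity of the gradient. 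What each route buys: the paper's argument is a few lines and reuses established machinery (whose own proof rests on Hessian estimates of essentially this kind); yours is longer but fully self-contained, makes transparent that the sharp constant is exactly $2\lVert Q\rVert_\infty$ rather than an artifact of general-purpose bounds, and sidesteps any need to check that the conventions of the cited lemmas (reward range, stationary infinite-horizon objective) transfer to this finite-horizon, two-sided-reward bandit setting --- which is precisely the kind of mismatch the paper itself flags when explaining why \citet[Thm.~4]{mei2022global} cannot be applied directly.
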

\begin{proof}
    The proof is similar to \citet[Lem. B.8]{klein2024beyond}. Note that we can interpret $V_{h+1}^{\{\pi_\gt, \bfpi_h\}}(\mu)$ as a contextual bandit problem, i.e. a discounted (infinite-time) MDP with discount factor $0$. The reward of the contextual bandit problem is almost surely bounded in $[-\frac{1- \gg^{h+1}}{1-\gg}R^\ast, \frac{1- \gg^{h+1}}{1-\gg}R^\ast]$, because
    \[ \sum_{t=0}^{h} \gg^t R^\ast = \frac{1- \gg^{h+1}}{1-\gg}R^\ast.\]
    We can apply \citet[Lem. 4.4 and Lem. 4.8]{YuanGrower22} with $R_{\max} = \frac{1- \gg^{h+1}}{1-\gg}R^\ast$, $G^2=1-\frac{1}{|\cA|}\leq 1$ and $F=1$ to obtain the smoothness constant $L_h = \frac{2 R^\ast (1-\gg^{h+1})}{(1-\gg)}$.
\end{proof}

Second, we obtain the following gradient domination property.
\begin{lemma}\label{lem:gradient-domination-softmax-objective}
    Under \Cref{ass:for-softmax} it holds for any $\bfpi_h \in\Pi^h$ that 
    \[ \lVert \nabla_\gt V_{h+1}^{\{\pi_\gt,\bfpi_h\}} (\mu) \rVert \leq \min_{s\in\cS} \pi_\gt(a^\ast(s)|s) \big(T^\ast ( V_{h}^{\bfpi_h})(\mu) -  V_{h+1}^{\{\pi_\gt,\bfpi_h\}} (\mu)\big),\]
    where $a^\ast(s)$ denotes the (unique) action taken after the greedy policy $\pi^{V_{h}^{\bfpi_h}}$.  
\end{lemma}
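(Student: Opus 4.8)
The plan is to establish this non-uniform Łojasiewicz (gradient domination) inequality by writing the gradient of the contextual-bandit objective explicitly and then matching it against the optimality gap $T^\ast(V_h^{\bfpi_h})(\mu)-V_{h+1}^{\{\pi_\gt,\bfpi_h\}}(\mu)$.

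First I would apply the DynPG policy gradient theorem (\Cref{thm:policy-gradient-dynPG}): since the future policies $\bfpi_h$ are fixed, only the first action depends on $\gt$, so $V_{h+1}^{\{\pi_\gt,\bfpi_h\}}(s)=\sum_{a}\pi_\gt(a|s)\,Q_{h+1}^{\bfpi_h}(s,a)$ with $Q_{h+1}^{\bfpi_h}$ independent of $\gt$. Differentiating with the softmax identity $\tfrac{\partial}{\partial\gt(s,a')}\pi_\gt(a|s)=\pi_\gt(a|s)\big(\mathbf 1\{a=a'\}-\pi_\gt(a'|s)\big)$ and using $V_{h+1}^{\{\pi_\gt,\bfpi_h\}}(\mu)=\sum_s\mu(s)V_{h+1}^{\{\pi_\gt,\bfpi_h\}}(s)$ yields the coordinatewise formula
\[
 \frac{\partial}{\partial\gt(s,a)}V_{h+1}^{\{\pi_\gt,\bfpi_h\}}(\mu)=\mu(s)\,\pi_\gt(a|s)\,A_{h+1}^{\{\pi_\gt,\bfpi_h\}}(s,a),
\]
with the advantage $A_{h+1}$ from \eqref{eq:advantage-function}. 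This is the discounted, finite-horizon analogue of the softmax-bandit gradient in \citet{mei2022global}; adapting \citet[Lem.~3.4]{klein2024beyond} to our setting only requires carrying the discount factor through $Q_{h+1}^{\bfpi_h}$.

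Second I would rewrite the right-hand side. Because $a^\ast(s)$ is the greedy action of $\pi^{V_h^{\bfpi_h}}$ we have $T^\ast(V_h^{\bfpi_h})(s)=Q_{h+1}^{\bfpi_h}(s,a^\ast(s))$, and therefore
\[
 T^\ast(V_h^{\bfpi_h})(\mu)-V_{h+1}^{\{\pi_\gt,\bfpi_h\}}(\mu)=\sum_{s}\mu(s)\,A_{h+1}^{\{\pi_\gt,\bfpi_h\}}(s,a^\ast(s)),
\]
which is nonnegative since $a^\ast(s)$ maximizes $Q_{h+1}^{\bfpi_h}(s,\cdot)$. Thus the stated bound reduces to a comparison between the gradient norm and a $\mu$-weighted sum of the optimal-action advantages, with the scalar $\min_s\pi_\gt(a^\ast(s)|s)$ pulled out.

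The heart of the argument is this comparison, and it is the step I expect to be the main obstacle. The coordinates of the gradient indexed by the optimal actions $(s,a^\ast(s))$ equal $\mu(s)\,\pi_\gt(a^\ast(s)|s)\,A_{h+1}^{\{\pi_\gt,\bfpi_h\}}(s,a^\ast(s))\ge 0$, and factoring out $\min_s\pi_\gt(a^\ast(s)|s)$ from these coordinates leaves exactly the weighted sum from the previous step. What must be handled with care is the precise relation, in the direction asserted by the statement, between the Euclidean norm $\lVert\nabla_\gt V_{h+1}^{\{\pi_\gt,\bfpi_h\}}(\mu)\rVert$ of the full gradient and the aggregated optimal-action coordinates, together with keeping the $\mu$-weighting intact while the uniform factor $\min_s\pi_\gt(a^\ast(s)|s)$ is extracted. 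Once this norm comparison is established the inequality follows immediately, and combined with the smoothness constant from \Cref{lem:smoothness-softmax-objective} it provides the gradient-domination ingredient that \Cref{lem:conv-rate-for-smooth-dominated-fucnctions} turns into the convergence rate behind \Cref{thm:one-step-softmax-error}.
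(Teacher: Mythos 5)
Your first two steps are correct and coincide with the paper's own route: the coordinate formula $\partial V_{h+1}^{\{\pi_\gt,\bfpi_h\}}(\mu)/\partial \gt(s,a)=\mu(s)\,\pi_\gt(a|s)\,A_{h+1}^{\{\pi_\gt,\bfpi_h\}}(s,a)$ is exactly the paper's \eqref{eq:help3}, derived from \Cref{thm:policy-gradient-dynPG}, and the identification $T^\ast(V_h^{\bfpi_h})(\mu)-V_{h+1}^{\{\pi_\gt,\bfpi_h\}}(\mu)=\sum_s\mu(s)\,A_{h+1}^{\{\pi_\gt,\bfpi_h\}}(s,a^\ast(s))$ is what the paper extracts via \Cref{lem:performance-difference}. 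But the step you defer as ``the main obstacle'' --- relating $\lVert\nabla_\gt V_{h+1}^{\{\pi_\gt,\bfpi_h\}}(\mu)\rVert_2$ to the $\mu$-weighted sum of the optimal-action coordinates --- \emph{is} the entire content of the lemma, and you never carry it out; a reduction of the claim to its crux, with the crux left open, is not a proof. Two concrete issues sit in that step. First, the direction: as printed, the ``$\leq$'' must be a typo for ``$\geq$'', since the lemma is consumed (together with \Cref{lem:constant-is-one-over-actions}) inside \Cref{lem:conv-rate-for-smooth-dominated-fucnctions}, whose own proof uses $\lVert\nabla f(x_n)\rVert\geq b\,(f^\ast-f(x_n))$ even though its hypothesis carries the same flipped inequality. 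Second, in the Euclidean norm the needed ``$\geq$'' is an $\ell_2$-versus-$\ell_1$ comparison on the subvector $(g_{s,a^\ast(s)})_{s\in\cS}$ of nonnegative coordinates, and it goes the wrong way: with $n$ states, uniform $\mu$, two actions with rewards $1$ and $0$, and $\pi_\gt(a^\ast(s)|s)=p$ for all $s$ (a contextual bandit, $h=0$), one computes
\begin{equation*}
    \lVert\nabla_\gt V_{1}^{\pi_\gt}(\mu)\rVert_2=p(1-p)\sqrt{2/n},
    \qquad
    \min_{s\in\cS}\pi_\gt(a^\ast(s)|s)\bigl(T^\ast(V_0)(\mu)-V_1^{\pi_\gt}(\mu)\bigr)=p(1-p),
\end{equation*}
so the intended inequality fails for $n\geq 3$, already at the uniform initialization $p=1/2$; the printed ``$\leq$'' version fails too (take $n=1$, where the left side is $p(1-p)\sqrt{2}$).

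You should also know that the paper's own proof does not resolve this obstacle correctly: its chain passes through the identity $\sum_{s'}\sum_{a'}\bigl(\mu(s')\pi_\gt(a'|s')A_{h+1}(s',a')\bigr)^2=\sum_{a'}\bigl(\sum_{s}\mu(s)\pi_\gt(a'|s)A_{h+1}(s,a')\bigr)^2$, which moves the state sum inside the square and is false for state-indexed tabular parameters (it silently treats the contextual bandit as a single bandit). So your hesitation is vindicated, but to complete the argument you must pay the price the paper omits: bound $\lVert g\rVert_2\geq\bigl(\sum_s g_{s,a^\ast(s)}^2\bigr)^{1/2}\geq|\cS|^{-1/2}\sum_s g_{s,a^\ast(s)}$, yielding the valid Łojasiewicz inequality $\lVert\nabla_\gt V_{h+1}^{\{\pi_\gt,\bfpi_h\}}(\mu)\rVert_2\geq|\cS|^{-1/2}\min_s\pi_\gt(a^\ast(s)|s)\bigl(T^\ast(V_h^{\bfpi_h})(\mu)-V_{h+1}^{\{\pi_\gt,\bfpi_h\}}(\mu)\bigr)$, in line with the $\sqrt{S}$ factor in \citet[Lem.~8]{mei2022global}. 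This changes $b$ from $|\cA|^{-1}$ to $(|\cA|\sqrt{|\cS|})^{-1}$ in \Cref{lem:conv-rate-for-smooth-dominated-fucnctions} and multiplies $N_h$ in \Cref{thm:one-step-softmax-error} (and the complexity bounds downstream) by $|\cS|$; the polynomial dependence on $(1-\gg)^{-1}$, which is the paper's headline, survives intact.
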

\begin{remark}\label{rem:a-ast-unique}
    Without loss of generality, we assume that the action $a^\ast(s)$ is unique for any fixed future policy $\bfpi_h$, otherwise one can consider $\min_{a \textrm{ optimal action in } s} \pi_\gt(a|s)$.
\end{remark}
\begin{proof}
    First note from \Cref{thm:policy-gradient-dynPG} that under the tabular softmax parametrization we have 
    \begin{align*}
        \nabla_\gt V_{h+1}^{\{\pi_\gt,\bfpi_h\}} (\mu) = \sum_{s\in\cS} \mu(s) \underset{S=s, A \sim \pi_\gt(\cdot|S)}{\E}\left[ \nabla_\gt \log(\pi_\gt(A|S)) Q_{h+1}^{\bfpi_h}(S,A) \right].
    \end{align*}
    Hence, with the derivative of the softmax function 
    $$\frac{\partial \log(\pi_\gt(a|s))}{\partial \gt(s^\prime,a^\prime)} = \mathbf{1}_{\{s=s^\prime\}} \left(  \mathbf{1}_{\{a=a^\prime\}} - \pi_\gt(a^\prime|s) \right)$$ 
    it holds that
    \begin{equation}\label{eq:help3}
    \begin{split}
        \frac{\partial V_{h+1}^{\{\pi_\gt,\bfpi_h\}}(s)}{\partial \gt(s^\prime,a^\prime)}
        &= \mathbf{1}_{\{s=s^\prime\}}\underset{S=s^\prime, A \sim \pi_\gt(\cdot|S)}{\E}\left[ \left(  \mathbf{1}_{\{A=a^\prime\}} - \pi_\gt(a^\prime|s^\prime) \right) Q_{h+1}^{\bfpi_h}(S,A) \right]\\ 
        &= \mathbf{1}_{\{s=s^\prime\}}\left(  \pi_\gt(a^\prime|s^\prime) Q_{h+1}^{\bfpi_h}(s^\prime,a^\prime)  - \pi_\gt(a^\prime|s^\prime) \underset{S=s^\prime, A \sim \pi_\gt(\cdot|S)}{\E}\left[  Q_{h+1}^{\bfpi_h}(S,A) \right]\right) \\ 
        &= \mathbf{1}_{\{s=s^\prime\}}\pi_\gt(a^\prime|s^\prime) \left( Q_{h+1}^{\bfpi_h}(s^\prime,a^\prime)  - V_{h+1}^{\{\pi_\gt,\bfpi_h\}}(s^\prime) \right) \\ 
        &=\mathbf{1}_{\{s=s^\prime\}} \pi_\gt(a^\prime|s^\prime) A_{h+1}^{\{\pi_\gt,\bfpi_h\}}(s^\prime,a^\prime).
    \end{split}
    \end{equation}
    We deduce from Lemma \ref{lem:performance-difference}, Equation \eqref{eq:performance-difference}, that 
    \begin{align*}
        T^\ast(V_h^{\bfpi_h})(s) - T^{\pi_h}(V_h^{\bfpi_h})(s)= \underset{S_0=s, A_0 \sim \pi_h^\ast(\cdot|S_0) }{\E}\Big[ A_{h+1}^{\bfpi_{h+1}}(S_0,A_0) \Big].
    \end{align*}
    Finally, the rest of the proof is derived as in \citep[Lem. B.9]{klein2024beyond} 
    \begin{align*}
        \lVert \nabla_\gt V_{h+1}^{\{\pi_\gt,\bfpi_h\}} (\mu)\rVert_2 
        &=\Big\lVert \sum_{s\in\cS} \mu(s) \frac{\partial V_{h+1}^{\{\pi_\gt,\bfpi_h\}} (s)}{\partial \gt}\Big\rVert_2 \\
        &=\Big[ \sum_{s^\prime\in\cS}\sum_{a^\prime \in\cA} \left(\sum_{s\in\cS} \mu(s) \frac{\partial V_{h+1}^{\{\pi_\gt,\bfpi_h\}} (s)}{\partial \gt(s^\prime,a^\prime)}\right)^2\Big]^2 \\
        &=\Big[ \sum_{a^\prime \in\cA} \left(\sum_{s\in\cS} \mu(s) \pi_\gt(a^\prime|s) A_{h+1}^{\{\pi_\gt,\bfpi_h\}}(s,a^\prime)\right)^2\Big]^2 \\
        &\geq \Big\vert \sum_{s\in\cS} \mu(s) \pi_\gt(a^\ast(s)|s)  A_{h+1}^{\{\pi_\gt,\bfpi_h\}}(s,a^\ast(s))\Big\vert \\
        &= \Big\vert \sum_{s\in\cS} \mu(s) \pi_\gt(a^\ast(s)|s)  \underset{S=s,A\sim\pi^{V_{h}^{\bfpi_h}}(\cdot|s)}{\E}\left[A_{h+1}^{\{\pi_\gt,\bfpi_h\}}(S,A)\right]\Big\vert \\
        &= \sum_{s\in\cS} \mu(s) \pi_\gt(a^\ast(s)|s) \left( T^\ast (V_{h}^{\bfpi_h})(s)- T^{\pi_\gt} (V_{h}^{\bfpi_h})(s) \right) \\
        &\geq \min_{s\in\cS} \pi_\gt(a^\ast(s)|s) \left( T^\ast (V_{h}^{\bfpi_h})(\mu)- T^{\pi_\gt} (V_{h}^{\bfpi_h})(\mu) \right),
    \end{align*}
    where $a^\ast(s)$ denotes the action taken after the greedy policy $\pi^{V_{h}^{\bfpi_h}}$ (c.f. \Cref{rem:a-ast-unique}). 
\end{proof}

The next step is to show that the term $ \min_{s\in\cS} \pi_\gt(a^\ast(s)|s)$ can be bounded (uniformly in $s\in\cS$) from below by $\frac{1}{|\cA|}$ along the gradient ascent trajectory, when softmax is initialized uniformly.
\begin{lemma}\label{lem:constant-is-one-over-actions}
    Let Assumption \ref{ass:error-decrease-over-time} hold and denote by $(\gt_n)_{n\geq 0}$ the gradient ascent sequence in epoch $h\geq 0$ of DynPG under the fixed future policy $\hat\bfpi_h^\ast \in \Pi^h$. Suppose further that $\eta_h = \frac{1-\gg}{2R^\ast (1-\gg^{h+1})}$ is chosen as in Theorem \ref{thm:one-step-softmax-error}. Then, for any $s\in\cS$, it holds that
    \[ \min_{n\geq0} \pi_{\gt_n}(a^\ast(s)|s) = \frac{1}{|\cA|}.\]
    where $a^\ast(s)$ denotes the (unique) action taken after the greedy policy $\pi^{V_{h}^{\hat\bfpi_h^\ast}}$ (see \Cref{rem:a-ast-unique}).
\end{lemma}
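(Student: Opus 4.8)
The plan is to exploit the fact that, within a fixed epoch $h$, the future policy $\Lambda = \hat\bfpi_h^\ast$ is frozen, so the $(h+1)$-step problem is a genuine contextual bandit: the action-values $Q_{h+1}^{\hat\bfpi_h^\ast}(s,\cdot)$ entering the advantage do \emph{not} depend on the trained parameter $\gt$. Since the greedy policy $\pi^{V_h^{\hat\bfpi_h^\ast}}$ selects $a^\ast(s)=\arg\max_{a} Q_{h+1}^{\hat\bfpi_h^\ast}(s,a)$ and $V_{h+1}^{\{\pi_\gt,\hat\bfpi_h^\ast\}}(s)=\sum_{a}\pi_\gt(a|s)Q_{h+1}^{\hat\bfpi_h^\ast}(s,a)$ is a convex combination of these fixed values, the optimal action has non-negative advantage at every $\gt$, i.e. $A_{h+1}^{\{\pi_\gt,\hat\bfpi_h^\ast\}}(s,a^\ast(s))\geq 0$. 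I would then use the explicit logit gradient from \eqref{eq:help3}, namely $\tfrac{\partial}{\partial\gt(s,a)}V_{h+1}^{\{\pi_\gt,\hat\bfpi_h^\ast\}}(\mu)=\mu(s)\,\pi_\gt(a|s)\,A_{h+1}^{\{\pi_\gt,\hat\bfpi_h^\ast\}}(s,a)$, so that the gradient-ascent step reads $\gt_{n+1}(s,a)=\gt_n(s,a)+\eta_h\mu(s)\,\pi_{\gt_n}(a|s)\,A_{h+1}^{\{\pi_{\gt_n},\hat\bfpi_h^\ast\}}(s,a)$. Because this update at $s$ involves only the logits $\gt(s,\cdot)$, the dynamics decouple across states and I may fix an arbitrary $s$ with $\mu(s)>0$ (guaranteed by \Cref{ass:for-softmax}).

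The core of the argument is an induction showing that the optimal action keeps the largest logit along the \emph{entire} trajectory, $\gt_n(s,a^\ast(s))\geq \gt_n(s,a)$ for every $a\in\cA$ and $n\geq 0$, equivalently $\pi_{\gt_n}(a^\ast(s)|s)=\max_{a}\pi_{\gt_n}(a|s)$. The base case holds by the uniform initialization, where all logits at $s$ coincide. For the inductive step I compare the increments $\delta_a:=\eta_h\mu(s)\,\pi_{\gt_n}(a|s)\,\big(Q_{h+1}^{\hat\bfpi_h^\ast}(s,a)-V_{h+1}^{\{\pi_{\gt_n},\hat\bfpi_h^\ast\}}(s)\big)$: using $Q_{h+1}^{\hat\bfpi_h^\ast}(s,a^\ast(s))\geq Q_{h+1}^{\hat\bfpi_h^\ast}(s,a)$ together with the inductive hypothesis $\pi_{\gt_n}(a^\ast(s)|s)\geq \pi_{\gt_n}(a|s)$, a short case distinction on the sign of $Q_{h+1}^{\hat\bfpi_h^\ast}(s,a)-V_{h+1}^{\{\pi_{\gt_n},\hat\bfpi_h^\ast\}}(s)$ (and the fact that the maximal $Q$-value dominates the average $V_{h+1}$) gives $\delta_{a^\ast(s)}\geq \delta_a$, so the ordering is preserved. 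Since the largest of $|\cA|$ probabilities is at least their mean, this yields $\pi_{\gt_n}(a^\ast(s)|s)\geq \tfrac{1}{|\cA|}$ for all $n$, while the uniform start gives equality at $n=0$; hence $\min_{n\geq 0}\pi_{\gt_n}(a^\ast(s)|s)=\tfrac{1}{|\cA|}$.

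The main obstacle — and the reason the naive target ``$\pi_{\gt_n}(a^\ast(s)|s)$ increases monotonically'' is the wrong one — is that the optimal-action probability need \emph{not} increase at every step: a near-optimal action carrying larger probability can temporarily receive a larger logit increment and draw mass away from $a^\ast(s)$. The correct invariant is therefore the ordering of the \emph{logits} (equivalently, that $a^\ast(s)$ remains the modal action), which survives precisely because the bandit rewards $Q_{h+1}^{\hat\bfpi_h^\ast}(s,\cdot)$ are frozen within the epoch; this is exactly the structural advantage of DynPG over vanilla PG, where the $Q$-values co-evolve with $\gt$ and no comparable lower bound on the gradient-domination constant is available. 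I expect the precise value of $\eta_h$ to be irrelevant for this bound, since the common positive factor $\eta_h\mu(s)$ cancels in the comparison $\delta_{a^\ast(s)}\geq \delta_a$; it is recorded in the statement only because the enveloping rate in \Cref{thm:one-step-softmax-error} requires $\eta_h=1/L_h$ through the smoothness constant $L_h$ of \Cref{lem:smoothness-softmax-objective}.
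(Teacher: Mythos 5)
Your proof is correct, and it takes a genuinely different---and more elementary---route than the paper's. The paper (adapting \citet[Lem.~5]{mei2022global} and \citet[Lem.~B.10]{klein2024beyond}) works with two sets: $\cR_1(s)$, on which $a^\ast(s)$ has maximal probability, and $\cR_2(s)$, on which the partial derivative with respect to the logit $\gt(s,a^\ast(s))$ dominates all other coordinates. It shows $\cR_1(s)\subseteq\cR_2(s)$ (Claim 3), forward-invariance of $\cR_2(s)$ (Claim 1), and monotonicity of $\pi_{\gt_n}(a^\ast(s)|s)$ on $\cR_2(s)$ (Claim 2); crucially, Case 2 of Claim 1 invokes the ascent lemma for smooth functions and therefore genuinely uses the step size $\eta_h=1/L_h$. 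You instead prove forward-invariance of $\cR_1(s)$ directly: since $Q_{h+1}^{\hat\bfpi_h^\ast}(s,\cdot)$ is frozen within the epoch, your increment comparison $\pi_{\gt_n}(a^\ast(s)|s)\,A_{h+1}^{\{\pi_{\gt_n},\hat\bfpi_h^\ast\}}(s,a^\ast(s))\geq\pi_{\gt_n}(a|s)\,A_{h+1}^{\{\pi_{\gt_n},\hat\bfpi_h^\ast\}}(s,a)$ follows from the inductive logit ordering together with $Q_{h+1}^{\hat\bfpi_h^\ast}(s,a^\ast(s))\geq\max\{Q_{h+1}^{\hat\bfpi_h^\ast}(s,a),\,V_{h+1}^{\{\pi_{\gt_n},\hat\bfpi_h^\ast\}}(s)\}$ (both of your cases check out), so the logit gaps are non-decreasing, $a^\ast(s)$ stays modal, and $\pi_{\gt_n}(a^\ast(s)|s)\geq 1/|\cA|$ is simply ``the maximum is at least the mean,'' with equality at $n=0$. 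Your route buys a self-contained argument that needs no ascent lemma, no smoothness constant, and no restriction on the step size beyond positivity of $\eta_h\mu(s)$, and it isolates precisely the structural feature ($\gt$-independent Q-values) that makes DynPG's contextual bandits benign; the paper's heavier route is the one inherited from the vanilla-PG literature, where Q-values co-evolve with $\gt$ and your clean comparison is unavailable. Nothing is lost either: your non-decreasing gaps imply that each $\exp(\gt_n(s,a)-\gt_n(s,a^\ast(s)))$ is non-increasing, hence $\pi_{\gt_n}(a^\ast(s)|s)$ is in fact monotone, recovering the conclusion of the paper's Claim 2---so your aside that the optimal-action probability ``need not increase at every step'' only applies off the invariant set and is harmless here.
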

\begin{proof}
    The proof is adapted from \citet[Lem. 5]{mei2022global} and \citet[Lem. B.10, Prop. 3.3]{klein2024beyond}.\\
    First, we define the sets 
    \begin{align*}
        &\cR_1(s) = \{ \gt : \pi_\gt(a^\ast(s)|s) \geq \pi_\gt(a|s) \forall a \neq a^\ast(s) \}\\
        &\cR_2 (s) = \Big\{ \gt : \frac{\partial V_{h+1}^{\{\pi_\gt, \hat\bfpi_h^\ast\}}(s)}{\partial \gt(s,a^\ast(s)} \geq \frac{\partial V_{h+1}^{\{\pi_\gt, \hat\bfpi_h^\ast\}}(s)}{\partial \gt(s,a^\ast(s)} \forall a \neq a^\ast(s) \Big\}.
    \end{align*}
    \textbf{Claim 1:} It holds that $\gt_n \in \cR_2 \implies \gt_{n+1} \in \cR_2$. \\
    To prove this claim, we first deduce from \eqref{eq:help3} that
    \begin{equation}\label{eq:help2}
    \begin{split}
        \frac{\partial V_{h+1}^{\{\pi_\gt, \hat\bfpi_h^\ast\}}(s)}{\partial \gt(s,a^\ast(s))} &\geq \frac{\partial V_{h+1}^{\{\pi_\gt, \hat\bfpi_h^\ast\}}(s)}{\partial \gt(s,a^\ast(s))}\\
        \Longleftrightarrow\, 
        \pi_{\gt_n}(a^\ast(s)|s) A_{h+1}^{\{\pi_\gt, \hat\bfpi_h^\ast\}}(s,a^\ast(s)) &\geq \pi_{\gt_n}(a|s) A_{h+1}^{\{\pi_\gt, \hat\bfpi_h^\ast\}}(s,a).
    \end{split}
    \end{equation}
    Let $a\neq a^\ast(s)$ be arbitrary. We consider two cases to proof claim 1:\\
    \begin{enumerate}
        \item Suppose that $\pi_{\gt_n}(a^\ast(s)|s) \geq \pi_{\gt_n}(a|s)$ for any $ a \neq a^\ast(s)$. Then, is holds that $\gt_n(s,a^\ast(s)) \geq \gt_n (s,a)$ by the definition of softmax. Next, as $\gt_n \in \cR_2^\ast$, we derive 
        \begin{align*}
            \gt_{n+1}(s,a^\ast(s)) 
            &= \gt_n(s,a^\ast(s)) + \eta_h \mu_h (s)  \frac{\partial V_{h+1}^{\{\pi_{\gt_n}, \hat\bfpi_h^\ast\}}(s)}{\partial {\gt_n}(s,a^\ast(s))} \\
            &\geq \gt_n(s,a) + \eta_h \mu_h (s) \frac{\partial V_{h+1}^{\{\pi_\gt, \hat\bfpi_h^\ast\}}(s)}{\partial \gt(s,a^\ast(s))}\\
            &= \gt_n(s,a).
        \end{align*}
        Thus, by the definition of the softmax function, $\pi_{\gt_{n+1}}(a^\ast(s)|s) \geq \pi_{\gt_{n+1}}(a|s)$ for any $a \neq a^\ast(s)$. Further, because $a^\ast(s)$ is the greedy action, we arrive at
        \begin{align*}
            \pi_{\gt_{n+1}}(a^\ast(s)|s) A_{h+1}^{\{\pi_{\gt_{n+1}}, \hat\bfpi_h^\ast\}}(s,a^\ast(s)) &\geq \pi_{\gt_n}(a|s) A_{h+1}^{\{\pi_{\gt_{n+1}}, \hat\bfpi_h^\ast\}}(s,a),
        \end{align*}
        such that $\gt_{n+1} \in \cR_2(s)$ by \eqref{eq:help2}.
        \item Suppose that $\pi_{\gt_n}(a^\ast(s)|s) < \pi_{\gt_n}(a|s)$ for any $ a \neq a^\ast(s)$. Then, $\gt_{n}(s,a^\ast(s))-\gt_{n}(s,a)<0$ by definition of the softmax function. As $\gt_n \in \cR_2(s)$, it holds that
        \begin{align*}
            &\gt_{n+1}(s,a^\ast(s)) - \gt_{n+1}(s,a)\\
            &\geq \gt_{n}(s,a^\ast(s))+ \eta_h \mu(s) \frac{\partial V_{h+1}^{\{\pi_\gt, \hat\bfpi_h^\ast\}}(s)}{\partial \gt(s,a^\ast(s))}  - \gt_{n}(s,a)-\eta_h \mu(s) \frac{\partial V_{h+1}^{\{\pi_\gt, \hat\bfpi_h^\ast\}}(s)}{\partial \gt(s,a)}  \\
            &\geq \gt_{n}(s,a^\ast(s)) - \gt_{n}(s,a).
        \end{align*}
        Thus, we obtain
        \begin{align*}
            (1-\exp(\gt_{n+1}(s,a^\ast(s))-\gt_{n+1}(s,a)) ) \leq (1-\exp(\gt_{n}(s,a^\ast(s))-\gt_{n}(s,a)) ) <1.
        \end{align*}
        By the ascent lemma for smooth functions \citep[Lem. 18]{mei2022global} it follows monotonicity in the objective function (due to small enough step size $\eta_h = \frac{1}{L_h}$ with $L_h$ the smoothness constant in \Cref{lem:smoothness-softmax-objective}) such that $V_{h+1}^{\{\pi_{\gt_{n+1}}, \hat\bfpi_h^\ast\}}(s) \geq V_{h+1}^{\{\pi_{\gt_n}, \hat\bfpi_h^\ast\}}(s)$. So,
        \begin{align*}
            &(1-\exp(\gt_{n+1}(s,a^\ast(s))-\gt_{n+1}(s,a)) ) \Big(Q_{h+1}^{\hat\bfpi_h^\ast}(s,a^\ast(s)) -  V_{h+1}^{\{\pi_{\gt_{n+1}}, \hat\bfpi_h^\ast\}}(s)\Big)\\
            & \leq (1-\exp(\gt_{n}(s,a^\ast(s))-\gt_{n}(s,a)) ) \Big(Q_{h+1}^{\hat\bfpi_h^\ast}(s,a^\ast(s)) -  V_{h+1}^{\{\pi_{\gt_n}, \hat\bfpi_h^\ast\}}(s)\Big)\\
        \end{align*}
        We rearrange \eqref{eq:help2} and obtain that $\gt \in\cR_2(s)$ is equivalent to
        \begin{align*}
            Q_{h+1}^{\hat\bfpi_{h}^\ast}(s,a^\ast(s)) - Q_{h+1}^{\hat\bfpi_{h}^\ast}(s,a) &\geq (1-\exp(\gt_n(s,a^\ast(s))-\gt_n(s,a)) )  A_{h+1}^{\{\pi_{\gt}, \hat\bfpi_h^\ast\}}(s,a^\ast(s)).
        \end{align*}
        We deduce by $\gt_{n}\in\cR_2(s)$ that
        \begin{align*}
            &(1-\exp(\gt_{n+1}(s,a^\ast(s))-\gt_{n+1}(s,a)) ) \Big(Q_{h+1}^{\hat\bfpi_h^\ast}(s,a^\ast(s)) -  V_{h+1}^{\{\pi_{\gt_{n+1}}, \hat\bfpi_h^\ast\}}(s)\Big)\\
            & \leq (1-\exp(\gt_{n}(s,a^\ast(s))-\gt_{n}(s,a)) ) \Big(Q_{h+1}^{\hat\bfpi_h^\ast}(s,a^\ast(s)) -  V_{h+1}^{\{\pi_{\gt_n}, \hat\bfpi_h^\ast\}}(s)\Big)\\
            &\leq Q_{h+1}^{\hat\bfpi_{h}^\ast}(s,a^\ast(s)) - Q_{h+1}^{\hat\bfpi_{h}^\ast}(s,a),
        \end{align*}
        and thus $\gt_{n+1}\in\cR_2(s)$.
    \end{enumerate}
    This proves claim 1.\\
    \textbf{Claim 2:} If $\gt_n \in \cR_2(s)$, then it holds that $\pi_{\gt_{n+1}}(a^\ast(s)|s) \geq \pi_{\gt_{n}}(a^\ast(s)|s)$. \\
    Line by line as in Claim 2 of \citet[Lem. B.10]{klein2024beyond}.\\
    \textbf{Claim 3:} It holds that $\gt_n \in \cR_1(s) \implies \gt_{n} \in \cR_2(s)$. \\
    Let $\gt_n \in \cR_1(s)$. As $a^\ast(s)$ is optimal we have for any $a \neq a^\ast(s)$ that 
    \begin{align*}
         A_{h+1}^{\{\pi_{\gt_{n}}, \hat\bfpi_h^\ast\}}(s,a^\ast(s)) \geq  A_{h+1}^{\{\pi_{\gt_{n}}, \hat\bfpi_h^\ast\}}(s,a).
    \end{align*}
    Further by $\gt_n \in \cR_1(s)$ it holds 
    \begin{align*}
         \pi_{\gt_n}(a^\ast(s)|s) A_{h+1}^{\{\pi_{\gt_{n}}, \hat\bfpi_h^\ast\}}(s,a^\ast(s)) \geq  \pi_{\gt_n}(a|s) A_{h+1}^{\{\pi_{\gt_{n}}, \hat\bfpi_h^\ast\}}(s,a).
    \end{align*}
    Hence, by \eqref{eq:help3}, we deduce that $\gt_n \in \cR_2(s)$.\\
    \textbf{Conclusion of the proof by combining claim 1, claim 2 and claim 3:}\\
    Since $\gt_0$ is initialized such that softmax is the uniform distribution, we have that $\gt_0 \in \cR_1(s)$ for all $s\in\cS$. By claim 3, we have that $\gt_0 \in\cR_2(s)$ and by claim 1 it follows that $\gt_n \in \cR_2(s)$ for all $n \geq 0$ and all $s\in\cS$. 
    Finally, by claim 2, it follows that $\min_{n\geq 0}\pi_{\gt_{n}}(s,a^\ast(s)) = \pi_{\gt_{0}}(s,a^\ast(s)) = \frac{1}{|\cA|}$ for any $s\in\cS$.
\end{proof}

We finally we collect all preliminary results to prove \Cref{thm:one-step-softmax-error}.
\begin{proof}[Proof of \Cref{thm:one-step-softmax-error}]
    First, note that the tabular softmax parametrization can approximate any deterministic policy arbitrarily well. As optimal policies in finite horizon MDPs are deterministic, we have that $\sup_{\gt\in\R^{|\cS||\cA|}} V_{h+1}^{\{\pi_\gt,\hat\bfpi_h^\ast\}}(s) = T^\ast (V_{h}^{\hat\bfpi_h^\ast}) (s)$ for all $s\in\cS$ (zero approximation error induced by the parametrization). 
    Moreover, for any $s\in\cS$
    \begin{align*}
        T^\ast (V_h^{\hat\bfpi_h^\ast})(s) - T^{\hat\pi_h^\ast} (V_h^{\hat\bfpi_h^\ast}) (s)
        &= \sum_{s^\prime\in\cS} \mu(s^\prime) \frac{\mathbf{1}_{s^\prime =s}}{\mu(s^\prime)} \Big( T^\ast (V_h^{\hat\bfpi_h^\ast})(s^\prime) - T^{\hat\pi_h^\ast} (V_h^{\hat\bfpi_h^\ast}) (s^\prime) \Big)\\
        &\leq \Big\lVert \frac{1}{\mu}\Big\rVert_\infty \Big( T^\ast (V_h^{\hat\bfpi_h^\ast})(\mu) - T^{\hat\pi_h^\ast} (V_h^{\hat\bfpi_h^\ast}) (\mu)\Big).
    \end{align*}
    
    For any $\gt\in\R^d$ it holds that 
    \begin{align*}
        T^\ast (V_{h}^{\hat\bfpi_h^\ast}) (s) - T^{\pi_\gt}(V_{h}^{\hat\bfpi_h^\ast}) (s) \leq 2 \frac{1-\gg^{h+1}}{1-\gg} R^\ast = L_h  = \frac{1}{\eta_h}\leq \frac{2 |\cA|^2}{\eta_h}.
    \end{align*}
    Moreover, by Lemma \ref{lem:smoothness-softmax-objective} the function $\gt \mapsto V_{h+1}^{\{\pi_\gt,\hat\bfpi_h^\ast\}}(\mu)$ is $L_h$-smooth and fulfills the gradient domination property along the gradient ascent steps with $b= \frac{1}{|\cA|}$ (Lemma~\ref{lem:gradient-domination-softmax-objective} and Lemma~\ref{lem:constant-is-one-over-actions}). 
    Hence, we can apply Lemma \ref{lem:conv-rate-for-smooth-dominated-fucnctions} with $b = \frac{1}{|\cA|}$, $\alpha =\eta_h = \frac{1-\gg}{2R^\ast (1-\gg^{h+1})}$ and $n = N_h$, such that
    \begin{align*}
        T^\ast (V_h^{\hat\bfpi_h^\ast})(\mu) - T^{\pi_{\gt_{N_h}}} (V_h^{\hat\bfpi_h^\ast}) (\mu) \leq \frac{4 |\cA|^2 R^\ast (1-\gg^{h+1})}{(1-\gg) N_h}.
    \end{align*}
    By the choice of $N_h$ we deduce for any $s\in\cS$
    \begin{align*}
        T^\ast (V_h^{\hat\bfpi_h^\ast})(s) - T^{\hat\pi_h^\ast} (V_h^{\hat\bfpi_h^\ast}) (s)
        &\leq \Big\lVert \frac{1}{\mu}\Big\rVert_\infty \Big( T^\ast (V_h^{\hat\bfpi_h^\ast})(\mu) - T^{\pi_{\gt_{N_h}}} (V_h^{\hat\bfpi_h^\ast}) (\mu) \Big)
        \leq \epsilon_h,
    \end{align*}
    which proves the claim.
\end{proof}

\subsection{Proof of~\Cref{thm:sample-complexity}}\label{sec:proof-sample-complexity}

We will treat the two cases of value function error and overall error separately and start with the value function error. 

\paragraph{Value function error.}

Note that under the tabular softmax parametrization we have zero approximation error and the can upper bound in \eqref{eq:value-func-error} simplifies to
\begin{align*}
    \lVert V_\infty^\ast - V_H^{\hat\bfpi_H^\ast} \rVert_\infty \leq \frac{\gg^H R^\ast}{1-\gg} 
        + \sum_{h=0}^{H-1} \gg^{H-h-1} \epsilon_h,
\end{align*}
where $\epsilon_h$ are upper bounds on one step optimization errors $\big\lVert \sup_{\gt\in\R^d} T^{\pi_\gt} (V_{h}^{\hat\bfpi_{h}^\ast}) - V_{h+1}^{\hat\bfpi_{h+1}^\ast} \big\rVert_\infty$ according to \Cref{thm:one-step-softmax-error}.

In order to minimize the accumulated number of gradient steps, we have to solve the optimization problem:
\begin{equation}\label{eq:total-interactions-with-environment-softmax}
\begin{split}
    &\min_{H, (\epsilon_h)_{h=0}^{H-1}, \epsilon_h >0}  \sum_{h=0}^{H-1} N_h = \frac{4  R^\ast  |\cA|^2}{(1-\gg)}\Big\lVert \frac{1}{\mu}\Big\rVert_\infty \sum_{h=0}^{H-1} \frac{ (1-\gg^{h+1}) }{\epsilon_h} \\
    &\textrm{subject to } \quad \frac{\gg^H R^\ast}{1-\gg} 
        + \sum_{h=0}^{H-1} \gg^{H-h-1} \epsilon_h \leq \epsilon.
\end{split}
\end{equation}

In \citet[Sec. 3.2]{pmlr-v178-weissmann22a} a similar optimization problem was considered and it was shown that asymptotically (as $\epsilon\to0$) it suffices to bound both error terms in the constraint by $\frac{\epsilon}{2}$. 
The first condition, i.e. $\frac{\gg^H R^\ast}{1-\gg} \leq \frac{\epsilon}{2}$, leads to the criterion $H \geq  \log_\gg\big(\frac{(1-\gg)\epsilon}{2R^\ast}\big)$. To minimize the number of gradient steps we fix $H = \big\lceil \log_\gg\big(\frac{(1-\gg)\epsilon}{2R^\ast}\big)\big\rceil$.
It remains to solve the following optimization problem 
\begin{align}\label{eq:optimization-problem-a-b-c}
    \min_{(\epsilon_h)_{h=0}^{H-1}, \epsilon_h >0} \sum_{h=0}^{H-1} a_h \epsilon_h^{-1} \quad \textrm{ subject to } \quad \sum_{h=0}^{H-1} \gg^{H-h-1} \epsilon_h \leq \frac{\epsilon}{2},
\end{align}
with $a_h = \big\lVert  \frac{4 R^\ast |\cA|^2}{1-\gg} \frac{1}{\mu} \big\rVert_\infty (h+1) (1-\gg^{h+1})$. 

We provide a solution of the optimization problem \eqref{eq:optimization-problem-a-b-c} by solving the following more general one:
\begin{lemma}\label{lem:a_k-b_k-c_k-optimization}
    Fix $H>0$. Let $(a_h)_{h=0}^{H-1}$ and $(b_h)_{h=0}^{H-1}$ be strictly positive sequences. For any $d>0$ the optimization problem 
    \begin{align*}
        \min_{(c_h)_{h=0}^{H-1}} \sum_{h=0}^{H-1} a_h c_h^{-1} \quad \textrm{ subject to }\, \sum_{h=0}^{H-1} b_h c_h \leq d,
    \end{align*}
    is optimally solved for $c_h = C_{d,H} \left(\frac{b_h}{a_h}\right)^{-\frac{1}{2}}$, with $C_{H,d} = d \Big(\sum_{h=0}^{H-1} (a_h b_h)^{\frac{1}{2}} \Big)^{-1}$.\\
    Hence, the minimum of the optimization problem is given by $\frac{1}{d} \left(\sum_{h=0}^{H-1} (a_h b_h)^{\frac{1}{2}}\right)^2$.
\end{lemma}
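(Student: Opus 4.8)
The plan is to prove a lower bound on the objective that exactly matches the claimed optimal value, and then to verify that the proposed $(c_h)$ is feasible and attains it; since the problem is convex (the objective $\sum_h a_h c_h^{-1}$ is convex on the positive orthant and the constraint is linear), exhibiting a feasible point that meets the universal lower bound certifies global optimality. Throughout I take $c_h>0$, as is forced by the application where $c_h=\epsilon_h>0$ and by the presence of the term $c_h^{-1}$.

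First I would establish the lower bound by Cauchy--Schwarz. For any feasible $(c_h)_{h=0}^{H-1}$ with $c_h>0$, I factor $(a_h b_h)^{1/2}=\sqrt{a_h/c_h}\cdot\sqrt{b_h c_h}$ and apply Cauchy--Schwarz to the two sequences $\big(\sqrt{a_h/c_h}\big)$ and $\big(\sqrt{b_h c_h}\big)$:
\begin{align*}
    \Big(\sum_{h=0}^{H-1}(a_h b_h)^{1/2}\Big)^2
    = \Big(\sum_{h=0}^{H-1}\sqrt{\tfrac{a_h}{c_h}}\,\sqrt{b_h c_h}\Big)^2
    \leq \Big(\sum_{h=0}^{H-1}\tfrac{a_h}{c_h}\Big)\Big(\sum_{h=0}^{H-1} b_h c_h\Big).
\end{align*}
Inserting the constraint $\sum_{h} b_h c_h\le d$ and rearranging gives, for every feasible point,
\begin{align*}
    \sum_{h=0}^{H-1} a_h c_h^{-1}\ \ge\ \frac{1}{d}\Big(\sum_{h=0}^{H-1}(a_h b_h)^{1/2}\Big)^2,
\end{align*}
which is precisely the claimed minimum value.

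It then remains to check that the candidate $c_h=C_{H,d}(b_h/a_h)^{-1/2}=C_{H,d}(a_h/b_h)^{1/2}$ achieves equality. Substituting into the constraint, $\sum_h b_h c_h=C_{H,d}\sum_h (a_h b_h)^{1/2}=d$ by the definition $C_{H,d}=d\big(\sum_h(a_h b_h)^{1/2}\big)^{-1}$, so the candidate is feasible with the constraint tight; evaluating the objective, $\sum_h a_h c_h^{-1}=C_{H,d}^{-1}\sum_h(a_h b_h)^{1/2}=\frac{1}{d}\big(\sum_h(a_h b_h)^{1/2}\big)^2$, matching the lower bound. Hence the candidate is optimal and the minimum is as stated. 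No step here is genuinely difficult; the only point needing care is choosing the decomposition of $(a_h b_h)^{1/2}$ that turns Cauchy--Schwarz into a relation between the objective and the constraint (this is exactly the equality case $\sqrt{a_h/c_h}\propto\sqrt{b_h c_h}$, i.e.\ $c_h\propto(a_h/b_h)^{1/2}$, which motivates the candidate). Equivalently, one may derive the optimizer from the Lagrangian stationarity condition $-a_h c_h^{-2}+\lambda b_h=0$, with the constraint active because the objective strictly decreases in each $c_h$.
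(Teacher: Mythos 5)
Your proof is correct, but it takes a genuinely different route from the paper. The paper derives the optimizer via the Lagrange method: it writes down the Lagrangian, solves the stationarity conditions $-a_h c_h^{-2} + \lambda b_h = 0$ together with the active constraint $\sum_h b_h c_h = d$ to obtain $c_h = C_{H,d}(b_h/a_h)^{-1/2}$, and then substitutes back to compute the minimum value. This is constructive and mechanical, but as stated it only verifies first-order necessary conditions; global optimality of the stationary point is left implicit (it would follow from convexity of the objective on the positive orthant, which the paper does not spell out). Your Cauchy--Schwarz argument instead produces a universal lower bound $\sum_h a_h c_h^{-1} \ge \frac{1}{d}\big(\sum_h (a_h b_h)^{1/2}\big)^2$ valid for \emph{every} feasible point, and then exhibits a feasible point attaining it; this is a complete certificate of global optimality with no appeal to Lagrangian duality or convexity theory, and it also explains where the candidate comes from (the equality case of Cauchy--Schwarz forces $c_h \propto (a_h/b_h)^{1/2}$). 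In short: your route is more elementary and logically tighter; the paper's route is the standard constrained-optimization recipe, which you correctly note recovers the same optimizer from the stationarity condition.
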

\begin{proof}
    The result and the proof is inspired by \citet[Lem. 3.8]{pmlr-v178-weissmann22a}.\\
    We solve the constrained optimization problem by employing the Lagrange method, i.e 
    \begin{align*}
        \min_{\lambda, (c_h)_{h=0}^{H-1}} \sum_{h=0}^{H-1} a_h c_h^{-1} + \lambda \left( \sum_{h=0}^{H-1} b_h c_h - d \right).
    \end{align*}
    The first order conditions are given by 
    \begin{align*}
        -a_h  c_h^{-2} + \lambda b_h = 0 \quad \forall h =0,\dots, H-1, \quad \textrm{ and } \quad
        \sum_{h=0}^{H-1} b_h c_h- d = 0. 
    \end{align*}
    We deduce from the first equations, that there exists a constant $C_{H,d}$ such that $c_h = C_{H,d} \left(\frac{b_h}{a_h}\right)^{-\frac{1}{2}}$ for all $h =0,\dots, H-1$. Using this in the second equation we can solve for the constant
    \begin{align*}
        C_{H,d} = d \Big(\sum_{h=0}^{H-1} b_h^{\frac{1}{2}} a_h^{\frac{1}{2}}\Big)^{-1}.
    \end{align*}
    Using the minima $c_h = C_{H,d} \left(\frac{b_h}{a_h}\right)^{-\frac{1}{2}}$ in the optimization function $\sum_{h=0}^{H-1} a_h c_h^{-1}$, we obtain the minimum
    \begin{align*}
        \sum_{h=0}^{H-1} a_h c_h^{-1} &= \sum_{h=0}^{H-1} a_h C_{H,d}^{-1} \left(\frac{b_h}{a_h}\right)^{\frac{1}{2}}
        = C_{H,\epsilon}^{-1} \sum_{h=0}^{H-1} (a_h b_h)^{\frac{1}{2}}
        = \frac{1}{d} \left(\sum_{h=0}^{H-1} (a_h b_h)^{\frac{1}{2}}\right)^2.
    \end{align*}
\end{proof}

Finally, we are ready to state the detailed version of Theorem~\ref{thm:sample-complexity} for the value function error.

\begin{theorem}\label{thm:sample-complexity-detailed-value}
    [Detailed version of Theorem~\ref{thm:sample-complexity} (2.)]\\
    Let Assumption \ref{ass:for-softmax} hold true and the gradient be accessed exactly. Choose $\epsilon>0$ and set 
    \begin{align*}
        &H = \Big\lceil \log_\gg\Big(\frac{(1-\gg)\epsilon}{2R^\ast}\Big)\Big\rceil, \\
        & \epsilon_h = \frac{\epsilon}{2} \Big(\sum_{t=0}^{H-1} ((1-\gg^{t+1})\gg^{H-t-1})^{\frac{1}{2}}\Big)^{-1}  \Big(\frac{\gg^{H-h-1}}{(1-\gg^{h+1})}\Big)^{-\frac{1}{2}}, \\
        &\eta_h = \frac{1-\gg}{2R^\ast (1-\gg^{h+1})}\\
        &N_h = \Big\lceil\frac{4R^\ast (1-\gg^{h+1})|\cA|^2}{(1-\gg) \epsilon_h}\big\lVert \frac{1}{\mu}\big\rVert_\infty\Big\rceil,
    \end{align*}
    for $h=0,\dots H-1$. Then, the non-stationary policy $\hat\bfpi_H^\ast$ obtained by DynPG under exact gradients achieves $\lVert V_\infty^\ast - V_H^{\hat\bfpi_H^\ast}\rVert_\infty \leq \epsilon$. The total number of gradient steps are given by
    \begin{align*}
        &\sum_{h=0}^{H-1} N_h =\frac{8 R^\ast |\cA|^2}{(1-\gg)^2 \epsilon} \Big\lceil \frac{\log(2R^\ast(1-\gg)^{-1}\epsilon^{-1})}{\log(\gg^{-1})}\Big\rceil \Big\lVert \frac{1}{\mu}\Big\rVert_\infty.
    \end{align*}
\end{theorem}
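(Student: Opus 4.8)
The plan is to chain three ingredients already established: the value-function error decomposition \eqref{eq:value-func-error}, the per-epoch guarantee of \Cref{thm:one-step-softmax-error}, and the constrained-optimisation solution of \Cref{lem:a_k-b_k-c_k-optimization}. Under the tabular softmax parametrisation the approximation error vanishes, so $\sup_\gt T^{\pi_\gt}=T^\ast$ and \eqref{eq:value-func-error} collapses to $\lVert V_\infty^\ast - V_H^{\hat\bfpi_H^\ast}\rVert_\infty \leq \frac{\gg^H R^\ast}{1-\gg} + \sum_{h=0}^{H-1}\gg^{H-h-1}\lVert T^\ast(V_h^{\hat\bfpi_h^\ast}) - V_{h+1}^{\hat\bfpi_{h+1}^\ast}\rVert_\infty$. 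Because the prescribed $\eta_h$ and $N_h$ are exactly those of \Cref{thm:one-step-softmax-error}, each summand of the second term is bounded by $\epsilon_h$, and the whole argument reduces to controlling one truncation term and one accumulated optimisation term.

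First I would establish the accuracy $\lVert V_\infty^\ast - V_H^{\hat\bfpi_H^\ast}\rVert_\infty\leq\epsilon$ by splitting $\epsilon$ evenly. The truncation term obeys $\frac{\gg^H R^\ast}{1-\gg}\leq\frac\epsilon2$ precisely because $H=\lceil\log_\gg((1-\gg)\epsilon/(2R^\ast))\rceil$ forces $\gg^H\leq(1-\gg)\epsilon/(2R^\ast)$ (recall $\log\gg<0$). For the optimisation term I would substitute the prescribed $\epsilon_h$. Abbreviating $S:=\sum_{t=0}^{H-1}\big((1-\gg^{t+1})\gg^{H-t-1}\big)^{1/2}$, the definition of $\epsilon_h$ gives $\gg^{H-h-1}\epsilon_h=\frac{\epsilon}{2S}\big(\gg^{H-h-1}(1-\gg^{h+1})\big)^{1/2}$, so the sum over $h$ reconstructs $S$ and yields exactly $\sum_{h=0}^{H-1}\gg^{H-h-1}\epsilon_h=\frac\epsilon2$. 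Adding the two halves gives the claimed error bound.

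For the step count I would recognise that the prescribed $\epsilon_h$ is not an arbitrary feasible choice but precisely the Lagrange minimiser delivered by \Cref{lem:a_k-b_k-c_k-optimization} applied to \eqref{eq:optimization-problem-a-b-c} with $b_h=\gg^{H-h-1}$, $d=\epsilon/2$, and $a_h$ proportional to $(1-\gg^{h+1})$ (the factor appearing in $N_h$). The lemma evaluates the optimum of the objective \eqref{eq:total-interactions-with-environment-softmax} as $\frac{2}{\epsilon}\big(\sum_h(a_hb_h)^{1/2}\big)^2$, i.e.\ $\sum_{h=0}^{H-1}N_h=\frac{8R^\ast|\cA|^2}{(1-\gg)\epsilon}\lVert\frac{1}{\mu}\rVert_\infty\,S^2$. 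The closed form then follows from a Cauchy--Schwarz estimate of $S$: writing $S=\sum_{h=0}^{H-1}1\cdot\big((1-\gg^{h+1})\gg^{H-h-1}\big)^{1/2}$ gives $S^2\leq H\sum_{h=0}^{H-1}(1-\gg^{h+1})\gg^{H-h-1}\leq H\sum_{j=0}^{H-1}\gg^{j}\leq\frac{H}{1-\gg}$, and substituting this together with $H=\lceil\frac{\log(2R^\ast(1-\gg)^{-1}\epsilon^{-1})}{\log(\gg^{-1})}\rceil$ reproduces the stated bound $\frac{8R^\ast|\cA|^2}{(1-\gg)^2\epsilon}\lceil\cdots\rceil\lVert\frac{1}{\mu}\rVert_\infty$.

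I expect the accuracy half to be routine; the real work is the step-count half. The decisive move is the Cauchy--Schwarz bound, where crudely replacing $1-\gg^{h+1}$ by $1$ before summing the geometric series is exactly what collapses $S^2$ to $\frac{H}{1-\gg}$ and so keeps the $\gg$-dependence polynomial rather than letting the square-root structure of the optimal $\epsilon_h$ spoil it. A secondary bookkeeping issue is that each $N_h$ carries a ceiling, adding at most $H$ integer units in total; since the leading constant $\frac{8R^\ast|\cA|^2}{(1-\gg)^2\epsilon}$ exceeds $1$, these roundings are dominated and the displayed expression is to be read as an upper bound.
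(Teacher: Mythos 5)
Your proposal is correct and follows essentially the same route as the paper: zero approximation error collapses \eqref{eq:value-func-error}, \Cref{thm:one-step-softmax-error} gives the per-epoch bounds $\epsilon_h$, the choice of $H$ and the (verified) constraint $\sum_h \gg^{H-h-1}\epsilon_h = \epsilon/2$ split the error evenly, and \Cref{lem:a_k-b_k-c_k-optimization} plus the Cauchy--Schwarz/Jensen bound $S^2 \le H\sum_h(1-\gg^{h+1})\gg^{H-h-1} \le H/(1-\gg)$ yields the step count, exactly as in the paper's proof (which likewise discards the ceilings in $N_h$).
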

\begin{proof}
    First, note that the choice of $\epsilon_h$ in the theorem is the solution of the optimization problem 
    \begin{align*}
        \min_{(\epsilon_h)}\frac{4  R^\ast  |\cA|^2}{(1-\gg)}\Big\lVert \frac{1}{\mu}\Big\rVert_\infty \sum_{h=0}^{H-1} \frac{ (1-\gg^{h+1}) }{\epsilon_h} \textrm{ subject to } \sum_{h=0}^{H-1} \gg^{H-h-1} \epsilon_h \leq \frac{\epsilon}{2}. 
    \end{align*}
    Therefore, choose $a_h = (1-\gg^{h+1})$, $b_h = \gg^{H-h-1}$, $c_h = \epsilon_h$ and $d =\frac{\epsilon}{2}$ in \Cref{lem:a_k-b_k-c_k-optimization}, where we excluded the constant $\frac{4  R^\ast  |\cA|^2}{(1-\gg)}\Big\lVert \frac{1}{\mu}\Big\rVert_\infty $. Then, 
    $$\epsilon_h = C_{H,d} \left(\frac{b_h}{a_h}\right)^{-\frac{1}{2}} = \frac{\epsilon}{2} \Big(\sum_{t=0}^{H-1} ((1-\gg^{t+1})\gg^{H-t-1})^{\frac{1}{2}}\Big)^{-1}  \Big(\frac{\gg^{H-h-1}}{(1-\gg^{h+1})}\Big)^{-\frac{1}{2}}, $$
    is the optimal solution to this problem.  
    For $H = \lceil \frac{\log(\frac{(1-\gg)\epsilon}{2R^\ast})}{\log(\gg)}\rceil$ we have that $\frac{\gg^H R^\ast}{1-\gg} \leq \frac{\epsilon}{2}$. \\
    Using these $(\epsilon_h)_{h=0}^{H-1}$ in Theorem \ref{thm:one-step-softmax-error} results in a value function error for DynPG bounded by 
    \begin{align}\label{eq:help1}
        \lVert V_\infty^\ast - V_{H}^{\hat\bfpi_H^\ast} \rVert_\infty \leq \frac{\gg^H  R^\ast }{1-\gg} + \sum_{h=0}^{H-1}\gg^{H-h-1}\epsilon_h \leq \frac{\epsilon}{2}+ \frac{\epsilon}{2} = \epsilon.
    \end{align}
    For the optimal complexity bound we ignore the Gauss-brackets in $N_h$ and derive
    \begin{align*}
        \sum_{h=0}^{H-1} N_h 
        &= \frac{4R^\ast |\cA|^2}{(1-\gg)}\Big\lVert \frac{1}{\mu}\Big\rVert_\infty \sum_{h=0}^{H-1} a_h c_h^{-1}\\
        &=  \frac{4R^\ast |\cA|^2}{(1-\gg)}\Big\lVert \frac{1}{\mu}\Big\rVert_\infty  C_{H,d}^{-1} \sum_{h=0}^{H-1} a_h \Big(\frac{b_h}{a_h}\Big)^{\frac{1}{2}}\\
        &= \frac{4R^\ast |\cA|^2}{(1-\gg)}\Big\lVert \frac{1}{\mu}\Big\rVert_\infty  C_{H,d}^{-1} \sum_{h=0}^{H-1} \big(a_h b_h\big)^{\frac{1}{2}}\\
        &= \frac{4R^\ast |\cA|^2}{(1-\gg)}\Big\lVert \frac{1}{\mu}\Big\rVert_\infty  \frac{1}{d} \Big(\sum_{h=0}^{H-1} \big(a_h b_h\big)^{\frac{1}{2}}\Big)^2\\
        &= \frac{8R^\ast |\cA|^2}{(1-\gg) \epsilon}\Big\lVert \frac{1}{\mu}\Big\rVert_\infty  \Big(\sum_{h=0}^{H-1} \big((1-\gg^{h+1})\gg^{H-h-1} \big)^{\frac{1}{2}}\Big)^2\\
        &= \frac{8R^\ast |\cA|^2}{(1-\gg) \epsilon}\Big\lVert \frac{1}{\mu}\Big\rVert_\infty  \Big(\sum_{h=0}^{H-1} \big((\gg^{H-h-1}-\gg^{H}) \big)^{\frac{1}{2}}\Big)^2\\
        &\leq \frac{8R^\ast |\cA|^2}{(1-\gg) \epsilon}\Big\lVert \frac{1}{\mu}\Big\rVert_\infty   H \sum_{h=0}^{H-1} (\gg^{H-h-1}-\gg^{H}),
    \end{align*}
    where we used Jensen's inequality in the last step. Further it holds that
    \begin{align*}
        \sum_{h=0}^{H-1}  N_h 
        &\leq \frac{8R^\ast |\cA|^2}{(1-\gg) \epsilon}\Big\lVert \frac{1}{\mu}\Big\rVert_\infty H \sum_{h=0}^{H-1} (\gg^{H-h-1}-\gg^{H})\\
        &\leq \frac{8R^\ast |\cA|^2}{(1-\gg) \epsilon}\Big\lVert \frac{1}{\mu}\Big\rVert_\infty H \sum_{h=0}^{H-1} \gg^{h}\\
        &\leq \frac{8R^\ast |\cA|^2}{(1-\gg) \epsilon}\Big\lVert \frac{1}{\mu}\Big\rVert_\infty H \frac{1}{1-\gg}\\
        &= \frac{8R^\ast H |\cA|^2}{(1-\gg)^2 \epsilon}\Big\lVert \frac{1}{\mu}\Big\rVert_\infty.
    \end{align*}
    Finally, note that we can rewrite $H$ to be
    \begin{align*}
        H &= \Big\lceil \log_\gg\Big(\frac{(1-\gg)\epsilon}{2R^\ast}\Big)\Big\rceil
        = \Big\lceil \frac{\log\Big(\frac{(1-\gg)\epsilon}{2R^\ast}\Big)}{\log(\gg)}\Big\rceil
        = \Big\lceil \frac{\log((1-\gg)^{-1}\epsilon^{-1} 2R^\ast)}{\log(\gg^{-1})}\Big\rceil
    \end{align*}
\end{proof}

\paragraph{Overall error.}
To deal with the overall error, we first derive the following upper bound.
\begin{lemma}\label{lem:decompose-stationary-policy-error}
    Assume zero approximation error, i.e. $T^\ast = \sup_\gt T^{\pi_\gt}$. Further, assume bounded optimization errors $\lVert T^\ast (V_{h}^{\hat\bfpi_h^\ast}) - T^{\hat\pi_h^\ast} (V_{h}^{\hat\bfpi_{h}^\ast}) \rVert_\infty \leq \epsilon_h $ for all $h=0,\dots, H$. If $\epsilon_H \leq (1-\gg) \sum_{h=0}^{H-1} \gg^{H-h-1}\epsilon_h$, then the overall error of DynPG is bounded by 
    \begin{align*}
    \lVert V_\infty^\ast - V_\infty^{\hat\pi_H^\ast} \rVert_\infty 
    \leq \frac{3}{1-\gg} \Big(\frac{\gg^{H}R^\ast}{1-\gg} + \sum_{h=0}^{H-1} \gg^{H-h-1} \epsilon_h\Big).
    \end{align*} 
\end{lemma}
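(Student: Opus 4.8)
The plan is to start from the overall error decomposition of \Cref{prop:error-decomposition} and, under the zero approximation error hypothesis, reduce everything to the single quantity
\[
E_H := \frac{\gg^H R^\ast}{1-\gg} + \sum_{h=0}^{H-1} \gg^{H-h-1}\epsilon_h,
\]
the value function error bound. Under $T^\ast = \sup_\gt T^{\pi_\gt}$ the approximation error term in \eqref{eq:overall-error} vanishes, and by the Bellman identity \eqref{eq:Bellman-iteration-dynPG} we have $V_{h+1}^{\hat\bfpi_{h+1}^\ast} = T^{\hat\pi_h^\ast}(V_h^{\hat\bfpi_h^\ast})$, so each summand of the accumulated optimization error equals $\lVert T^\ast(V_h^{\hat\bfpi_h^\ast}) - T^{\hat\pi_h^\ast}(V_h^{\hat\bfpi_h^\ast})\rVert_\infty \le \epsilon_h$. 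Hence the first three terms of \eqref{eq:overall-error} are bounded by $E_H$, and it only remains to control the stationary policy error $\frac{1}{1-\gg}\lVert V_{H+1}^{\hat\bfpi_{H+1}^\ast} - V_H^{\hat\bfpi_H^\ast}\rVert_\infty$. The same reduction applied to \eqref{eq:value-func-error} (as in \Cref{cor:convergence-for-error-sequence-ass}) already gives $\lVert V_\infty^\ast - V_H^{\hat\bfpi_H^\ast}\rVert_\infty \le E_H$, which I will reuse below.

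The key step is bounding $\lVert V_{H+1}^{\hat\bfpi_{H+1}^\ast} - V_H^{\hat\bfpi_H^\ast}\rVert_\infty$. Writing again $V_{H+1}^{\hat\bfpi_{H+1}^\ast} = T^{\hat\pi_H^\ast}(V_H^{\hat\bfpi_H^\ast})$ and inserting $T^\ast(V_H^{\hat\bfpi_H^\ast})$ through the triangle inequality, I split it into $\lVert T^{\hat\pi_H^\ast}(V_H^{\hat\bfpi_H^\ast}) - T^\ast(V_H^{\hat\bfpi_H^\ast})\rVert_\infty$, which is at most $\epsilon_H$ by the optimization error hypothesis at level $h=H$, and $\lVert T^\ast(V_H^{\hat\bfpi_H^\ast}) - V_H^{\hat\bfpi_H^\ast}\rVert_\infty$. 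For the latter I exploit that $V_\infty^\ast$ is the fixed point of the $\gg$-contraction $T^\ast$: inserting $V_\infty^\ast$ and contracting one piece yields
\[
\lVert T^\ast(V_H^{\hat\bfpi_H^\ast}) - V_H^{\hat\bfpi_H^\ast}\rVert_\infty \le \gg\lVert V_\infty^\ast - V_H^{\hat\bfpi_H^\ast}\rVert_\infty + \lVert V_\infty^\ast - V_H^{\hat\bfpi_H^\ast}\rVert_\infty \le (1+\gg)E_H.
\]
Combining the two pieces gives $\lVert V_{H+1}^{\hat\bfpi_{H+1}^\ast} - V_H^{\hat\bfpi_H^\ast}\rVert_\infty \le \epsilon_H + (1+\gg)E_H$.

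Finally I invoke the side condition $\epsilon_H \le (1-\gg)\sum_{h=0}^{H-1}\gg^{H-h-1}\epsilon_h \le (1-\gg)E_H$ to conclude $\epsilon_H + (1+\gg)E_H \le (1-\gg)E_H + (1+\gg)E_H = 2E_H$, so the stationary policy error is at most $\frac{2}{1-\gg}E_H$. Adding the $E_H$ from the first three terms yields the overall bound $\big(1+\tfrac{2}{1-\gg}\big)E_H = \tfrac{3-\gg}{1-\gg}E_H \le \tfrac{3}{1-\gg}E_H$, which is exactly the claim. The only delicate point is the second step: recognizing that the stationary policy error can be traded against the already established value function error via the contraction of $T^\ast$, and that the hypothesis on $\epsilon_H$ is precisely what absorbs the leftover $\epsilon_H$ term so the constant stays $3/(1-\gg)$ instead of degrading; the rest is triangle inequalities together with the Bellman identity \eqref{eq:Bellman-iteration-dynPG}.
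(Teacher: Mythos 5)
Your proposal is correct: every step checks out, and you arrive at the same constant $\tfrac{3}{1-\gg}$ via $\tfrac{3-\gg}{1-\gg}E_H \le \tfrac{3}{1-\gg}E_H$. The difference from the paper lies in how the stationary-policy term $\lVert V_{H+1}^{\hat\bfpi_{H+1}^\ast} - V_H^{\hat\bfpi_H^\ast}\rVert_\infty$ is bounded by $2E_H$. The paper inserts $V_\infty^\ast$ by the triangle inequality and then invokes the value-function error bound of \Cref{prop:error-decomposition} \emph{twice}, at horizons $H{+}1$ and $H$, so that the $h=H$ hypothesis enters through the sum $\sum_{h=0}^{H}\gg^{H-h}\epsilon_h$, which the side condition on $\epsilon_H$ collapses back to $\sum_{h=0}^{H-1}\gg^{H-h-1}\epsilon_h$. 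You instead insert $T^\ast(V_H^{\hat\bfpi_H^\ast})$, bound the first piece by $\epsilon_H$ directly from the $h=H$ hypothesis, and handle the Bellman residual $\lVert T^\ast(V_H^{\hat\bfpi_H^\ast}) - V_H^{\hat\bfpi_H^\ast}\rVert_\infty$ by one application of the $\gg$-contraction of $T^\ast$ around its fixed point $V_\infty^\ast$, needing the value-function bound only at horizon $H$; the side condition then absorbs the standalone $\epsilon_H$ via $\epsilon_H + (1+\gg)E_H \le 2E_H$. What your route buys is self-containedness for that one term: you avoid re-deriving the horizon-$(H{+}1)$ decomposition and make explicit where the contraction and the fixed-point property do the work (in the paper these are buried inside the induction proving \Cref{prop:error-decomposition}). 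What the paper's route buys is brevity once \Cref{prop:error-decomposition} is on the table, since it reuses the same inequality as a black box at two horizons. Both uses of the hypothesis on $\epsilon_H$ are the same absorption trick in different clothing, and both yield exactly $2E_H$, so neither argument is looser than the other.
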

\begin{proof}
First, we have by triangle inequality
\begin{align*}
    \lVert  V_{H+1}^{\hat\bfpi_{H+1}^\ast} - V_H^{\hat\bfpi_H^\ast}  \rVert_\infty 
    &\leq  \lVert V_{H+1}^{\hat\bfpi_{H+1}^\ast} - V_\infty^\ast \rVert_\infty
    +  \lVert   V_\infty^\ast - V_H^{\hat\bfpi_H^\ast}  \rVert_\infty.
\end{align*}   
By Proposition \ref{prop:error-decomposition} we obtain, under zero approximation error, that
\begin{align*}
    \lVert V_{H+1}^{\hat\bfpi_{H+1}^\ast} - V_\infty^\ast \rVert_\infty
    +  \lVert   V_\infty^\ast - V_H^{\hat\bfpi_H^\ast}  \rVert_\infty 
    &\leq \frac{\gg^{H+1}R^\ast}{1-\gg} + \sum_{h=0}^H \gg^{H-h} \epsilon_h + \frac{\gg^{H}R^\ast}{1-\gg} + \sum_{h=0}^{H-1} \gg^{H-h-1} \epsilon_h.
\end{align*} 
By the assumption $\epsilon_H \leq (1-\gg) \sum_{h=0}^{H-1} \gg^{H-h-1}\epsilon_h$, it holds further that
\begin{align*}
    \sum_{h=0}^H \gg^{H-h} \epsilon_h \leq \gg \sum_{h=0}^{H-1} \gg^{H-h-1} \epsilon_h + \epsilon_H \leq \sum_{h=0}^{H-1} \gg^{H-h-1} \epsilon_h.
\end{align*}
We obtain 
\begin{align*}
    \lVert  V_{H+1}^{\hat\bfpi_{H+1}^\ast} - V_H^{\hat\bfpi_H^\ast}  \rVert_\infty 
    &\leq  \frac{2 \gg^{H}R^\ast}{1-\gg} + 2 \sum_{h=0}^{H-1} \gg^{H-h-1} \epsilon_h.
\end{align*}  
We deduce for the overall error (by Proposition~\ref{prop:error-decomposition} and under zero approximation error) that
\begin{align*}
    \lVert   V_\infty^\ast - V_\infty^{\hat\pi_H^\ast}  \rVert_\infty 
    &\leq \frac{\gg^{H}R^\ast}{1-\gg} +\sum_{h=0}^{H-1} \gg^{H-h-1} \epsilon_h + \frac{1}{1-\gg} \left(\frac{2 \gg^{H}R^\ast}{1-\gg} + 2 \sum_{h=0}^{H-1} \gg^{H-h-1} \epsilon_h \right)\\
    &\leq  \frac{3}{1-\gg} \Big(\frac{\gg^{H}R^\ast}{1-\gg} + \sum_{h=0}^{H-1} \gg^{H-h-1} \epsilon_h\Big).
\end{align*}
\end{proof}

It is important to notice that the overall error is upper bounded by the same error terms, $\frac{\gg^{H}R^\ast}{1-\gg} + \sum_{h=0}^{H-1} \gg^{H-h-1} \epsilon_h$, as the value function error (under zero approximation error) up to the constant $\frac{3}{1-\gg}$.
Thus, we can obtain the result for the overall error by substituting $\epsilon$ with $\frac{(1-\gg)\epsilon}{3}$.

\begin{theorem}\label{thm:sample-complexity-detailed-overall}
    [Detailed version of Theorem~\ref{thm:sample-complexity} (1.)]\\
    Let Assumption \ref{ass:for-softmax} hold true and the gradient be accessed exactly. Choose $\epsilon>0$ and set 
    \begin{align*}
        &H = \big\lceil \log_\gg\big(\frac{(1-\gg)^2\epsilon}{6R^\ast}\big)\big\rceil, \\
        & \epsilon_h = \frac{\epsilon(1-\gg)}{6} \Big(\sum_{h=0}^{H-1} (((1-\gg^{h+1})\gg^{H-h-1})^{\frac{1}{2}} \Big)^{-1}\Big(\frac{\gg^{H-h-1}}{(1-\gg^{h+1})}\Big)^{-\frac{1}{2}}, \quad \forall h=0 \dots, H-1\\
        &\epsilon_H = \frac{(1-\gg) \epsilon}{6} \\
        &\eta_h = \frac{1-\gg}{2R^\ast (1-\gg^{h+1})},\quad \forall h=0 \dots, H\\
        &N_h = \Big\lceil\frac{4R^\ast (1-\gg^{h+1})|\cA|^2}{(1-\gg) \epsilon_h}\Big\lVert \frac{1}{\mu}\Big\rVert_\infty\Big\rceil,\quad \forall h=0 \dots, H.\\
    \end{align*}
    Then, the stationary policy $\hat\pi_H^\ast$ obtained by DynPG under exact gradients achieves $\lVert V_\infty^\ast - V_\infty^{\hat\pi_H^\ast}\rVert_\infty \leq \epsilon$. The total number of gradient steps are given by
    \begin{align*}
        &\sum_{h=0}^{H} N_h =\frac{48 R^\ast |\cA|^2}{(1-\gg)^3 \epsilon} \Big\lceil \frac{\log(6R^\ast(1-\gg)^{-2}\epsilon^{-1})}{\log(\gg^{-1})}\Big\rceil \Big\lVert \frac{1}{\mu}\Big\rVert_\infty.
    \end{align*}
\end{theorem}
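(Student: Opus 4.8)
The plan is to reduce the overall-error analysis to the value-function-error analysis already carried out in \Cref{thm:sample-complexity-detailed-value}, exploiting \Cref{lem:decompose-stationary-policy-error}. That lemma shows that, under zero approximation error and one-step optimization errors bounded by $\epsilon_h$ (which \Cref{thm:one-step-softmax-error} guarantees for the stated $\eta_h$ and $N_h$), the overall error is controlled by the \emph{same} combination $\frac{\gg^H R^\ast}{1-\gg} + \sum_{h=0}^{H-1}\gg^{H-h-1}\epsilon_h$ that governs the value function error, inflated only by the factor $\frac{3}{1-\gg}$. Hence it suffices to drive this combination below $\frac{(1-\gg)\epsilon}{3}$; equivalently, I would run the entire value-function-error argument with the target accuracy $\epsilon$ replaced by $\tilde\epsilon := \frac{(1-\gg)\epsilon}{3}$.

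First I would split $\tilde\epsilon$ evenly between truncation and optimization, as in \Cref{thm:sample-complexity-detailed-value}. Forcing $\frac{\gg^H R^\ast}{1-\gg}\leq \frac{\tilde\epsilon}{2} = \frac{(1-\gg)\epsilon}{6}$ yields $H=\lceil\log_\gg(\frac{(1-\gg)^2\epsilon}{6R^\ast})\rceil$, matching the statement. For the optimization budget I would invoke \Cref{lem:a_k-b_k-c_k-optimization} with $a_h=1-\gg^{h+1}$, $b_h=\gg^{H-h-1}$ and $d=\frac{(1-\gg)\epsilon}{6}$; its minimizer is exactly the $\epsilon_h$ listed in the theorem, and at this minimizer the constraint is tight, i.e. $\sum_{h=0}^{H-1}\gg^{H-h-1}\epsilon_h=\frac{(1-\gg)\epsilon}{6}$. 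Reusing verbatim the Jensen-plus-geometric-series estimate from the value-function proof then gives $\sum_{h=0}^{H-1}N_h \leq \frac{24R^\ast|\cA|^2}{(1-\gg)^3\epsilon}\lceil\frac{\log(6R^\ast(1-\gg)^{-2}\epsilon^{-1})}{\log(\gg^{-1})}\rceil\lVert\frac{1}{\mu}\rVert_\infty$, i.e. the value-function bound under $\epsilon\to\tilde\epsilon$, which triples the prefactor from $8$ to $24$ and adds the expected power of $(1-\gg)^{-1}$.

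It remains to account for the one extra epoch $H$ that the stationary-policy error forces upon us through \Cref{lem:decompose-stationary-policy-error}. Using $1-\gg^{H+1}\leq 1$ together with $\epsilon_H=\frac{(1-\gg)\epsilon}{6}$, the definition of $N_H$ gives $N_H\leq \frac{24R^\ast|\cA|^2}{(1-\gg)^2\epsilon}\lVert\frac{1}{\mu}\rVert_\infty$, which is of strictly lower order; bounding $(1-\gg)^{-2}\leq (1-\gg)^{-3}\lceil\cdots\rceil$ absorbs it into the same expression, so that $\sum_{h=0}^H N_h = \sum_{h=0}^{H-1}N_h + N_H$ is at most twice the previous bound, producing the claimed prefactor $48$.

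The step that needs the most care is the interface with \Cref{lem:decompose-stationary-policy-error}: its hypothesis $\epsilon_H\leq (1-\gg)\sum_{h=0}^{H-1}\gg^{H-h-1}\epsilon_h$ is not automatic for the stated $\epsilon_H$, so I would either (i) shrink $\epsilon_H$ to $\frac{(1-\gg)^2\epsilon}{6}$ to meet it cleanly, which leaves $N_H$ of order $(1-\gg)^{-3}$ but still a log-factor below $\sum_{h<H}N_h$ and hence harmless to the constant, or (ii) bypass the lemma and feed the two value-function bounds for $\lVert V_{H+1}^{\hat\bfpi_{H+1}^\ast}-V_\infty^\ast\rVert_\infty$ and $\lVert V_\infty^\ast-V_H^{\hat\bfpi_H^\ast}\rVert_\infty$ directly into the fourth term of \Cref{prop:error-decomposition}; a short computation then shows the overall error is at most $\frac{5\epsilon}{6}\leq\epsilon$. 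I would adopt route (ii), since it keeps the lower-order $N_H$ and yields the sharpest constant.
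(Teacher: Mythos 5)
Your proposal is correct, and its skeleton is the same as the paper's: reduce to the value-function analysis of \Cref{thm:sample-complexity-detailed-value} with target accuracy $\tilde\epsilon=\frac{(1-\gg)\epsilon}{3}$ split evenly between truncation and optimization, obtain the stated $H$ and (via \Cref{lem:a_k-b_k-c_k-optimization}, whose constraint is tight at the optimizer, $\sum_{h=0}^{H-1}\gg^{H-h-1}\epsilon_h=\frac{(1-\gg)\epsilon}{6}$) the stated $\epsilon_h$, giving $\sum_{h=0}^{H-1}N_h\leq \frac{24R^\ast|\cA|^2}{(1-\gg)^3\epsilon}\lceil\cdot\rceil\lVert\frac{1}{\mu}\rVert_\infty$, and then absorb the extra epoch $N_H$ into the prefactor $48$. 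Where you deviate is exactly where the paper's own proof is shaky, and your deviation repairs it. The paper concludes by invoking \Cref{lem:decompose-stationary-policy-error}, asserting that ``$\epsilon_H\leq\frac{(1-\gg)\epsilon}{6}$ is sufficient'' for its hypothesis; but that hypothesis reads $\epsilon_H\leq(1-\gg)\sum_{h=0}^{H-1}\gg^{H-h-1}\epsilon_h=\frac{(1-\gg)^2\epsilon}{6}$, which the stated $\epsilon_H=\frac{(1-\gg)\epsilon}{6}$ violates by a factor $(1-\gg)$ --- precisely the mismatch you flagged. Your route (ii) is the correct fix and proves the theorem \emph{exactly as stated}: bounding $\lVert V_\infty^\ast-V_H^{\hat\bfpi_H^\ast}\rVert_\infty\leq 2\cdot\frac{(1-\gg)\epsilon}{6}$ and $\lVert V_\infty^\ast-V_{H+1}^{\hat\bfpi_{H+1}^\ast}\rVert_\infty\leq(1+2\gg)\frac{(1-\gg)\epsilon}{6}$ and feeding both into the fourth term of \Cref{prop:error-decomposition} gives an overall error of at most $\frac{\epsilon}{6}\bigl(2(1-\gg)+3+2\gg\bigr)=\frac{5\epsilon}{6}\leq\epsilon$, the $\gg$-dependence cancelling; in effect you rerun the proof of \Cref{lem:decompose-stationary-policy-error} without wasting the bound $1+\frac{3}{1-\gg}\leq\frac{3}{1-\gg}$-type slack that forces the lemma's restrictive hypothesis. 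Your route (i) (shrinking $\epsilon_H$ to $\frac{(1-\gg)^2\epsilon}{6}$) also closes the gap and still yields the prefactor $48$, but it proves a variant of the statement with a different $\epsilon_H$ and hence a larger $N_H$, so your preference for route (ii) is the right call.
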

\begin{proof}
    The optimization procedure is the same as for the value function error by substituting $\epsilon$ with $\frac{(1-\gg)\epsilon}{3}$. Moreover, note that 
    \begin{align*}
        \sum_{h=0}^{H-1} \gg^{H-h-1} \epsilon_h = \frac{(1-\gg)\epsilon}{6}.
    \end{align*}
    Thus, $\epsilon_H \leq \frac{(1-\gg) \epsilon}{6}$ is sufficient for Lemma~\ref{lem:decompose-stationary-policy-error} to hold, and we obtain that using these $(\epsilon_h)$'s in Theorem \ref{thm:one-step-softmax-error} results in an overall error for DynPG given by
    \begin{align*}
        \lVert V_\infty^\ast - V_{\infty}^{\hat\pi_H^\ast} \rVert_\infty \leq\frac{3}{1-\gg} \Big(\frac{\gg^{H}R^\ast}{1-\gg} + \sum_{h=0}^{H-1} \gg^{H-h-1} \epsilon_h\Big) \leq \frac{3}{1-\gg} \frac{\epsilon (1-\gg)}{3} = \epsilon.
    \end{align*}
    Note that we can again rewrite $H$ to be
    \begin{align*}
        H &= \Big\lceil \log_\gg\Big(\frac{(1-\gg)^2\epsilon}{6R^\ast}\Big)\Big\rceil
        = \Big\lceil \frac{\log((1-\gg)^{-2}\epsilon^{-1} 6R^\ast)}{\log(\gg^{-1})}\Big\rceil.
    \end{align*}
    Thus, the complexity bounds for the optimization epochs $h=0,\dots,H$ are given by
    \begin{align*}
        \sum_{h=0}^{H} N_h  
        &= \sum_{h=0}^{H-1} N_h + N_H \\
        &= \frac{24 R^\ast |\cA|^2}{(1-\gg)^3 \epsilon} \Big\lceil \frac{\log((1-\gg)^{-2}\epsilon^{-1} 6R^\ast)}{\log(\gg^{-1})}\Big\rceil \Big\lVert \frac{1}{\mu}\Big\rVert_\infty +  \frac{24 R^\ast (1-\gg^{H+1})|\cA|^2}{(1-\gg)^2 \epsilon} \Big\lVert \frac{1}{\mu}\Big\rVert_\infty  \\
        &\leq \frac{48 R^\ast |\cA|^2}{(1-\gg)^3 \epsilon}\Big\lceil \frac{\log((1-\gg)^{-2}\epsilon^{-1} 6R^\ast)}{\log(\gg^{-1})}\Big\rceil \Big\lVert \frac{1}{\mu}\Big\rVert_\infty ,
    \end{align*}
    where we used the calculations in the proof of \Cref{thm:sample-complexity-detailed-value} in the first step and substituted $\epsilon$ by $\frac{(1-\gg)\epsilon}{3}$.
\end{proof}


\begin{lemma}\label{lem:asymptotic-equivalence}
    The term $\log\big( \gg^{-1}\big) = -\log(\gg)$ is asymptotically equivalent to $(1-\gg)$ for $\gg \uparrow 1$.
\end{lemma}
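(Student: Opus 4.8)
The plan is to read ``asymptotically equivalent as $\gg \uparrow 1$'' in the standard sense, namely that the ratio of the two quantities tends to one, and then to verify the limit
\[
\lim_{\gg \uparrow 1} \frac{-\log(\gg)}{1-\gg} = 1
\]
by an elementary calculus argument. First I would observe that both the numerator $-\log(\gg)$ and the denominator $1-\gg$ vanish as $\gg \uparrow 1$, so that the quotient is an indeterminate form of type $0/0$ and L'H\^opital's rule is applicable.

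Applying L'H\^opital's rule, I differentiate numerator and denominator with respect to $\gg$ to obtain $-1/\gg$ and $-1$, respectively, whence the quotient equals $1/\gg$; letting $\gg \uparrow 1$ gives the claimed limit of $1$. Equivalently, one may substitute $x = 1-\gg$ and invoke the power series $\log(1-x) = -\sum_{k\geq 1} x^k/k$, valid for $|x|<1$, which yields
\[
\frac{-\log(\gg)}{1-\gg} = \frac{-\log(1-x)}{x} = 1 + \frac{x}{2} + \frac{x^2}{3} + \cdots \longrightarrow 1 \quad \text{as } x \downarrow 0 .
\]

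There is no genuine obstacle in this lemma; it is a routine limit computation. The only points requiring mild care are recording the precise meaning of asymptotic equivalence and confirming the $0/0$ indeterminacy before invoking L'H\^opital (or, in the series approach, noting that the expansion converges for $\gg$ sufficiently close to $1$). The result is used only to translate the factor $\log(\gg^{-1})$ appearing in the denominators of the complexity bounds in \Cref{thm:sample-complexity} into the effective-horizon factor $(1-\gg)$, thereby exposing the polynomial dependence on $(1-\gg)^{-1}$.
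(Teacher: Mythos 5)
Your proof is correct and matches the paper's own argument, which likewise applies L'H\^opital's rule to $\lim_{\gg \uparrow 1} \frac{-\log(\gg)}{1-\gg}$ after noting the $0/0$ form. The power-series alternative you mention is a fine bonus but does not change the substance; both routes establish the same ratio limit of $1$.
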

\begin{proof}
    By the definition of asymptotic equivalence for two functions $f$ and $g$ we have to show that 
    \begin{align*}
        \lim_{x\uparrow 1} \frac{f(x)}{g(x)} = 1.
    \end{align*}
    It holds by L'Hospital rule that
    \begin{align*}
        \lim_{\gg \uparrow 1} \frac{-\log(\gg)}{(1-\gg)} = \lim_{\gg \uparrow 1} \frac{-\gg^{-1}}{-1} = 1.
    \end{align*}
\end{proof}

\subsection{Proof of~\Cref{thm:conv-DynPG-stochastic-one-step}}\label{sec:proof-stochastic-one-step-result}

To prove \Cref{thm:conv-DynPG-stochastic-one-step} we adapt the proof ideas in \citet[Sec. D.2]{klein2024beyond} for finite-time dynamic policy gradient to our discounted setting. 
We restate all results and show how to adapt the proofs.

We first verify that the gradient estimator in \Cref{subsec:softmax-theory-stochastic} is unbiased and has bounded variance.
\begin{lemma}\label{lem:unbiased-bounded-var}
   Consider the estimator $\hat{G}_{K_h}(\gt)$ from \Cref{subsec:softmax-theory-stochastic}. For any $K_h>0$ it holds that 
   \begin{align*}
     \E_{\mu}^{\{\pi_\gt,\Lambda\}}[\widehat{G}_{K_h}(\gt)] = \nabla_\gt V_{h+1}^{\{\pi_\gt,\Lambda\}}(\mu)
   \end{align*}
   and under softmax parametrization
   \begin{align*}
      \E_{\mu}^{\{\pi_\gt,\Lambda\}}[\lVert \widehat{G}_{K_h}(\gt) - \nabla_\gt V_{h+1}^{\{\pi_\gt,\Lambda\}}(\mu)\rVert^2] \leq \frac{ 5(\frac{1-\gg^{h+1}}{1-\gg} R^\ast)^2}{K_h} =: \frac{\xi_h}{K_h}
   \end{align*}
\end{lemma}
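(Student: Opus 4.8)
The plan is to prove the two assertions in order, exploiting throughout that $\widehat{G}_{K_h}(\gt)$ is an empirical mean of $K_h$ independent copies of the single-trajectory estimator $X:=\nabla_\gt\log(\pi_\gt(a_0|s_0))\,\hat{Q}_{h+1}$, so that both statements reduce to a statement about one such copy together with a routine i.i.d.\ averaging argument.

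For unbiasedness I would condition on the first pair $(s_0,a_0)$. Since the subsequent actions $a_1,\dots,a_h$ are drawn from the \emph{fixed} policies stored in $\Lambda=\hat\bfpi_h^\ast$ and the states from $p$, the truncated return $\hat{Q}_{h+1}=\sum_{k=0}^{h}\gg^k r(s_k,a_k)$ is conditionally unbiased, $\E[\hat{Q}_{h+1}\mid s_0,a_0]=Q_{h+1}^{\hat\bfpi_h^\ast}(s_0,a_0)$; this is exactly the trajectory expansion of the $Q$-function recorded in \Cref{sec:auxillary-results}. The tower property then yields $\E[X]=\E_{s_0\sim\mu,\,a_0\sim\pi_\gt}[\nabla_\gt\log(\pi_\gt(a_0|s_0))\,Q_{h+1}^{\hat\bfpi_h^\ast}(s_0,a_0)]$, and the policy gradient theorem for DynPG, \Cref{thm:policy-gradient-dynPG} (summed against $\mu$), identifies this with $\nabla_\gt V_{h+1}^{\{\pi_\gt,\Lambda\}}(\mu)$. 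Linearity of expectation over the $K_h$ identical copies gives the first claim.

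For the variance I first use independence to transfer the $1/K_h$ factor, namely $\E[\lVert\widehat{G}_{K_h}(\gt)-\nabla_\gt V_{h+1}^{\{\pi_\gt,\Lambda\}}(\mu)\rVert^2]=\tfrac{1}{K_h}\E[\lVert X-\nabla_\gt V_{h+1}^{\{\pi_\gt,\Lambda\}}(\mu)\rVert^2]\le\tfrac{1}{K_h}\E[\lVert X\rVert^2]$, bounding the variance by the second moment. It then remains to control $\lVert X\rVert^2=\lVert\nabla_\gt\log(\pi_\gt(a_0|s_0))\rVert^2\,\hat{Q}_{h+1}^2$ pathwise by two elementary estimates: the geometric bound $|\hat{Q}_{h+1}|\le\sum_{k=0}^{h}\gg^k R^\ast=\tfrac{1-\gg^{h+1}}{1-\gg}R^\ast$, and, from the softmax score identity $\partial_{\gt(s',a')}\log\pi_\gt(a_0|s_0)=\mathbf{1}_{\{s_0=s'\}}(\mathbf{1}_{\{a_0=a'\}}-\pi_\gt(a'|s_0))$, the bound $\lVert\nabla_\gt\log\pi_\gt(a_0|s_0)\rVert^2=(1-\pi_\gt(a_0|s_0))^2+\sum_{a\neq a_0}\pi_\gt(a|s_0)^2\le 2(1-\pi_\gt(a_0|s_0))^2\le 2$, where the middle inequality uses $\sum_{a\neq a_0}\pi_\gt(a|s_0)^2\le(\sum_{a\neq a_0}\pi_\gt(a|s_0))^2=(1-\pi_\gt(a_0|s_0))^2$. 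Multiplying the two bounds gives $\E[\lVert X\rVert^2]\le 2\big(\tfrac{1-\gg^{h+1}}{1-\gg}R^\ast\big)^2\le 5\big(\tfrac{1-\gg^{h+1}}{1-\gg}R^\ast\big)^2=\xi_h$.

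There is no deep obstacle here; the statement is essentially bookkeeping adapted from \citet[Sec.~D.2]{klein2024beyond} to the discounted setting. The only points requiring care are (i) verifying the conditional unbiasedness of $\hat{Q}_{h+1}$, which genuinely relies on the future policy $\Lambda$ being held fixed during the roll-out, and (ii) matching the stated constant: the natural estimate yields the factor $2$, and I would simply remark that the advertised $\xi_h=5\big(\tfrac{1-\gg^{h+1}}{1-\gg}R^\ast\big)^2$ is a deliberately loose over-estimate, consistent with the constant used in the finite-horizon analysis.
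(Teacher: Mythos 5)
Your proof is correct, and the unbiasedness part follows the same route as the paper: tower property over the first state--action pair, conditional unbiasedness of $\hat{Q}_{h+1}$ under the frozen future policies in $\Lambda$, and then \Cref{thm:policy-gradient-dynPG} integrated against $\mu$. For the variance bound, however, you take a genuinely different and more self-contained path. The paper, after the same i.i.d.\ reduction to a single trajectory, expands the centered square coordinate-wise in $(s,a)$,
\begin{align*}
  \E\Big[ \sum_{s\in\cS}\sum_{a\in\cA} \Big(\mathbf{1}_{s=S_0}(\mathbf{1}_{a=A_0}-\pi_\gt(a|s))\,\hat{Q}_{h+1} - \mu(s)\pi_\gt(a|s) Q_{h+1}^{\Lambda}(s,a)\Big)^2 \Big],
\end{align*}
and then defers line-by-line to \citet[Lem.~D.6]{klein2024beyond}, only swapping the finite-horizon reward bound $(H-h)R^\ast$ for $\frac{1-\gg^{h+1}}{1-\gg}R^\ast$; the constant $5$ is inherited from that external computation. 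You instead discard the cross term via $\E\lVert X-\E X\rVert^2 \le \E\lVert X\rVert^2$ and bound the raw second moment pathwise, using the geometric bound on $|\hat{Q}_{h+1}|$ together with the softmax score-norm estimate $\lVert \nabla_\gt \log \pi_\gt(a_0|s_0)\rVert^2 = (1-\pi_\gt(a_0|s_0))^2 + \sum_{a\neq a_0}\pi_\gt(a|s_0)^2 \le 2$ (the inequality $\sum_{a\neq a_0}\pi_\gt(a|s_0)^2 \le \big(\sum_{a\neq a_0}\pi_\gt(a|s_0)\big)^2$ is valid since all summands are nonnegative). This yields the constant $2$ in place of $5$, so the stated bound holds a fortiori; your closing remark that $\xi_h$ is a deliberately loose over-estimate is exactly right. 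What each approach buys: the paper's route recycles the finite-horizon computation at the cost of being non-self-contained and carrying a looser constant, while yours is elementary, fully contained within the present paper, and sharper --- it could replace the paper's argument with no loss, and would in fact slightly improve the downstream batch-size requirements where $\xi_h$ enters.
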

\begin{proof}
   By the definition of $\widehat{G}_{K_h}(\gt)$ we have
   \begin{align*}
     \E_{\mu}^{\{\pi_\gt,\Lambda\}}[\widehat{G}_{K_h}(\gt)]=\E_{\mu}^{\{\pi_\gt,\Lambda\}}\Big[ \nabla \log( \pi_\gt(A_0 | S_0))  \sum_{k=0}^{h} \gg^k r(S_k,A_k) \Big],
   \end{align*}
   as in \citet[Lem. D.6]{klein2024beyond} and from the DynPG policy gradient theorem (\Cref{thm:policy-gradient-dynPG}), we obtain the unbiasedness directly.
   
   For the second claim, we first obtain that 
   \begin{align*}
     &\E_{\mu}^{\{\pi_\gt,\Lambda\}}\Bigl[\lVert \widehat{G}_{K_h}(\gt) -  \nabla_\gt V_{h+1}^{\{\pi_\gt,\Lambda\}}(\mu) \rVert^2\Bigr]\\
     &\leq \frac{1}{K_h}\E_{\mu}^{\{\pi_\gt,\Lambda\}}\Bigl[\lVert \nabla\log(\pi_\gt(A_0|S_0)) \hat{Q}_{h+1}(S_0,A_0) -  \nabla_\gt V_{h+1}^{\{\pi_\gt,\Lambda\}}(\mu) \rVert^2\Big]\\
     &= \frac{1}{K_h} \E_{\mu}^{\{\pi_\gt,\Lambda\}}\Big[ \sum_{s\in\cS} \sum_{a\in\cA} \Big(\mathbf{1}_{s= S_0}(\mathbf{1}_{a= A_0} - \pi_\theta(a|s)) \sum_{k=0}^{h} \gg^k r(S_k,A_k) 
        - \mu(s) \pi_{\theta}(a|s) Q_{h+1}^{\Lambda}(s,a)\Big)^2  \Big].
   \end{align*}
   Note that we have nearly the same term as in the proof of \citet[Lem. D.6]{klein2024beyond}. We can follow line by line the arguments there and have to replace the bounds $\sum_{k=h}^{H-1} r(S_k,A_k) \leq (H-h) R^\ast$ with our upper bound $\sum_{k=0}^{h} \gg^k r(S_k,A_k) \leq \frac{1-\gg^{h+1}}{1-\gg}R^\ast$.

    So replacing $(H-h)$ with $\frac{1-\gg^{h+1}}{1-\gg}$, we obtain that
   \begin{align*}
     &\E_{\mu}^{\{\pi_\gt,\Lambda\}}\Bigl[\lVert \widehat{G}_{K_h}(\gt) -  \nabla_\gt V_{h+1}^{\{\pi_\gt,\Lambda\}}(\mu) \rVert^2\Bigr] \leq \frac{ 5(\frac{1-\gg^{h+1}}{1-\gg} R^\ast)^2}{K_h}.
 \end{align*}
\end{proof}

We continue to follow the proof in \citet[Sec. D.2]{klein2024beyond} and introduce a stopping time to control the non-uniform gradient domination inequality along the exact gradient trajectories. 

Recall that we consider the optimization of DynPG at epoch $h$. 
Let $(\bar{\gt}_n)_{n\geq 0}$ be the stochastic process from \eqref{eq:SGD_updates} and let $(\gt_n)_{n\geq 0}$ be the deterministic sequence generated by DynPG with exact gradients. 
Assume in the following that the initial parameter agrees, i.e. $\gt_0 = \bar{\gt}_0$, and the step size $\eta_h$ is the same for both processes. 

We denote by $(\cF_n)_{n\geq 0}$ the natural filtration of $(\bar{\gt}_n)_{n\geq 0}$. As we assume uniform initialization, the non-uniform gradient domination property holds along the deterministic gradient scheme with constant $c_h = \min_{n\geq 0} \min_{s\in\cS} \pi^{\gt_n}(a^\ast(s)|s) = \frac{1}{|\cA|}$ (\Cref{lem:constant-is-one-over-actions}). 

Therefore we define the stopping time
\begin{align*}
    \tau := \min\{ n \geq 0 : \lVert \gt_n - \bar{\gt}_n \rVert_2 \geq \frac{1}{4|\cA|} \}.
\end{align*}


Employing this stopping time, we can control the non-uniform gradient domination property along the stochastic gradient trajectory, before the stopping time is hit. 
\begin{lemma}\label{lem:c_larger_0_SGD-dynamical}
  Let Assumption \ref{ass:for-softmax} hold true and DynPG be used with stochastic updates from \eqref{eq:SGD_updates}. Suppose, we are in epoch $h$ with fixed future policy $\Lambda = \hat\bfpi_h^\ast$. Then, it holds almost surely that $\min_{0 \leq n \leq \tau} \min_{s\in\cS} \pi_{\bar{\gt}_n} (a^\ast(s)|s) \geq \frac{1}{2|\cA|}$ is strictly positive.
\end{lemma}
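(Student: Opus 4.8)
The plan is to transfer the deterministic lower bound $\pi_{\gt_n}(a^\ast(s)|s) \geq \frac{1}{|\cA|}$ from \Cref{lem:constant-is-one-over-actions} to the stochastic trajectory by showing that the stochastic iterates $(\bar\gt_n)$ stay close to the exact-gradient iterates $(\gt_n)$ up to the stopping time $\tau$, and then exploiting a Lipschitz estimate for the softmax map. First I would recall that, since $\gt_0 = \bar\gt_0$ is the uniform initialization and both processes use the same step size $\eta_h$, \Cref{lem:constant-is-one-over-actions} yields $\min_{n\geq 0}\min_{s\in\cS}\pi_{\gt_n}(a^\ast(s)|s) = \frac{1}{|\cA|}$ along the deterministic scheme, where $a^\ast(s)$ is the greedy action for the fixed future policy $\Lambda = \hat\bfpi_h^\ast$. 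By the very definition of $\tau$, we have $\lVert \gt_n - \bar\gt_n\rVert_2 < \frac{1}{4|\cA|}$ for every $n < \tau$.

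The second ingredient is a Lipschitz bound for the softmax probabilities. Writing $p_a := \pi_\gt(a|s)$ and using $\frac{\partial \pi_\gt(a|s)}{\partial \gt(s',a')} = \mathbf{1}_{\{s'=s\}} p_a(\mathbf{1}_{\{a'=a\}} - \pi_\gt(a'|s))$, one computes
\[
\lVert \nabla_\gt \pi_\gt(a|s)\rVert_2^2 = p_a^2\Big( (1-p_a)^2 + \sum_{a'\neq a}\pi_\gt(a'|s)^2\Big) \leq 2\,[p_a(1-p_a)]^2 \leq \tfrac18,
\]
where the middle step uses $\sum_{a'\neq a}\pi_\gt(a'|s)^2 \leq \big(\sum_{a'\neq a}\pi_\gt(a'|s)\big)^2 = (1-p_a)^2$ and the last uses $p_a(1-p_a)\leq \frac14$. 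Hence $\gt \mapsto \pi_\gt(a|s)$ is $\frac12$-Lipschitz in the Euclidean norm, so that for every $n$ and every $s\in\cS$,
\[
\pi_{\bar\gt_n}(a^\ast(s)|s) \geq \pi_{\gt_n}(a^\ast(s)|s) - \tfrac12 \lVert \gt_n - \bar\gt_n\rVert_2 \geq \tfrac{1}{|\cA|} - \tfrac12 \lVert \gt_n - \bar\gt_n\rVert_2 .
\]

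Combining the two estimates, for every $n < \tau$ I obtain $\pi_{\bar\gt_n}(a^\ast(s)|s) \geq \frac{1}{|\cA|} - \frac12\cdot\frac{1}{4|\cA|} = \frac{7}{8|\cA|} \geq \frac{1}{2|\cA|}$, which covers all indices strictly below $\tau$. The index $n=\tau$ requires extra care, since there only $\lVert \gt_\tau - \bar\gt_\tau\rVert_2 \geq \frac{1}{4|\cA|}$ is known. Here I would control the single step: because $\tau$ is the first crossing, $\lVert \gt_{\tau-1} - \bar\gt_{\tau-1}\rVert_2 < \frac{1}{4|\cA|}$, and using the update rule together with $\lVert \nabla\log \pi_\gt(a|s)\rVert_2 \leq \sqrt2$ and $|\hat Q_{h+1}| \leq \frac{1-\gg^{h+1}}{1-\gg}R^\ast$, the increment $\eta_h(\lVert \nabla_\gt V_{h+1}^{\{\pi_\gt,\Lambda\}}(\mu)\rVert_2 + \lVert \widehat G_{K_h}\rVert_2)$ is of order $\eta_h \cdot \frac{(1-\gg^{h+1})R^\ast}{1-\gg} = O(N_h^{-1/2})$ by the choice of $\eta_h$ in \Cref{thm:conv-DynPG-stochastic-one-step}. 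Thus $\lVert \gt_\tau - \bar\gt_\tau\rVert_2 \leq \frac{1}{4|\cA|} + O(N_h^{-1/2}) \leq \frac{1}{|\cA|}$ in the prescribed regime, and the Lipschitz estimate again gives $\pi_{\bar\gt_\tau}(a^\ast(s)|s) \geq \frac{1}{|\cA|} - \frac12\cdot\frac{1}{|\cA|} = \frac{1}{2|\cA|}$. Taking the minimum over $0 \leq n \leq \tau$ and over $s\in\cS$ proves the claim almost surely.

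The main obstacle is precisely this boundary index $n=\tau$: the generous slack in the Lipschitz bound (the value $\frac{7}{8|\cA|}$ against the target $\frac{1}{2|\cA|}$) is exactly what allows one to absorb the additional jump incurred at the stopping time, so the argument hinges on checking that the step-size/batch-size schedule of \Cref{thm:conv-DynPG-stochastic-one-step} keeps that jump below $\frac{3}{4|\cA|}$; away from $\tau$ the bound is immediate.
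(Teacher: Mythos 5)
Your proposal is correct and uses the same core mechanism as the paper's proof (which simply defers to \citet[Lem.~D.7]{klein2024beyond} with $c_h=\tfrac{1}{|\cA|}$): couple the stochastic iterates to the exact-gradient iterates started from the same uniform initialization, invoke \Cref{lem:constant-is-one-over-actions} for the deterministic lower bound $\tfrac{1}{|\cA|}$, and transfer it to $\bar\gt_n$ through Lipschitz continuity of the softmax probabilities, with the stopping time guaranteeing closeness of the two trajectories. Where you genuinely add something is at the boundary index $n=\tau$. Note that the paper's constants are calibrated to the crude $2$-Lipschitz bound for $\gt\mapsto\pi_\gt(a|s)$, for which the threshold $\tfrac{1}{4|\cA|}$ yields exactly $\tfrac{1}{|\cA|}-2\cdot\tfrac{1}{4|\cA|}=\tfrac{1}{2|\cA|}$ with zero slack; that argument therefore only covers $n<\tau$, where the definition of $\tau$ actually supplies $\lVert\gt_n-\bar\gt_n\rVert_2<\tfrac{1}{4|\cA|}$, and it is silent about $n=\tau$, where only the reverse inequality is known. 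Your sharper constant $\tfrac12$ (in fact $\tfrac{1}{2\sqrt 2}$) creates the slack $\tfrac{7}{8|\cA|}$ versus $\tfrac{1}{2|\cA|}$, which you then spend to absorb the one-step overshoot at the crossing via the bounded-increment estimate $\eta_h\big(\lVert\nabla_\gt V\rVert_2+\lVert\widehat G_{K_h}\rVert_2\big)\le\sqrt{2/N_h}$; this makes the claim at $n=\tau$ actually proven rather than glossed over. Two caveats you should state explicitly if this is written up: (i) the boundary step imports hypotheses that are not in the lemma's statement, namely $N_h\gtrsim|\cA|^2$ (equivalently, a small enough $\eta_h$), so the lemma as you prove it holds only under the schedule of \Cref{thm:conv-DynPG-stochastic-one-step} (which is where it is applied, so no harm downstream); and (ii) both you and the paper apply \Cref{lem:constant-is-one-over-actions} to the deterministic comparison sequence run with the stochastic step size $\eta_h=\frac{1-\gg}{2(1-\gg^{h+1})R^\ast\sqrt{N_h}}$, whereas that lemma is stated for $\eta_h=\frac{1-\gg}{2R^\ast(1-\gg^{h+1})}$; this is harmless because its proof only requires $\eta_h\le 1/L_h$ to guarantee monotone ascent, but the mismatch deserves a sentence.
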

\begin{proof}
  The proof follows line by line as in \citet[Lem. D.7]{klein2024beyond} with $c_h = \frac{1}{|\cA|}$.
\end{proof}

We consider the two events $\{n \leq \tau\}$ and $\{n \geq \tau\}$ separately. On the event $\{n \leq \tau_h\}$ we obtain the following convergence rate. 
\begin{lemma}\label{lem:conv-d_l-in-SGD}
  Let Assumption \ref{ass:for-softmax} hold true and DynPG be used with stochastic updates from \eqref{eq:SGD_updates}. Suppose, we are in epoch $h$ with fixed future policy $\Lambda = \hat\bfpi_h^\ast$. Suppose, that
   \begin{itemize}
      \item the batch size $(K_h)_n \geq \frac{45 }{64 |\cA|^2 N_h^{\frac{3}{2}}} (1- \frac{1}{2\sqrt{N_h}}) n^2$ is increasing for some $N_h \geq 1$
      \item the step size $\eta_h=\frac{1-\gg}{2(1-\gg^{h+1})R^\ast \sqrt{N_h}}$.
   \end{itemize}
   Then, 
   \begin{equation*}
      \E_\mu^{\{\pi_\gt,\Lambda\}}\big[ (\sup_\gt V_{h+1}^{\{\pi_\gt,\Lambda\}}(\mu) - V_{h+1}^{\{\pi_{\bar{\gt}_{n}},\Lambda\}}(\mu)) \mathbf{1}_{\{ n \leq \tau \}}\big] \leq\frac{32 \sqrt{N_h} (1-\gg^{h+1}) |\cA|^2 R^\ast}{3 (1-\gg)(1-\frac{1}{2\sqrt{N_h}})  n}.
   \end{equation*} 
\end{lemma}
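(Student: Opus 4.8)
The plan is to regard the inner optimization at epoch $h$ as stochastic gradient ascent on the $L_h$-smooth objective $f(\gt):=V_{h+1}^{\{\pi_\gt,\Lambda\}}(\mu)$, whose supremum equals $f^\ast=T^\ast(V_h^\Lambda)(\mu)$ under the zero approximation error of softmax. Writing $\delta_n:=f^\ast-V_{h+1}^{\{\pi_{\bar\gt_n},\Lambda\}}(\mu)\ge0$, I would first apply the ascent lemma for smooth functions (as in the proof of \Cref{lem:conv-rate-for-smooth-dominated-fucnctions}, using $L_h$ from \Cref{lem:smoothness-softmax-objective}) to the update $\bar\gt_{n+1}=\bar\gt_n+\eta_h\widehat G_{(K_h)_n}(\bar\gt_n)$, then take $\E[\,\cdot\mid\cF_n]$. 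Unbiasedness and the variance bound $\xi_h/(K_h)_n$ from \Cref{lem:unbiased-bounded-var} turn this into the one-step inequality
\begin{align*}
    \E[\delta_{n+1}\mid\cF_n]\le \delta_n-\Big(\eta_h-\tfrac{L_h\eta_h^2}{2}\Big)\lVert\nabla f(\bar\gt_n)\rVert^2+\tfrac{L_h\eta_h^2}{2}\,\tfrac{\xi_h}{(K_h)_n}.
\end{align*}

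Next I would invoke structure only on the event $\{n\le\tau\}$. There \Cref{lem:c_larger_0_SGD-dynamical} guarantees $\min_s\pi_{\bar\gt_n}(a^\ast(s)\mid s)\ge\frac1{2|\cA|}$, so the gradient domination property of \Cref{lem:gradient-domination-softmax-objective} gives $\lVert\nabla f(\bar\gt_n)\rVert^2\ge\frac1{4|\cA|^2}\delta_n^2$. Plugging in $\eta_h=\frac1{L_h\sqrt{N_h}}$ yields $\eta_h-\frac{L_h\eta_h^2}{2}=\frac{\rho}{L_h\sqrt{N_h}}$ with $\rho:=1-\frac1{2\sqrt{N_h}}$, and since $\xi_h=\frac54L_h^2$ the prescribed batch size forces the noise term below $\frac{8L_h|\cA|^2\sqrt{N_h}}{9\rho\,n^2}$ (this is exactly where the constant $\tfrac{45}{64}$ is calibrated). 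So on $\{n\le\tau\}$ the conditional drift reads $\E[\delta_{n+1}\mid\cF_n]\le\delta_n-\frac{\rho}{4|\cA|^2L_h\sqrt{N_h}}\delta_n^2+\frac{8L_h|\cA|^2\sqrt{N_h}}{9\rho\,n^2}$.

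I would then collapse this to a scalar recursion in $w_n:=\E[\delta_n\mathbf 1_{\{n\le\tau\}}]$. The key point is to multiply by $\mathbf 1_{\{n\le\tau\}}$, which is $\cF_{n-1}$-measurable, and take full expectation: Cauchy--Schwarz gives $\E[\delta_n^2\mathbf 1_{\{n\le\tau\}}]\ge w_n^2$ because $\PP(n\le\tau)\le1$, and the nestedness $\{n+1\le\tau\}\subseteq\{n\le\tau\}$ together with $\delta_{n+1}\ge0$ lets me bound $w_{n+1}\le\E[\delta_{n+1}\mathbf 1_{\{n\le\tau\}}]$. Using the \emph{same} event on both the linear and quadratic terms is essential — it is what makes the Cauchy--Schwarz step point in the useful direction — and produces $w_{n+1}\le w_n-a\,w_n^2+\frac{b}{n^2}$ with $a=\frac{\rho}{4|\cA|^2L_h\sqrt{N_h}}$ and $b=\frac{8L_h|\cA|^2\sqrt{N_h}}{9\rho}$, satisfying the convenient relation $b=\frac2{9a}$.

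Finally I would solve this recursion by induction to obtain $w_n\le\frac4{3an}=\frac{16L_h\sqrt{N_h}|\cA|^2}{3\rho\,n}$, which, using $\frac{L_h}{2}=\frac{(1-\gg^{h+1})R^\ast}{1-\gg}$, is precisely the claimed bound. The base cases use the trivial estimate $\delta_n\le\frac{L_h}{2}$; for $n\ge3$ the map $x\mapsto x-ax^2$ is increasing on $[0,\frac{4}{3an}]$, so the inductive step reduces to checking $aC^2-b\ge\frac{Cn}{n+1}$ for $C=\frac4{3a}$, which holds since $aC^2-b=\frac{14}{9a}\ge\frac4{3a}\ge\frac{Cn}{n+1}$. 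The main obstacle is the bookkeeping around the stopping time: choosing the event $\{n\le\tau\}$ rather than $\{n+1\le\tau\}$ so that the quadratic term survives Cauchy--Schwarz as $w_n^2$ while nestedness still controls $w_{n+1}$, and verifying that the specified $\eta_h$ and batch constant make the perturbed recursion close at exactly the stated rate.
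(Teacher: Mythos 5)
Your proposal is correct and takes essentially the same route as the paper: the paper's proof of this lemma is just a pointer (``line by line as in \citet[Lem.~D.8]{klein2024beyond}'' with $(H-h)$ replaced by $\frac{1-\gg^{h+1}}{1-\gg}$ and $c_h$ by $\frac{1}{|\cA|}$), and what you reconstruct --- the ascent lemma combined with unbiasedness and the variance bound of \Cref{lem:unbiased-bounded-var}, gradient domination with constant $\frac{1}{2|\cA|}$ on $\{n\le\tau\}$ via \Cref{lem:c_larger_0_SGD-dynamical}, the recursion $w_{n+1}\le w_n - a w_n^2 + b/n^2$ with $b=\frac{2}{9a}$, and the induction giving $w_n\le\frac{4}{3an}$ --- is exactly that argument with the discounted constants, and all your calibrations (step-size identity $\eta_h-\frac{L_h\eta_h^2}{2}=\frac{\rho}{L_h\sqrt{N_h}}$, noise term $\frac{8L_h|\cA|^2\sqrt{N_h}}{9\rho n^2}$, final bound $\frac{16L_h|\cA|^2\sqrt{N_h}}{3\rho n}$) match the stated rate. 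The only tiny blemish is the base-case estimate: with rewards in $[-R^\ast,R^\ast]$ the trivial bound is $\delta_n\le L_h$ rather than $\frac{L_h}{2}$, which is harmless since $L_h\le\frac{4}{3an}$ still holds for the small $n$ needed.
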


\begin{proof}
Line by line as in \citet[Lem. D.8]{klein2024beyond}. Note that we have a different smoothness constant and a different variance control. 
In both constants one has to replace $(H-h)$ in \citet[Lem. D.8]{klein2024beyond} with $\frac{1-\gg^{h+1}}{1-\gg}$ and $c_h$ with $\frac{1}{|\cA]}$. 
\end{proof}

Next, we bound the probability of the event $\{n \geq \tau\}$ by $\delta$ for a large enough batch size $K_h$. 
\begin{lemma}\label{lem:tau_leq_L}
  Let Assumption \ref{ass:for-softmax} hold true and DynPG be used with stochastic updates from \eqref{eq:SGD_updates}. Suppose, we are in epoch $h$ with fixed future policy $\Lambda = \hat\bfpi_h^\ast$. For any $\delta >0$, suppose that
  \begin{itemize}
    \item the batch size $K_h\geq\frac{5 |\cA|^2 n^3 }{ \delta^2}$
    \item the step size $\eta_h\leq\frac{1}{\sqrt{n}L_h}$.
  \end{itemize} 
  Then, we have $\PP( n\geq \tau) < \delta$.
\end{lemma}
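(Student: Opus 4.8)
The plan is to prove the high-probability statement $\PP(n\ge\tau)<\delta$ by controlling the coupling between the stochastic iterates $(\bar\gt_m)_{m\ge0}$ and the exact-gradient iterates $(\gt_m)_{m\ge0}$, which share the same initialisation and step size. Write $d_m:=\gt_m-\bar\gt_m$, so that $d_0=0$ and $\tau=\min\{m\ge0:\lVert d_m\rVert_2\ge\tfrac{1}{4|\cA|}\}$. The first step is a reduction to a moment bound: since $\tau$ is the first hitting time of the threshold, the events $\{n\ge\tau\}$ and $\{\lVert d_{n\wedge\tau}\rVert_2\ge\tfrac{1}{4|\cA|}\}$ coincide (on $\{\tau>n\}$ the stopped iterate equals $d_n$, which lies strictly below the threshold), so Markov's inequality followed by Jensen's inequality gives
\[
\PP(n\ge\tau)=\PP\Big(\lVert d_{n\wedge\tau}\rVert_2\ge\tfrac{1}{4|\cA|}\Big)\le 4|\cA|\,\E\big[\lVert d_{n\wedge\tau}\rVert_2\big]\le 4|\cA|\,\sqrt{\E\big[\lVert d_{n\wedge\tau}\rVert_2^2\big]}.
\]
It therefore suffices to bound the stopped second moment $\E[\lVert d_{n\wedge\tau}\rVert_2^2]$ by a polynomial in $n$ times $K_h^{-1}$.

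To this end I would set up the one-step recursion for the coupled difference. Subtracting the exact update from the stochastic update \eqref{eq:SGD_updates} yields, on $\{m<\tau\}$,
\[
d_{m+1}=d_m+\eta_h\big(\nabla_\gt V_{h+1}^{\{\pi_{\gt_m},\Lambda\}}(\mu)-\nabla_\gt V_{h+1}^{\{\pi_{\bar\gt_m},\Lambda\}}(\mu)\big)-\eta_h\zeta_m,
\]
where $\zeta_m:=\widehat{G}_{K_h}(\bar\gt_m)-\nabla_\gt V_{h+1}^{\{\pi_{\bar\gt_m},\Lambda\}}(\mu)$ is the gradient-estimation error. By \Cref{lem:unbiased-bounded-var} this noise is conditionally centred, $\E[\zeta_m\mid\cF_m]=0$, with second moment at most $\xi_h/K_h$; by the $L_h$-smoothness of $\gt\mapsto V_{h+1}^{\{\pi_\gt,\Lambda\}}(\mu)$ from \Cref{lem:smoothness-softmax-objective} the deterministic drift term has norm at most $L_h\lVert d_m\rVert_2$. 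Conditioning on $\cF_m$ and using that the cross term between the $\cF_m$-measurable part and $\zeta_m$ vanishes in expectation, this produces a recursion of the form
\[
\E\big[\lVert d_{(m+1)\wedge\tau}\rVert_2^2\big]\le(1+\eta_hL_h)^2\,\E\big[\lVert d_{m\wedge\tau}\rVert_2^2\big]+\eta_h^2\,\frac{\xi_h}{K_h}.
\]

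Iterating this from $d_0=0$ over the at most $n$ steps, and using the step size $\eta_h\le\tfrac{1}{\sqrt n\,L_h}$ (so $\eta_hL_h\le n^{-1/2}$) together with the convenient identity $\xi_h/L_h^2=\tfrac54$ that follows from the explicit forms of $\xi_h$ and $L_h$, the aim is a bound on $\E[\lVert d_{n\wedge\tau}\rVert_2^2]$ that is polynomial in $n$ and proportional to $K_h^{-1}$. Feeding such a bound into the Markov estimate above and finally inserting the batch size $K_h\ge\tfrac{5|\cA|^2n^3}{\delta^2}$ then yields $\PP(n\ge\tau)<\delta$. This is the discounted counterpart of \citet[Lem.~D.9]{klein2024beyond}: the only modifications are the replacement of the horizon factor $(H-h)$ by $\tfrac{1-\gg^{h+1}}{1-\gg}$ in both the smoothness constant $L_h$ and the variance proxy $\xi_h$, and of the gradient-domination constant by $\tfrac{1}{|\cA|}$.

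The delicate point — and the step I expect to be the main obstacle — is the accumulation of the deterministic drift over the $n$ iterations. A completely naive unrolling of the factor $(1+\eta_hL_h)$ would amplify the noise level by $(1+n^{-1/2})^n\sim e^{\sqrt n}$, which is far too lossy to be matched by a batch size polynomial in $n$; conversely, bounding the drift only by its worst-case magnitude before $\tau$ leaves an $O(n)$ term that does not decay with $K_h$. The role of the stopping time is precisely to keep $\lVert d_m\rVert_2$ pinned below $\tfrac{1}{4|\cA|}$ while the trajectories are coupled, so that the drift remains a controlled perturbation of the variance term rather than an exponentially growing one. Carrying this bookkeeping through carefully, in the regime $n\le N_h$ enforced by the step-size condition, is what keeps the resulting second-moment bound — and hence the required batch size — polynomial.
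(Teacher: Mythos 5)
Your setup is sound and in fact matches the skeleton of the argument the paper relies on: the paper's own proof of this lemma is a one-line deferral to \citet[Lem.~D.9]{klein2024beyond} (``line by line'', replacing $c_h$ by $\tfrac{1}{|\cA|}$ and $(H-h)$ by $\tfrac{1-\gg^{h+1}}{1-\gg}$), and that argument is likewise a coupling of the stochastic iterates with the exact-gradient iterates through the stopping time $\tau$, combined with Markov's inequality and the variance bound of \Cref{lem:unbiased-bounded-var}. Your reduction $\{n\geq\tau\}=\{\lVert d_{n\wedge\tau}\rVert_2\geq\tfrac{1}{4|\cA|}\}$, the Markov--Jensen step, the one-step recursion with vanishing cross term, and the identity $\xi_h/L_h^2=\tfrac54$ are all correct.

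However, the proposal stops exactly where the proof has to begin, and this is a genuine gap, not deferred bookkeeping. Unrolling the recursion you wrote down, with $d_0=0$, $\eta_h L_h\leq n^{-1/2}$ and $\xi_h=\tfrac54 L_h^2$, gives
\[
\E\big[\lVert d_{n\wedge\tau}\rVert_2^2\big]\;\leq\;\eta_h^2\,\frac{\xi_h}{K_h}\sum_{j=0}^{n-1}(1+\eta_hL_h)^{2j}\;\leq\;\frac{5}{4K_h}\Big(1+\tfrac{1}{\sqrt n}\Big)^{2n}\;\approx\;\frac{5\,e^{2\sqrt n}}{4K_h},
\]
so with $K_h\geq\tfrac{5|\cA|^2n^3}{\delta^2}$ the Markov bound yields only $\PP(n\geq\tau)\lesssim \delta\, e^{\sqrt n}\,n^{-3/2}$, which is not $<\delta$ once $n$ is large; no batch size polynomial in $n$ can absorb the factor $e^{\sqrt n}$. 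You acknowledge this, and also that the alternative of bounding the drift by its worst case before $\tau$ accumulates to $\tfrac{n\,\eta_hL_h}{4|\cA|}\leq\tfrac{\sqrt n}{4|\cA|}$, which exceeds the threshold itself. But your proposed resolution --- that ``the role of the stopping time is precisely to keep $\lVert d_m\rVert_2$ pinned below the threshold so the drift remains a controlled perturbation'' --- is not a mechanism: the stopping time is already built into both of the failed computations, and you supply no estimate that does better. Smoothness alone cannot close this gap: for gradient ascent on the $L_h$-smooth function $x\mapsto \tfrac{L_h}{2}\lVert x\rVert_2^2$, coupled trajectories expand by exactly the factor $(1+\eta_hL_h)$ per step, so the exponential amplification is unavoidable without exploiting further structure of the softmax objective (or reproducing whatever additional argument \citet[Lem.~D.9]{klein2024beyond} actually uses). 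As written, the proposal is a correct reduction plus a correct diagnosis of the obstacle, but the central estimate that would make the lemma true with a polynomial batch size is missing.
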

\begin{proof}
   Line by line as in \citet[Lem. D.9]{klein2024beyond}. Replace again $c_h$ with $\frac{1}{|\cA|}$ and $(H-h)$ with $\frac{1-\gg^{h+1}}{1-\gg}$.
 \end{proof}

Using the above Lemmas we can prove the convergence of each contextual bandit problem.

\begin{proof}[Proof of \Cref{thm:conv-DynPG-stochastic-one-step}]
   Proof works as in \citet[Lem. D.10]{klein2024beyond}, we can separate the probability using the stopping time and under the zero approximation error of softmax we have that
   \begin{align*}
       &\PP\Big( T^\ast ( V_{h}^{\Lambda})(\mu)- V_{h+1}^{\{\pi_{\bar{\gt}_{n}},\Lambda\}}(\mu) \geq \epsilon_h \Big)\\
       &=\PP\Big( \sup_\gt V_{h+1}^{\{\pi_\gt,\Lambda\}}(\mu)- V_{h+1}^{\{\pi_{\bar{\gt}_{N_h}},\Lambda\}}(\mu) \geq \epsilon \Big) \\
       &\leq \PP\Big(\{{N_h} \leq \tau\} \cap \{\sup_\gt V_{h+1}^{\{\pi_\gt,\Lambda\}}(\mu)- V_{h+1}^{\{\pi_{\bar{\gt}_{N_h}},\Lambda\}}(\mu) \geq \epsilon\} \Big) \\
       &\quad+ \PP\Big(\{{N_h} \geq \tau\} \cap \{\sup_\gt V_{h+1}^{\{\pi_\gt,\Lambda\}}(\mu)- V_{h+1}^{\{\pi_{\bar{\gt}_{N_h}},\Lambda\}}(\mu) \geq \epsilon\} \Big) \\
       &\leq \frac{\E\Big[(\sup_\gt V_{h+1}^{\{\pi_\gt,\Lambda\}}(\mu)- V_{h+1}^{\{\pi_{\bar{\gt}_{N_h}},\Lambda\}}(\mu))  \mathbf{1}_{\{\tau_h \geq {N_h}\}}\Big]}{\epsilon} + \PP({N_h} \geq \tau ) \\
       &\leq \frac{1}{\epsilon}\frac{32 \sqrt{N_h} |\cA|^2 (1-\gg^{h+1})R^\ast }{3 (1-\gg) (1-\frac{1}{2\sqrt{N_h}}) N_h}+ \frac{\delta}{2} \\
       &\leq \frac{\delta}{2} + \frac{\delta}{2}\\
       &= \delta,
   \end{align*}
   where the second inequality is due to Lemma~\ref{lem:conv-d_l-in-SGD} and Lemma~\ref{lem:tau_leq_L}. The last inequality follows by our choice of $N_h$ as in \citet[Lem. D.10]{klein2024beyond} by replacing $c_h$ with $\frac{1}{|\cA|}$ and $(H-h)$ with $\frac{1-\gg^{h+1}}{1-\gg}$.
 \end{proof}

\subsection{Proof of~\Cref{thm:sample-complexity-DynPG-stochastic}}\label{sec:proof-stochastic-overall-result}

\begin{theorem}[Detailed version of \Cref{thm:sample-complexity-DynPG-stochastic}]\label{thm:sample-complexity-DynPG-stochastic-detailed}
    Let Assumption \ref{ass:for-softmax} hold true and DynPG be used with stochastic updates from \eqref{eq:SGD_updates}. Choose $\epsilon>0$, $\delta>0$ and set 
    \begin{align*}
        &H = \big\lceil \log_\gg\big(\frac{(1-\gg)^2\epsilon}{6R^\ast}\big)\big\rceil, \\
        & \epsilon_h = \frac{\epsilon(1-\gg)}{6} \Big(\sum_{h=0}^{H-1} (((1-\gg^{h+1})\gg^{H-h-1})^{\frac{1}{2}} \Big)^{-1}\Big(\frac{\gg^{H-h-1}}{(1-\gg^{h+1})}\Big)^{-\frac{1}{2}}, \quad \forall h=0 \dots, H-1,\\
        &\epsilon_H = \frac{(1-\gg) \epsilon}{6}, \\
        &N_h = \Big\lceil\Big( \frac{12|\cA|^2 (H+1) (1-\gg^{h+1}) R^\ast}{(1-\gg)\epsilon_h \delta}\Big)^2 \Big\lVert \frac{1}{\mu}\Big\rVert_\infty^2\Big\rceil,\quad \forall h=0 \dots, H,\\
        &\eta_h = \frac{1-\gg}{2(1-\gg^{h+1})R^\ast \sqrt{N_h}},\quad \forall h=0 \dots, H,\\
        &K_h = \Big\lceil\frac{5|\cA|^2 (H+1)^2 N_h^3}{\delta^2}\Big\rceil ,\quad \forall h=0 \dots, H.
    \end{align*}
    Then, the stationary policy $\hat\pi_H^\ast$ obtained by stochastic DynPG achieves $\PP(\lVert V_\infty^\ast - V_\infty^{\hat\pi_H^\ast}\rVert_\infty < \epsilon) \geq 1-\delta$. The total number of samples is bounded by
    \begin{align*}
        &\sum_{h=0}^{H} N_h K_h \leq  \Big\lVert \frac{1}{\mu}\Big\rVert_\infty^8  \frac{c_2|\cA|^{18} (R^\ast)^8}{(1-\gg)^{17}   \delta^{10}\epsilon^8} \Big\lceil \frac{\log((1-\gg)^{-1}\epsilon^{-1} 2R^\ast)}{\log(\gg^{-1})}\Big\rceil^{18}.
    \end{align*}
\end{theorem}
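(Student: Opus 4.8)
The plan is to decompose the argument into a high-probability correctness part and a sample-counting part, reusing the per-epoch stochastic guarantee of \Cref{thm:conv-DynPG-stochastic-one-step} together with the deterministic overall-error analysis already carried out for exact gradients in \Cref{thm:sample-complexity-detailed-overall}. The decisive structural observation is that the horizon $H$ and the error sequence $(\epsilon_h)_{h=0}^{H}$ prescribed here are \emph{identical} to those in \Cref{thm:sample-complexity-detailed-overall}. Hence, introducing the ``good event''
\[
\mathcal{E} := \Big\{\, \lVert T^\ast(V_h^{\hat\bfpi_h^\ast}) - V_{h+1}^{\hat\bfpi_{h+1}^\ast}\rVert_\infty \leq \epsilon_h \ \text{ for all } h=0,\dots,H \,\Big\},
\]
I would note that on $\mathcal{E}$ the entire deterministic chain (\Cref{prop:error-decomposition} combined with \Cref{lem:decompose-stationary-policy-error}, whose hypothesis $\epsilon_H \le (1-\gg)\sum_{h=0}^{H-1}\gg^{H-h-1}\epsilon_h$ is verified exactly as in the exact-gradient proof) applies verbatim and yields $\lVert V_\infty^\ast - V_\infty^{\hat\pi_H^\ast}\rVert_\infty \le \epsilon$ deterministically. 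It therefore suffices to prove $\PP(\mathcal{E}) \ge 1-\delta$.

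To bound $\PP(\mathcal{E}^c)$ I would proceed epoch by epoch. For any fixed realization of the future policy $\Lambda = \hat\bfpi_h^\ast$ (determined by the randomness of epochs $0,\dots,h-1$), \Cref{thm:conv-DynPG-stochastic-one-step} controls the $\mu$-averaged error $T^\ast(V_h^\Lambda)(\mu)-V_{h+1}^{\{\pi_{\bar\gt_{N_h}},\Lambda\}}(\mu)$. To pass to the supremum-norm quantity in $\mathcal{E}$, I would reuse the conversion inequality from the proof of \Cref{thm:one-step-softmax-error}, namely $T^\ast(V_h^\Lambda)(s)-T^{\hat\pi_h^\ast}(V_h^\Lambda)(s) \le \lVert \tfrac{1}{\mu}\rVert_\infty\big(T^\ast(V_h^\Lambda)(\mu)-T^{\hat\pi_h^\ast}(V_h^\Lambda)(\mu)\big)$, valid because $\mu(s)>0$. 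Applying \Cref{thm:conv-DynPG-stochastic-one-step} with the rescaled target $\epsilon_h/\lVert\tfrac1\mu\rVert_\infty$ and the rescaled confidence level $\delta/(H+1)$ reproduces precisely the stated $N_h$ (the $\epsilon_h$-rescaling contributes the factor $\lVert\tfrac1\mu\rVert_\infty^2$, the $\delta$-rescaling the factor $(H+1)^2$), as well as the stated $\eta_h$ and $K_h$, and gives $\PP(\lVert T^\ast(V_h^{\hat\bfpi_h^\ast})-V_{h+1}^{\hat\bfpi_{h+1}^\ast}\rVert_\infty \ge \epsilon_h \mid \hat\bfpi_h^\ast)\le \delta/(H+1)$ for every realization. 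Taking expectation over the past makes this unconditional, and a union bound over $h=0,\dots,H$ yields $\PP(\mathcal{E}^c)\le \delta$.

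For the sample-complexity bound I would substitute the explicit $\epsilon_h$. Writing $S := \sum_{t=0}^{H-1}\big((1-\gg^{t+1})\gg^{H-t-1}\big)^{1/2}$, a direct computation gives $N_h \lesssim A_0\,(1-\gg^{h+1})\gg^{H-h-1}$ with $A_0 \propto \frac{|\cA|^4(H+1)^2(R^\ast)^2 S^2 \lVert\frac1\mu\rVert_\infty^2}{\delta^2\epsilon^2(1-\gg)^4}$, so that $N_hK_h \lesssim \frac{|\cA|^2(H+1)^2}{\delta^2}N_h^4$ and
\[
\sum_{h=0}^{H-1} N_h K_h \lesssim \frac{|\cA|^2 (H+1)^2}{\delta^2}\, A_0^4 \sum_{h=0}^{H-1}\big((1-\gg^{h+1})\gg^{H-h-1}\big)^4 .
\]
The geometric sum is at most $\sum_{j\ge0}\gg^{4j}\le (1-\gg)^{-1}$, and — this is the key estimate — I would bound $S$ by the crude $S\le H$ (each of the $H$ summands is at most $1$) rather than by a geometric bound, so that $S^2 \le (H+1)^2$ injects no spurious power of $(1-\gg)^{-1}$. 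Collecting exponents then produces exactly $|\cA|^{18}$, $(R^\ast)^8$, $\lVert\frac1\mu\rVert_\infty^8$, $\delta^{-10}$, $\epsilon^{-8}$, $(1-\gg)^{-17}$ and $(H+1)^{18}$; the isolated $h=H$ term is of lower order (since $\epsilon_H=\tfrac{(1-\gg)\epsilon}{6}$ is the largest $\epsilon_h$, making $N_H$ comparatively small) and is absorbed, as are the ceilings. Substituting $H\le\big\lceil \tfrac{\log(\cdot)}{\log(\gg^{-1})}\big\rceil$ and collecting all numerical factors into a single integer $c_2$ gives the claim.

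The main obstacle I anticipate is the sample-complexity bookkeeping rather than the correctness reduction. Because $K_h$ scales like $N_h^3$, the product $N_hK_h$ scales like $N_h^4$, which amplifies every exponent and renders the final powers of $(1-\gg)^{-1}$ and $H$ extremely sensitive to how loosely one estimates $S$ and the geometric sums. Matching the stated $(1-\gg)^{-17}$ and $(H+1)^{18}$ hinges precisely on the non-obvious choice $S\le H$, which converts would-be factors of $(1-\gg)^{-1}$ into factors of $H$; any geometric bound on $S$ overshoots the $(1-\gg)$-exponent. The correctness half, by contrast, is a clean reduction to the already-proven per-epoch result, the softmax conversion inequality, and a union bound.
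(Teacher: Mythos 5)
Your proposal is correct and takes essentially the same route as the paper's proof: the same good event, the same deterministic reduction via \Cref{prop:error-decomposition} and \Cref{lem:decompose-stationary-policy-error} (with its hypothesis handled exactly as in the exact-gradient case), the same per-epoch application of \Cref{thm:conv-DynPG-stochastic-one-step} with rescaled tolerance $\epsilon_h\lVert \tfrac{1}{\mu}\rVert_\infty^{-1}$ and confidence $\delta/(H+1)$ followed by a union bound, and the same $N_hK_h \propto N_h^4$ bookkeeping. The only (immaterial) deviation is how the factor $S=\sum_{h=0}^{H-1}\big((1-\gg^{h+1})\gg^{H-h-1}\big)^{1/2}$ is controlled: you use the crude bound $S\le H$, whereas the paper applies Jensen's inequality to get $S^8\le H^7\sum_{h}\big((1-\gg^{h+1})\gg^{H-h-1}\big)^4$; both yield the same $H^8/(1-\gg)$ and hence identical final exponents $(1-\gg)^{-17}$ and $(H+1)^{18}$.
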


\begin{proof}
    By our choice of $N_h$, $\eta_h$ and $K_h$ and \Cref{thm:conv-DynPG-stochastic-one-step}, it holds that
    \begin{align*}
        \PP\Big(\lVert  \sup_\gt V_{h+1}^{\{\pi_\gt,\Lambda\}}  - V_{h+1}^{\{\pi_{\bar{\gt}_{N_h}},\Lambda\}} \rVert_\infty < \epsilon_h\Big)
        & =1- \PP\Big(\exists s \in\cS : \sup_\gt V_{h+1}^{\{\pi_\gt,\Lambda\}}(s)- V_{h+1}^{\{\pi_{\bar{\gt}_{N_h}},\Lambda\}}(s) \geq \epsilon_h \Big) \\
        &\geq 1- \PP\Big(\sup_\gt V_{h+1}^{\{\pi_\gt,\Lambda\}}(\mu)- V_{h+1}^{\{\pi_{\bar{\gt}_{N_h}},\Lambda\}}(\mu) \geq \epsilon_h \Big\lVert \frac{1}{\mu}\Big\rVert_\infty^{-1} \Big)\\
        &\geq 1- \frac{\delta}{H}\,,
    \end{align*}
    where in the first inequality we have used we used that
    \begin{align*}
        \sup_\gt V_{h+1}^{\{\pi_\gt,\Lambda\}}(s)- V_{h+1}^{\{\pi_{\bar{\gt}_{N_h}},\Lambda\}}(s) \leq \Big\lVert \frac{1}{\mu}\Big\rVert_\infty \sup_\gt V_{h+1}^{\{\pi_\gt,\Lambda\}}(\mu)- V_{h+1}^{\{\pi_{\bar{\gt}_{N_h}},\Lambda\}}(\mu).
    \end{align*}

    Next we introduce the event $C = \Big\{\forall h=0, \dots, H \,:\,\lVert \sup_\gt V_{h+1}^{\{\pi_\gt,\Lambda\}}- V_{h+1}^{\{\pi_{\bar{\gt}_{N_h}},\Lambda\}}\rVert_\infty  < \epsilon_h\Big\}$. 
    By \Cref{lem:decompose-stationary-policy-error} we deduce that under the event $C$ it holds almost surely that
    \begin{align*}
        \lVert V_\infty^\ast - V_\infty^{\hat\pi_H^\ast}\rVert_\infty < \frac{3}{1-\gg} \Big( \frac{\gg^H R^\ast}{1-\gg} + \sum_{h=0}^{H-1} \gg^{H-h-1}\epsilon_h \Big) \leq \epsilon 
    \end{align*}
    by our choices of $\epsilon_h$'s and $H$. So $C\subseteq \{  \lVert V_\infty^\ast - V_\infty^{\hat\pi_H^\ast}\rVert_\infty <\epsilon\} $ and therefore, 
    \begin{align*}
        &\PP\Big(\lVert V_\infty^\ast - V_\infty^{\hat\pi_H^\ast}\rVert_\infty < \epsilon\Big)\\
        &\geq \PP\Big(\forall h=0, \dots, H \,:\,\lVert \sup_\gt V_{h+1}^{\{\pi_\gt,\Lambda\}}- V_{h+1}^{\{\pi_{\bar{\gt}_{N_h}},\Lambda\}}\rVert_\infty  < \epsilon_h\Big)\\
        &= 1- \PP\Big(\exists h=0, \dots, H \,:\,\lVert \sup_\gt V_{h+1}^{\{\pi_\gt,\Lambda\}}- V_{h+1}^{\{\pi_{\bar{\gt}_{N_h}},\Lambda\}}\rVert_\infty  \geq \epsilon_h\Big)\\
        &\geq 1- \sum_{h=0}^{H} \PP\Big(\lVert \sup_\gt V_{h+1}^{\{\pi_\gt,\Lambda\}}- V_{h+1}^{\{\pi_{\bar{\gt}_{N_h}},\Lambda\}}\rVert_\infty  \geq \epsilon_h\Big)\\
        &\geq 1- \sum_{h=0}^{H} \frac{\delta}{H+1}\\
        &=1-\delta.
    \end{align*}

    To upper bound the total number of samples, we suppress the Gauss-brackets for simplicity and compute
    \begin{align*}
        &\sum_{h=0}^{H} N_h K_h \\
        &= \sum_{h=0}^{H} \frac{5|\cA|^2 (H+1)^2 N_h^4}{\delta^2} \\
        &= \sum_{h=0}^{H} \frac{5 |\cA|^2 (H+1)^2 N_h^4}{\delta^2} \\
        &= \sum_{h=0}^{H} \frac{c_1|\cA|^{18} (H+1)^{10} (1-\gg^{h+1})^8 (R^\ast)^8}{(1-\gg)^8  \epsilon_h^8 \delta^{10}} \Big\lVert \frac{1}{\mu}\Big\rVert_\infty^8\\
        &=\Big\lVert \frac{1}{\mu}\Big\rVert_\infty^8  \frac{c_1|\cA|^{18} (H+1)^{10}  (R^\ast)^8}{(1-\gg)^8   \delta^{10}} \sum_{h=0}^{H} \frac{(1-\gg^{h+1})^8 }{\epsilon_h^8 } \\
        &=\Big\lVert \frac{1}{\mu}\Big\rVert_\infty^8  \frac{c_2|\cA|^{18} (H+1)^{10}  (R^\ast)^8}{(1-\gg)^{16}   \delta^{10}\epsilon^8} \Big(\sum_{h=0}^{H-1} (((1-\gg^{h+1})\gg^{H-h-1})^{\frac{1}{2}} \Big)^8 \sum_{h=0}^{H-1} (1-\gg^{h+1})^4 (\gg^{H-h-1})^4  \\
        &\quad + \Big\lVert \frac{1}{\mu}\Big\rVert_\infty^8  \frac{c_2|\cA|^{18} (H+1)^{10}  (R^\ast)^8}{(1-\gg)^8   \delta^{10}} \frac{(1-\gg^{H+1})^8 }{\epsilon^8 (1-\gg)^8 }\\
        &=\Big\lVert \frac{1}{\mu}\Big\rVert_\infty^8  \frac{c_2|\cA|^{18} (H+1)^{10}  (R^\ast)^8}{(1-\gg)^{16}   \delta^{10}\epsilon^8} \Big[ (1-\gg^{H+1})^8 +\Big(\sum_{h=0}^{H-1} (((1-\gg^{h+1})\gg^{H-h-1})^{\frac{1}{2}} \Big)^8\sum_{h=0}^{H-1} (1-\gg^{h+1})^4 (\gg^{H-h-1})^4  \Big] \\
        &\leq\Big\lVert \frac{1}{\mu}\Big\rVert_\infty^8  \frac{c_2|\cA|^{18} (H+1)^{10}  (R^\ast)^8}{(1-\gg)^{16}   \delta^{10}\epsilon^8} \Big[ (1-\gg^{H+1})^8 +H^3 \sum_{h=0}^{H-1} (((1-\gg^{h+1})\gg^{H-h-1})^{4} \sum_{h=0}^{H-1} ((1-\gg^{h+1}) \gg^{H-h-1})^4  \Big] \\
        &=\Big\lVert \frac{1}{\mu}\Big\rVert_\infty^8  \frac{c_2|\cA|^{18} (H+1)^{10}  (R^\ast)^8}{(1-\gg)^{16}   \delta^{10}\epsilon^8} \Big[ (1-\gg^{H+1})^8 +H^7 \Big(\sum_{h=0}^{H-1} ((1-\gg^{h+1}) \gg^{H-h-1})^4\Big)^2  \Big] \\
        &\leq \Big\lVert \frac{1}{\mu}\Big\rVert_\infty^8  \frac{c_2|\cA|^{18} (H+1)^{10}  (R^\ast)^8}{(1-\gg)^{16}   \delta^{10}\epsilon^8} \Big[ (1-\gg^{H+1})^8 +H^8 \sum_{h=0}^{H-1} (\gg^{H-h-1}-\gg^{H} )^8  \Big],
    \end{align*}
    where we used Jensen's inequality twice and defined the natural numbers $c_1 := 5*(144)^4 $ and $c_2 := c_1 * 6^8$ 
    which are independent of any model parameters.
    Next, since $\gg\in(0,1)$, we have
    \begin{align*}
        \sum_{h=0}^{H-1} (\gg^{H-h-1}-\gg^{H} )^8
        &\leq \sum_{h=0}^{H-1} (\gg^{H-h-1})^8
        \leq \sum_{h=0}^{H-1} (\gg^{h})^8\\
        &\leq \sum_{h=0}^{H-1} \gg^{h}
        \leq \frac{1}{1-\gg}.
    \end{align*}
    and obtain
    \begin{align*}
        \sum_{h=0}^{H} N_h K_h 
        &\leq \Big\lVert \frac{1}{\mu}\Big\rVert_\infty^8  \frac{c_2|\cA|^{18} (H+1)^{10}  (R^\ast)^8}{(1-\gg)^{16}   \delta^{10}\epsilon^8} \Big[ (1-\gg^{H+1})^8 +H^8 \sum_{h=0}^{H-1} (\gg^{H-h-1}-\gg^{H} )^8  \Big] \\
        &\leq \Big\lVert \frac{1}{\mu}\Big\rVert_\infty^8  \frac{c_2|\cA|^{18} (H+1)^{10}  (R^\ast)^8}{(1-\gg)^{16}   \delta^{10}\epsilon^8} \Big[ 1 +H^8 \frac{1}{1-\gg}  \Big] \,.
    \end{align*}
    Finally, for $H = \big\lceil \log_\gg\big(\frac{(1-\gg)^2\epsilon}{6R^\ast}\big)\big\rceil = \Big\lceil \frac{\log((1-\gg)^{-1}\epsilon^{-1} 2R^\ast)}{\log(\gg^{-1})}\Big\rceil$, there holds
    \begin{align*}
        &\sum_{h=0}^{H} N_h K_h \leq  \Big\lVert \frac{1}{\mu}\Big\rVert_\infty^8  \frac{c_2|\cA|^{18} (R^\ast)^8}{(1-\gg)^{17}   \delta^{10}\epsilon^8} \Big\lceil \frac{\log((1-\gg)^{-1}\epsilon^{-1} 2R^\ast)}{\log(\gg^{-1})}\Big\rceil^{18}\,.
    \end{align*}
\end{proof}

\end{document}